\documentclass{article}

\PassOptionsToPackage{numbers, sort, compress}{natbib}
%


\usepackage[final]{nips_2017}

\usepackage[utf8]{inputenc} 
\usepackage[T1]{fontenc}    
\usepackage{hyperref}       
\usepackage{url}            
\usepackage{booktabs}       
\usepackage{amsfonts}       
\usepackage{nicefrac}       
\usepackage{microtype}      

\usepackage{calc}
\usepackage{amsmath, amssymb, amsthm}
\usepackage{parskip}
\usepackage{color,hyperref}
\usepackage{epsfig}
\usepackage{subfigure}
\usepackage{verbatim}
\usepackage{rotating}
\usepackage[ruled,noend]{algorithm2e}
\usepackage{algorithmic}
\usepackage{enumitem}
\usepackage{wrapfig}
\usepackage{cleveref}
\usepackage{bbm}
\usepackage{array}
\usepackage{soul}
\usepackage{kky}
\usepackage{thompsonDefns}
\usepackage{xspace}

\definecolor{darkgreen}{rgb}{0.1,0.6,0.1}



\title{\large Asynchronous Parallel Bayesian Optimisation via Thompson Sampling}

%

\newcommand{\instcmu}{$\,^\natural$}
\newcommand{\instumass}{$\,^\flat$}
\newcommand{\authspace}{$\;\;$}

\author{
Kirthevasan Kandasamy\instcmu, \authspace
Akshay Krishnamurthy\instumass, \authspace
Jeff Schneider\instcmu, \authspace
Barnab\'as P\'oczos\instcmu
\\
  \instcmu $\,$Carnegie Mellon University,  \authspace \authspace
  \instumass $\,$University of Massachusetts, Amherst \\
 \incmtt{\{kandasamy, schneide, bapoczos\}@cs.cmu.edu, $\,$akshay@cs.umass.edu
   }
}

\begin{document}
\pdfoutput=1

\maketitle

\newcommand{\imarrwtwo}{2.60in}
\newcommand{\imhsptwo}{0.20in}
\newcommand{\imarrwthree}{1.87in}
\newcommand{\imhspthree}{-0.03in}
\newcommand{\imleftspace}{-0.30in}
\newcommand{\imrightspace}{-0.20in}
\newcommand{\imtextspace}{-0.15in}
\newcommand{\imsinglecol}{2.495in}
\newcommand{\imcaptionspace}{-0.1in}

\newcommand{\insertFigSynAsynSchemes}{
\begin{figure}
\centering
\hspace{\imleftspace}
  \includegraphics[width=\imarrwtwo]{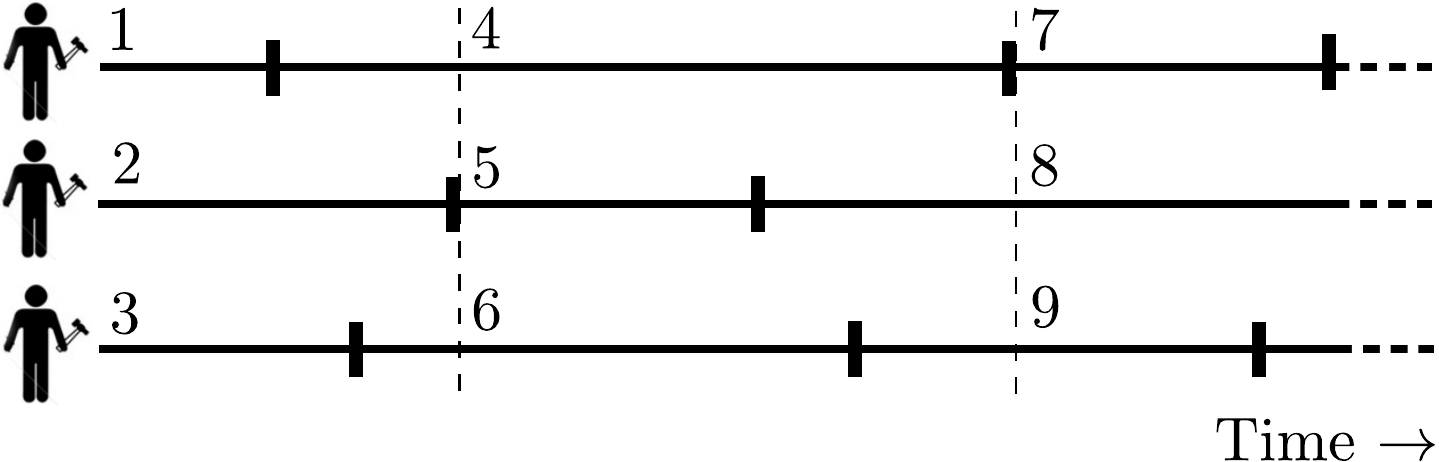} \hspace{\imhsptwo}
  \includegraphics[width=\imarrwtwo]{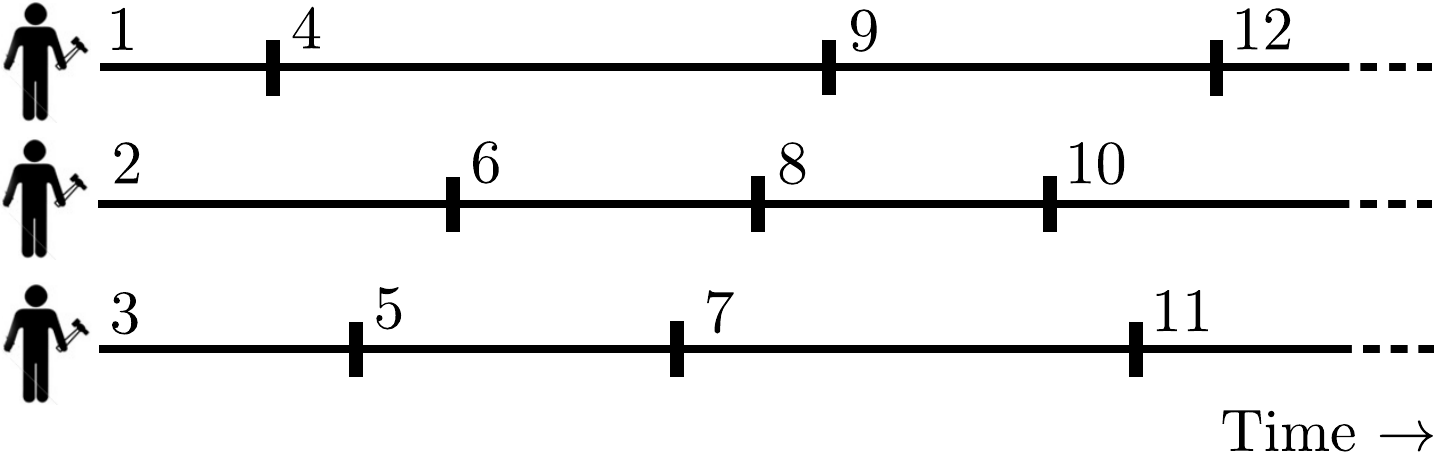}
\hspace{\imrightspace}
\caption{\small
\label{fig:parallelschemes}
An illustration of the synchronous (left) and asynchronous (right) settings using $M=3$
workers.
The short vertical lines indicate when a worker finished its last evaluation.
The horizontal location of a number indicates when the worker started its next evaluation
while the number itself denotes the order in which the evaluation was dispatched by
the algorithm.
}
\end{figure}
}

\newcommand{\insertFigToyOne}{
\begin{figure}
\centering
\hspace{\imleftspace}
  \includegraphics[width=\imarrwthree]{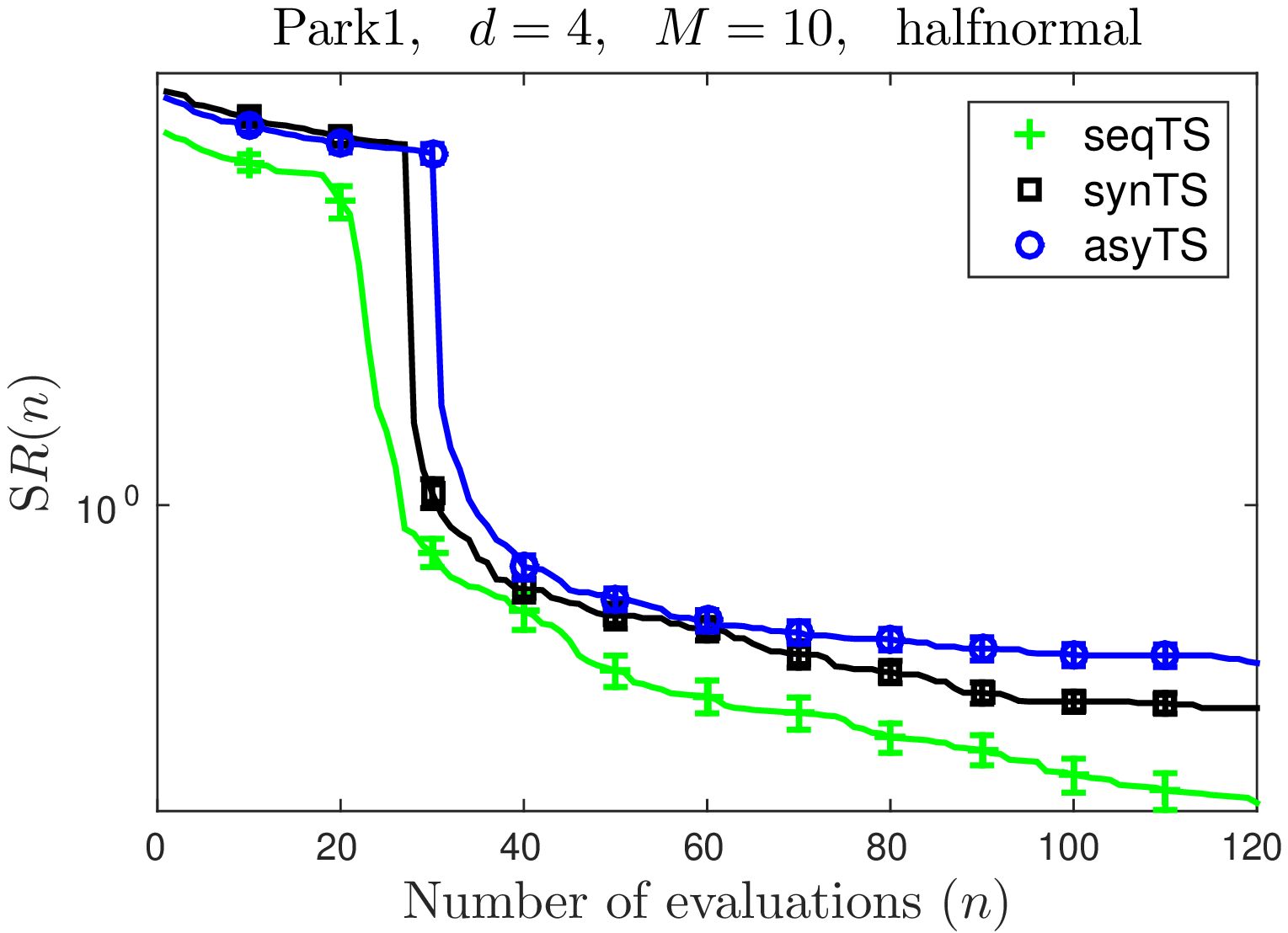} \hspace{\imhspthree}
  \includegraphics[width=\imarrwthree]{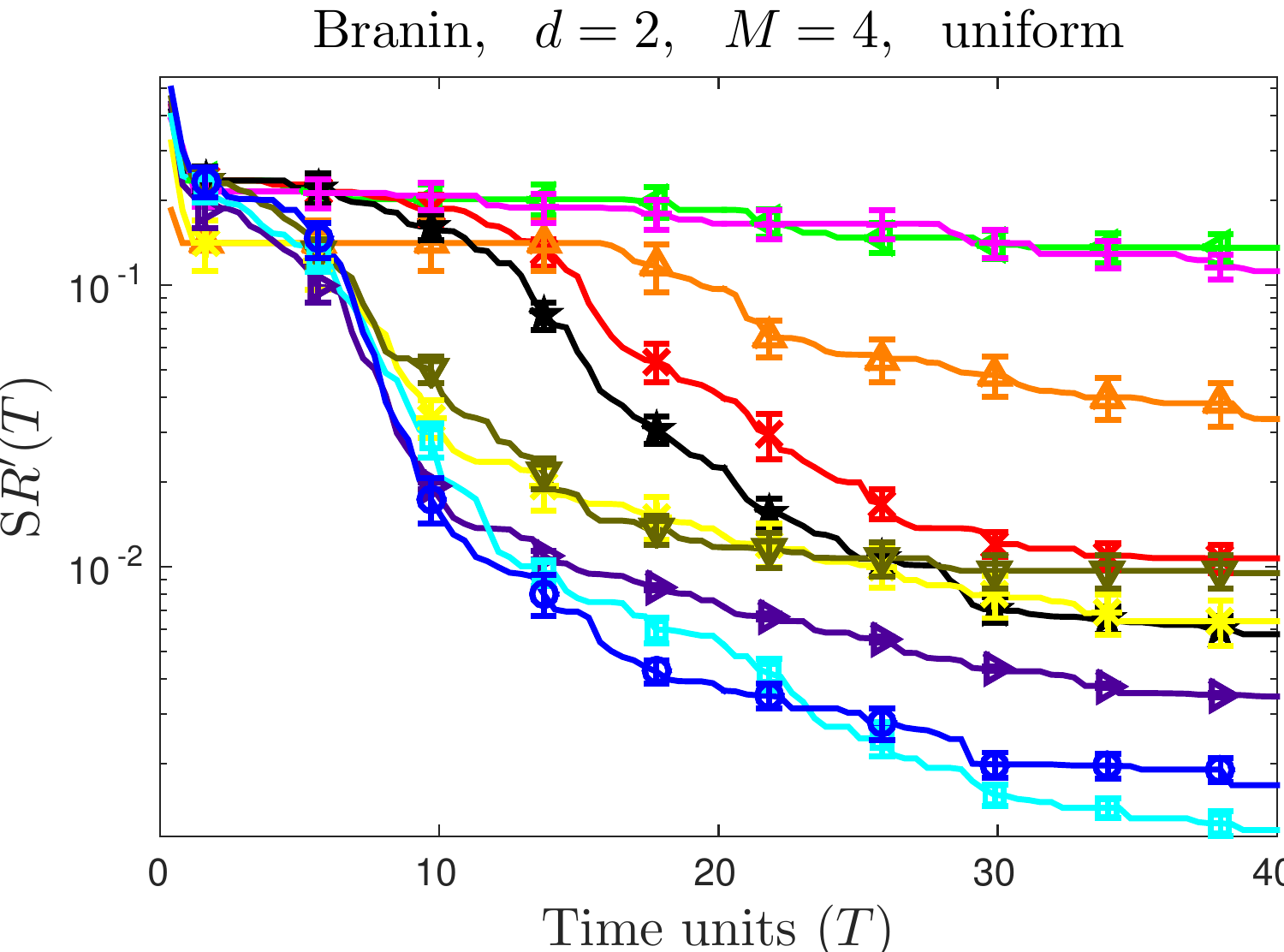} \hspace{\imhspthree}
  \includegraphics[width=1.82in]{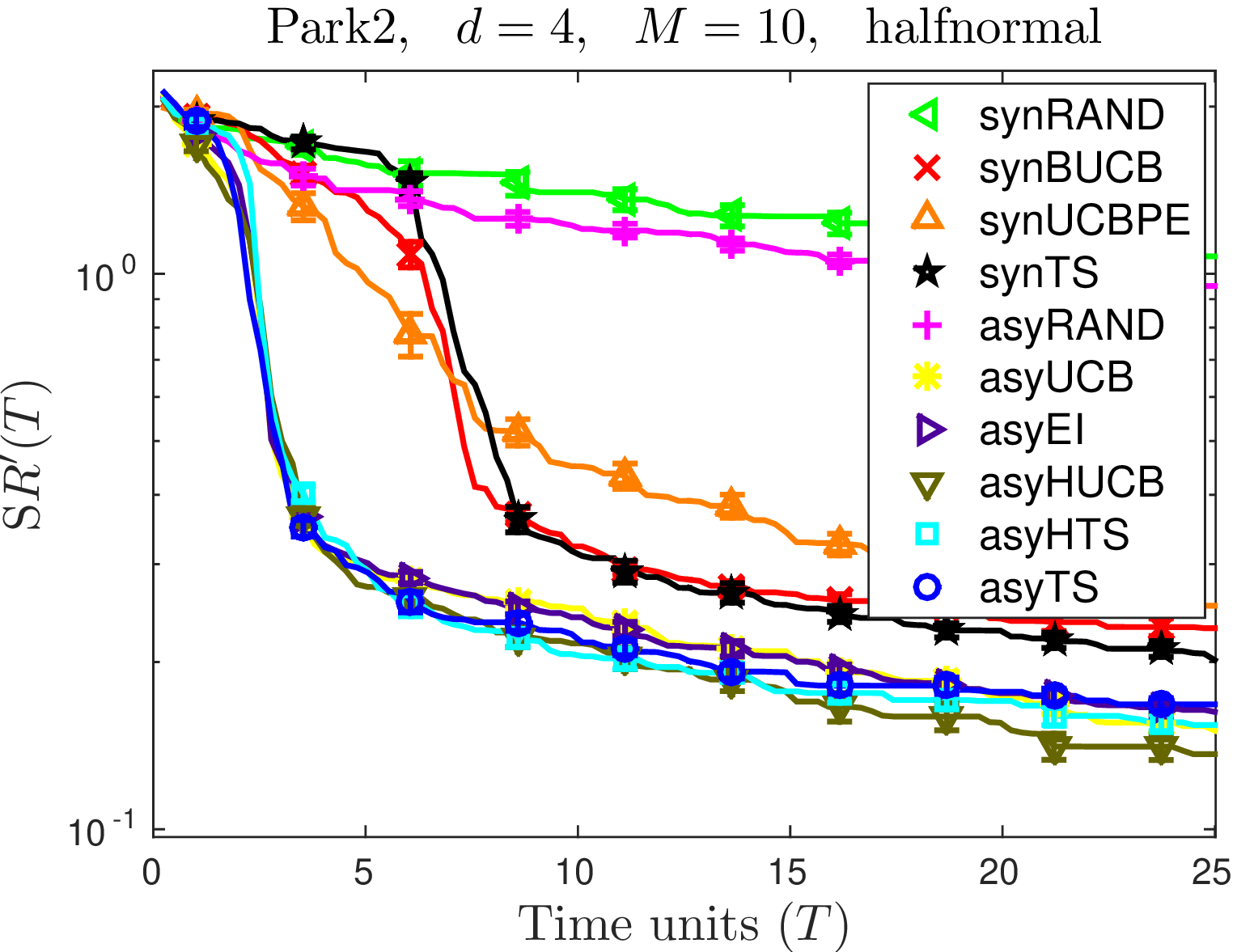} 
\hspace{\imrightspace}
\\[0.05in]%
\hspace{\imleftspace}
  \includegraphics[width=\imarrwthree]{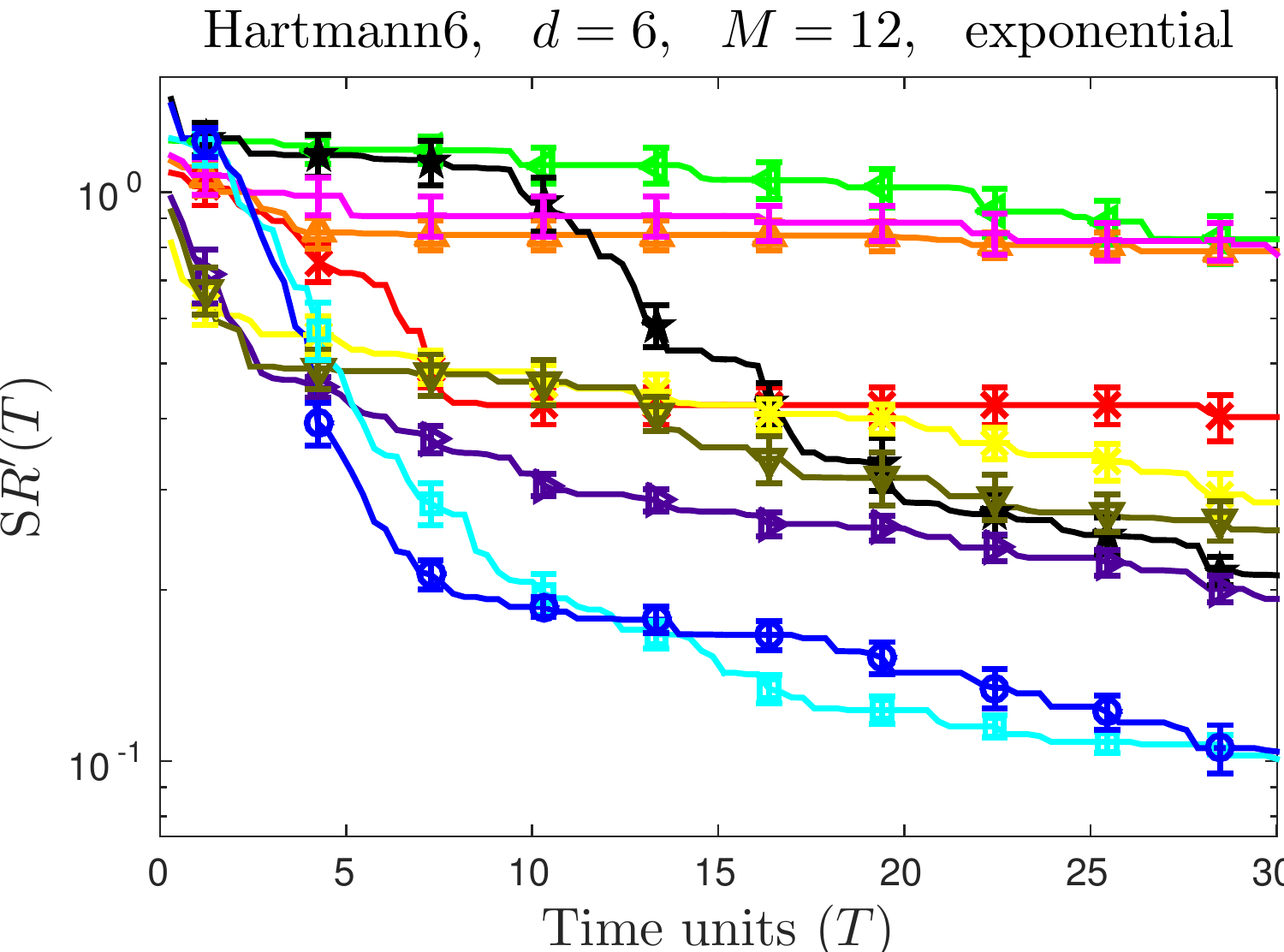} \hspace{\imhspthree}
  \includegraphics[width=\imarrwthree]{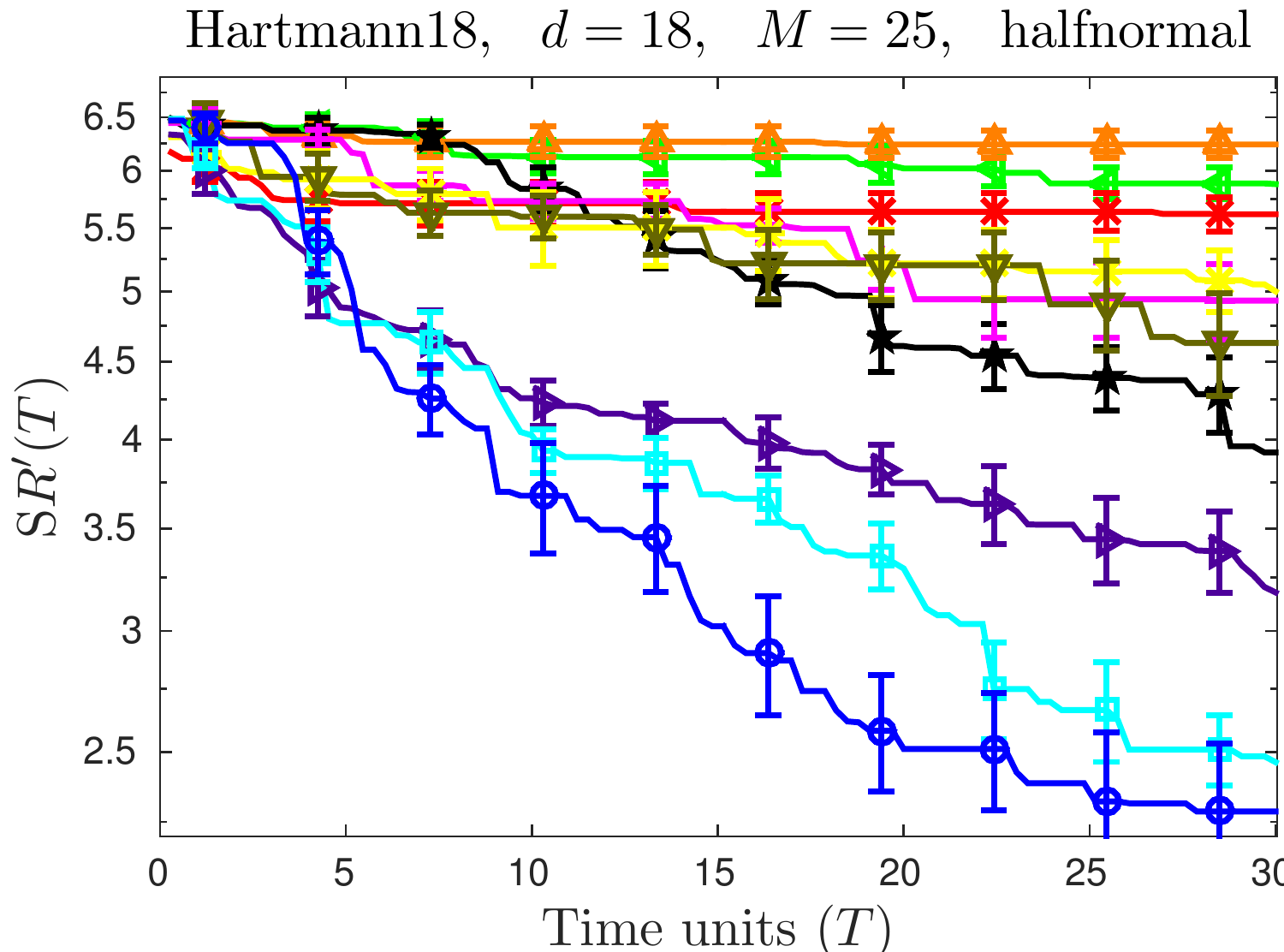} \hspace{\imhspthree}
  \includegraphics[width=\imarrwthree]{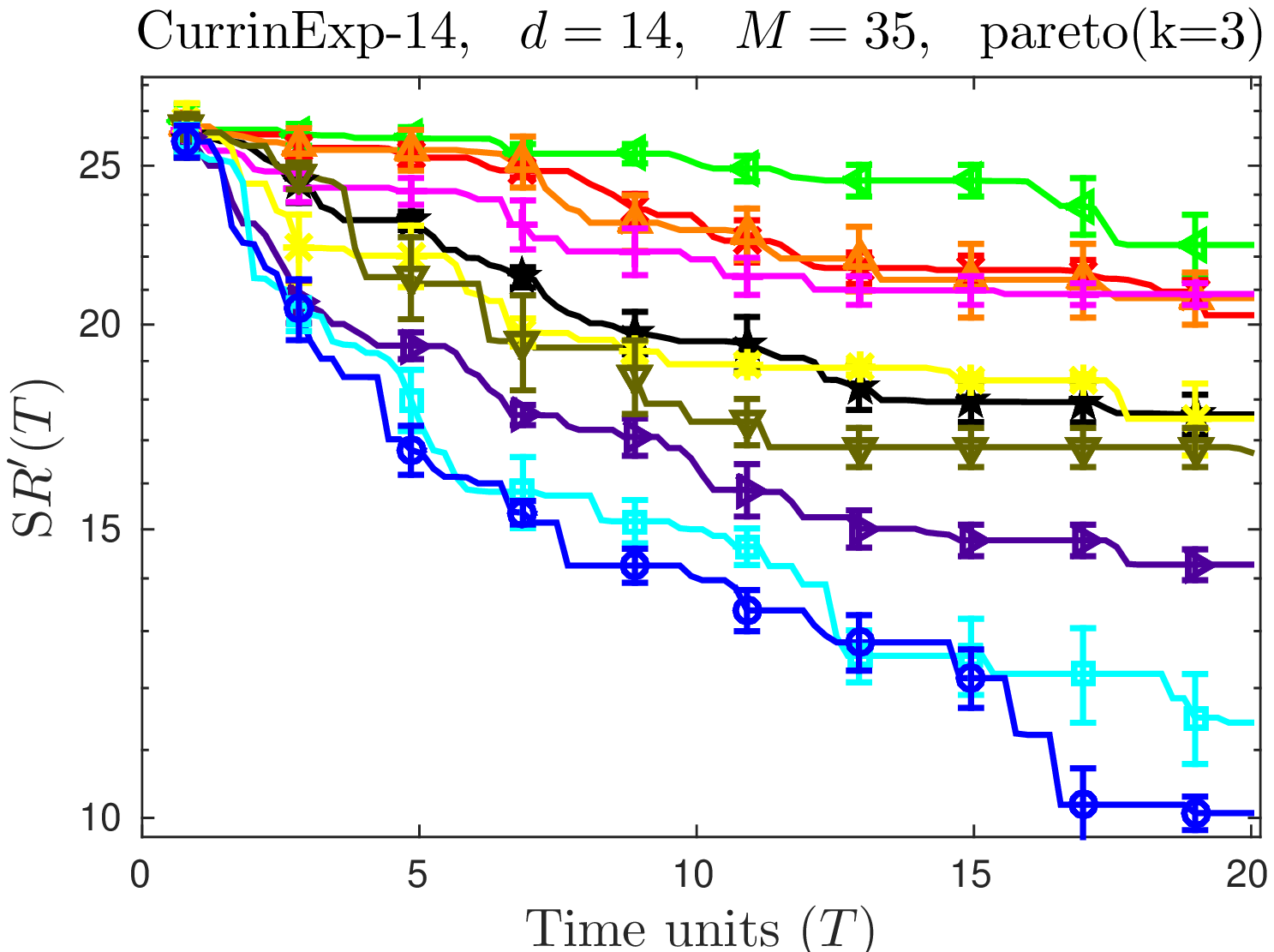} 
\caption{\small
\label{fig:toymain}
Results on the synthetic experiments.
The title states the function used, its dimensionality $d$, the number of
workers $M$ and the distribution used for the time.
All distributions were constructed so that the expected time for one evaluation
was one time unit.
All figures were averaged over at least $15$ experiments.
\vspace{-0.2in}
}
\end{figure}
}

\newcommand{\imapparrwtwo}{2.80in}
\newcommand{\imapphsptwo}{0.10in}

\newcommand{\insertFigToyAppOne}{
\begin{figure}
\centering
\hspace{\imleftspace}
  \includegraphics[width=\imapparrwtwo]{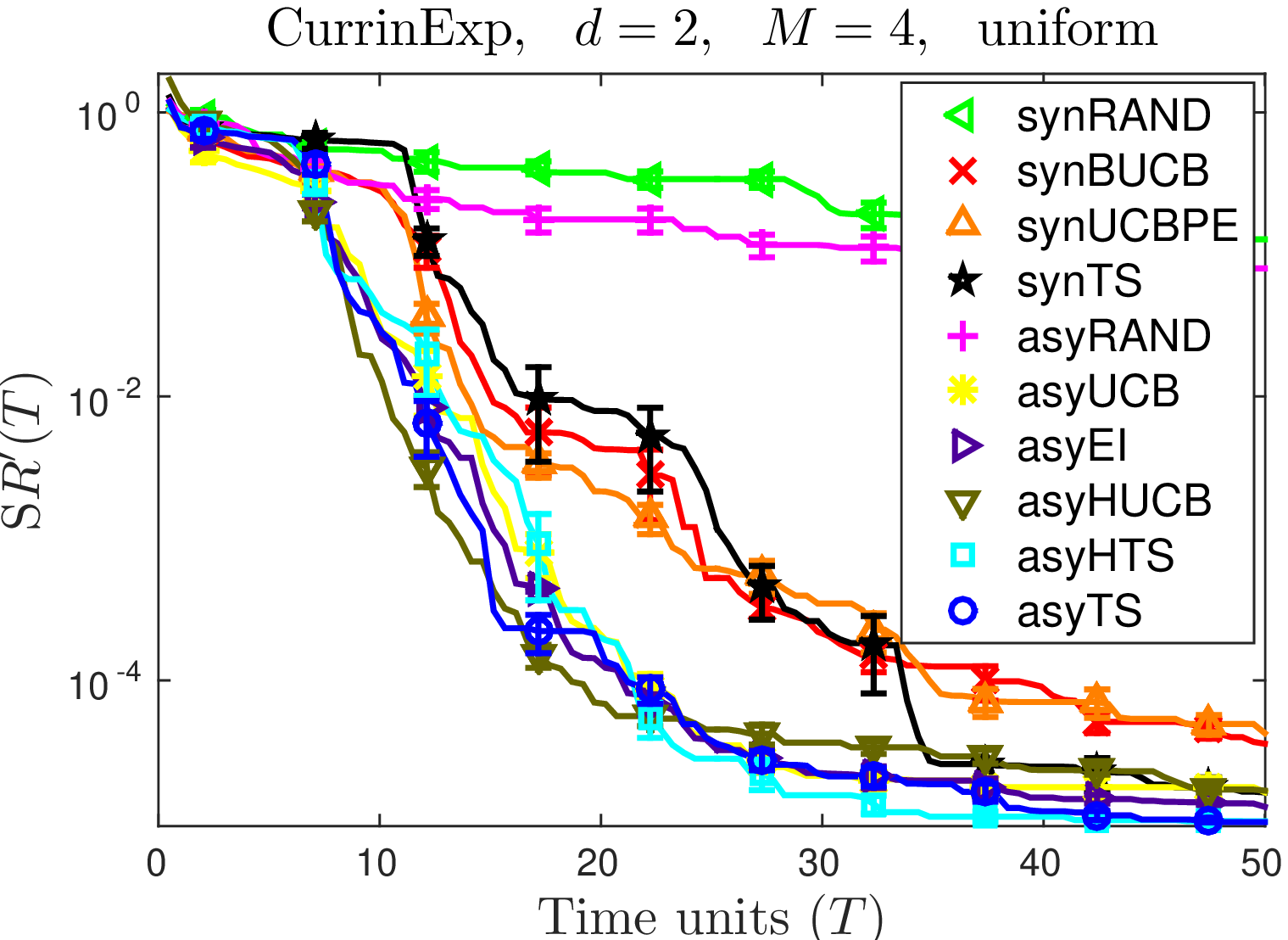} \hspace{\imapphsptwo}
  \includegraphics[width=\imapparrwtwo]{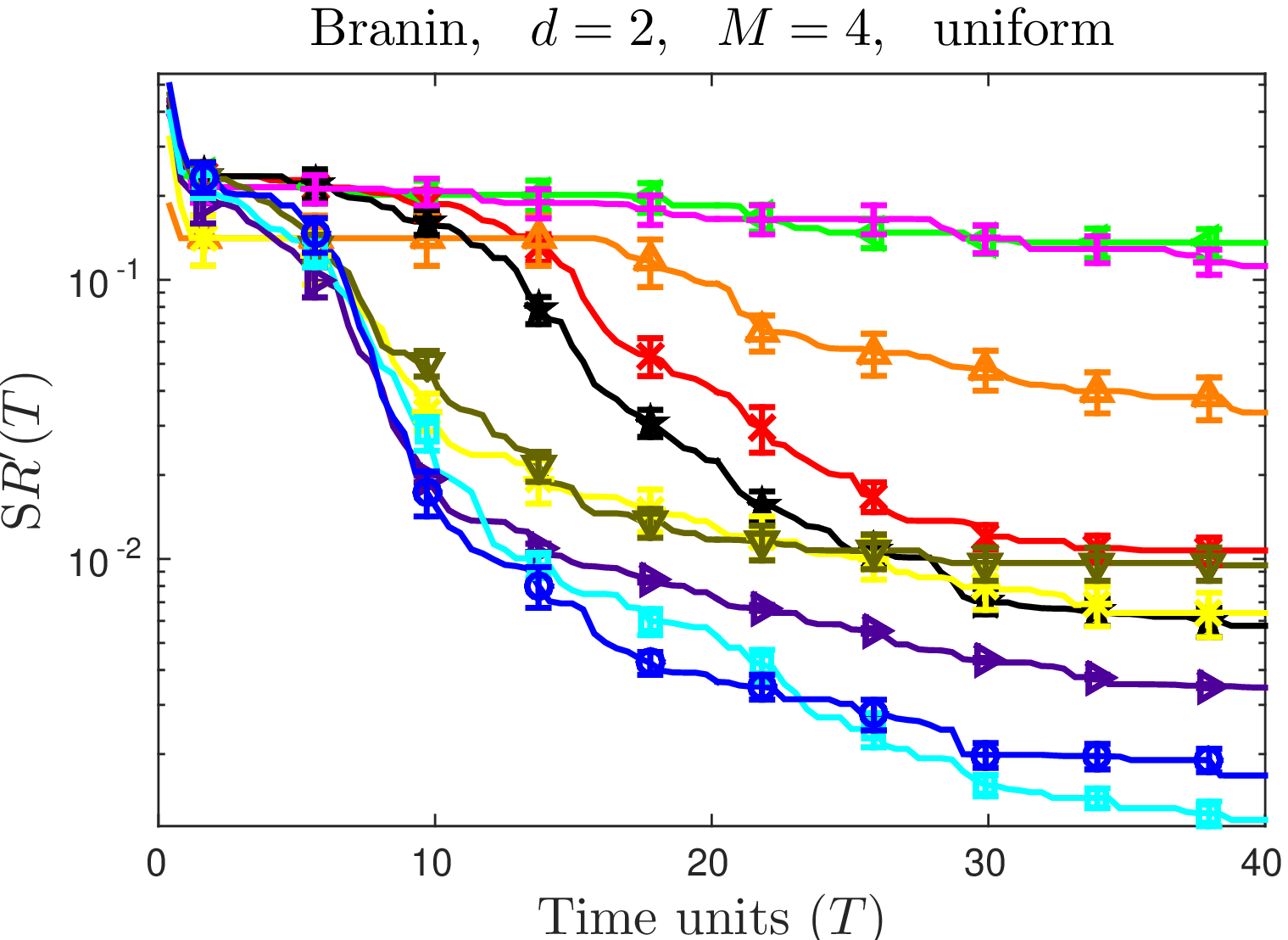}
\hspace{\imrightspace}
\\
\hspace{\imleftspace}
  \includegraphics[width=\imapparrwtwo]{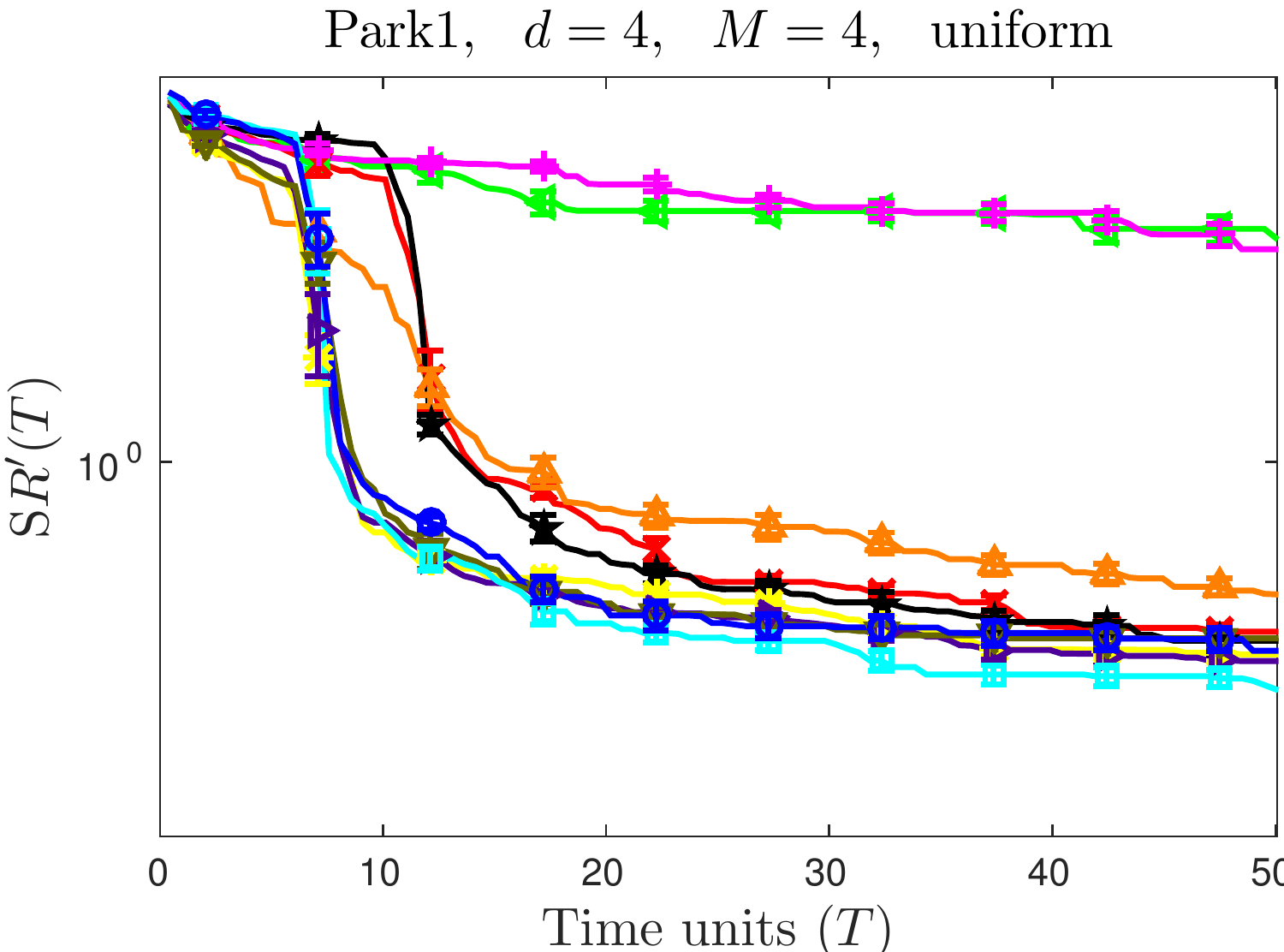} \hspace{\imapphsptwo}
  \includegraphics[width=\imapparrwtwo]{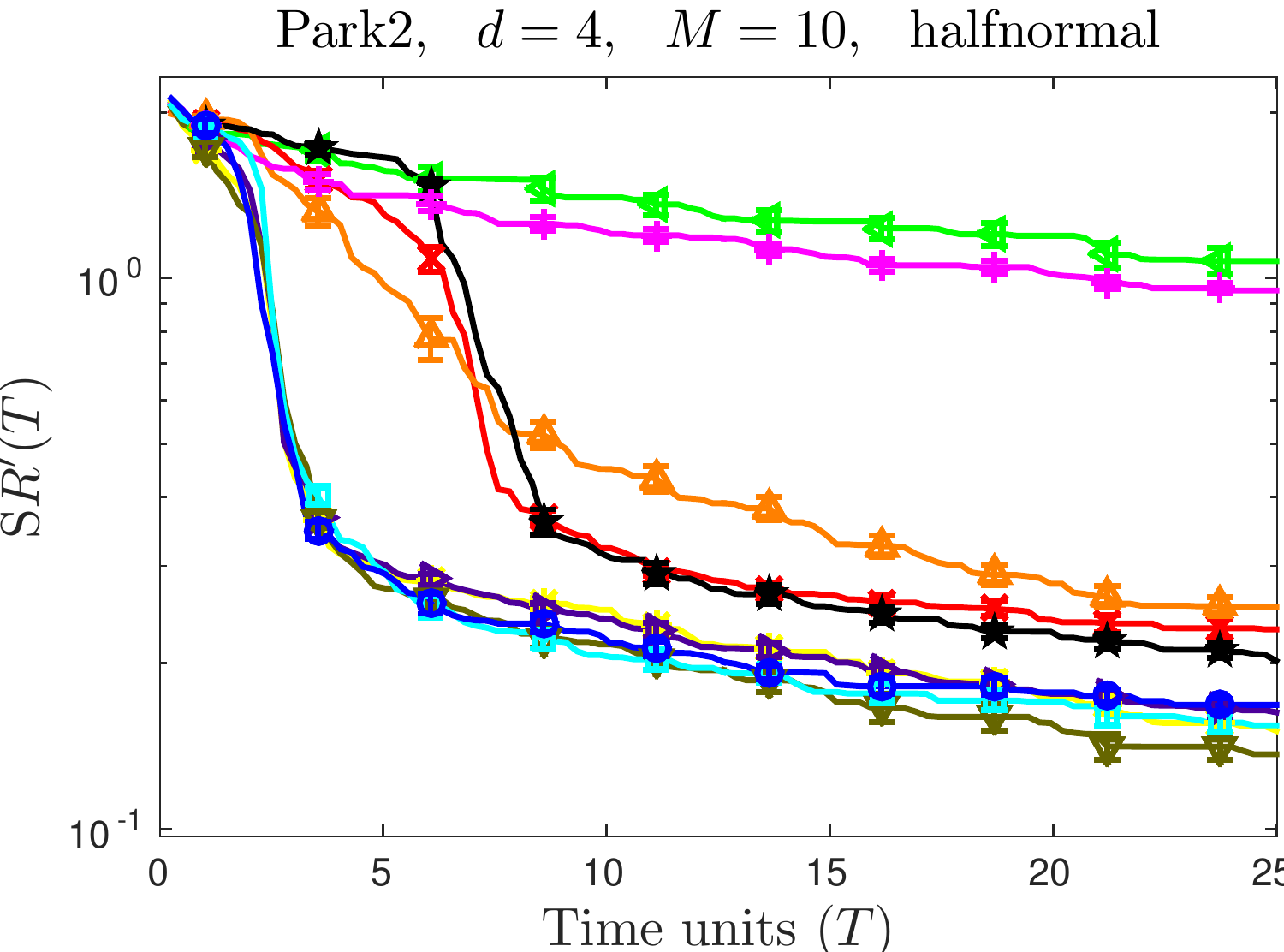}
\hspace{\imrightspace}
\\
\hspace{\imleftspace}
  \includegraphics[width=\imapparrwtwo]{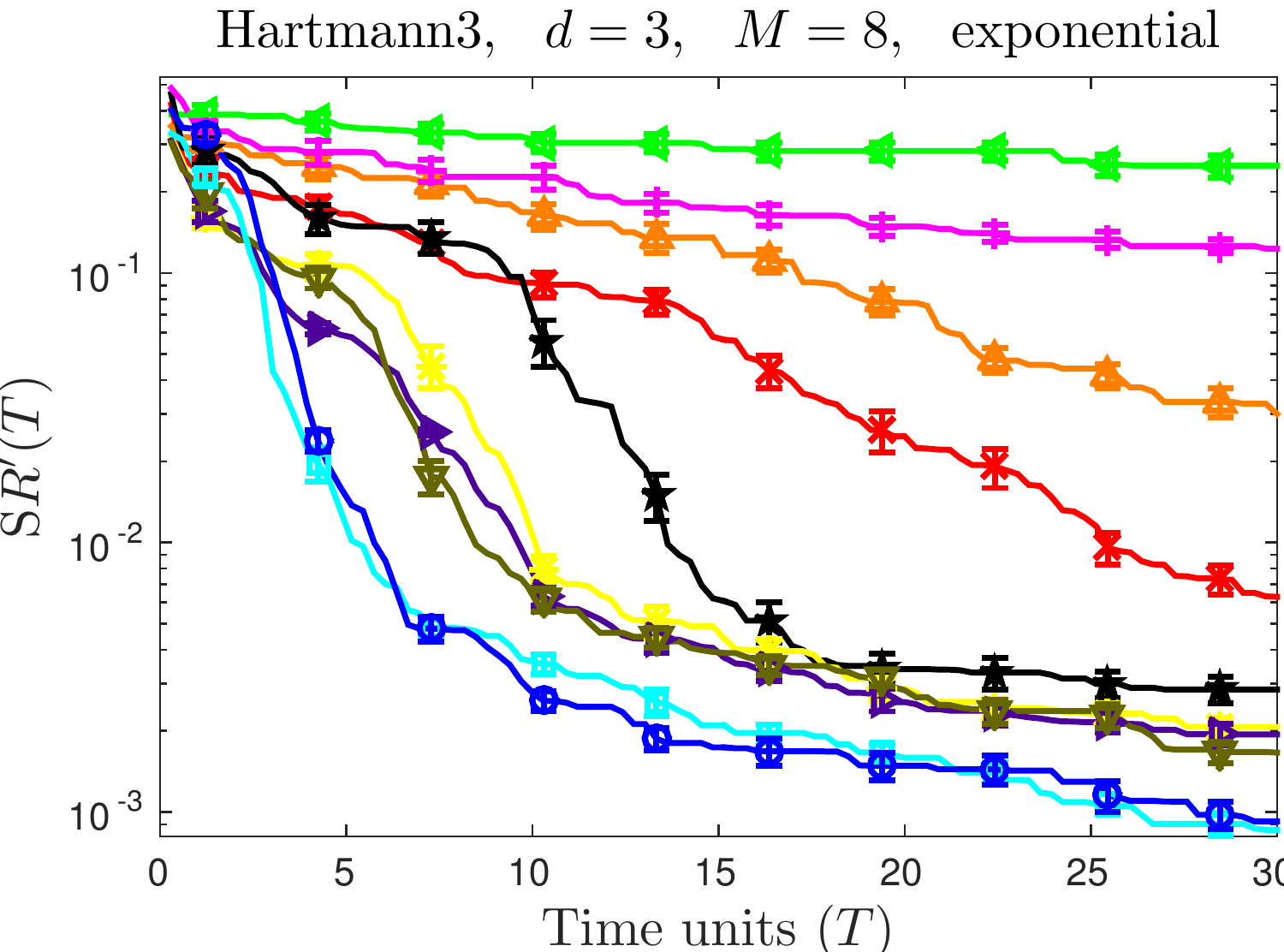} \hspace{\imapphsptwo}
  \includegraphics[width=\imapparrwtwo]{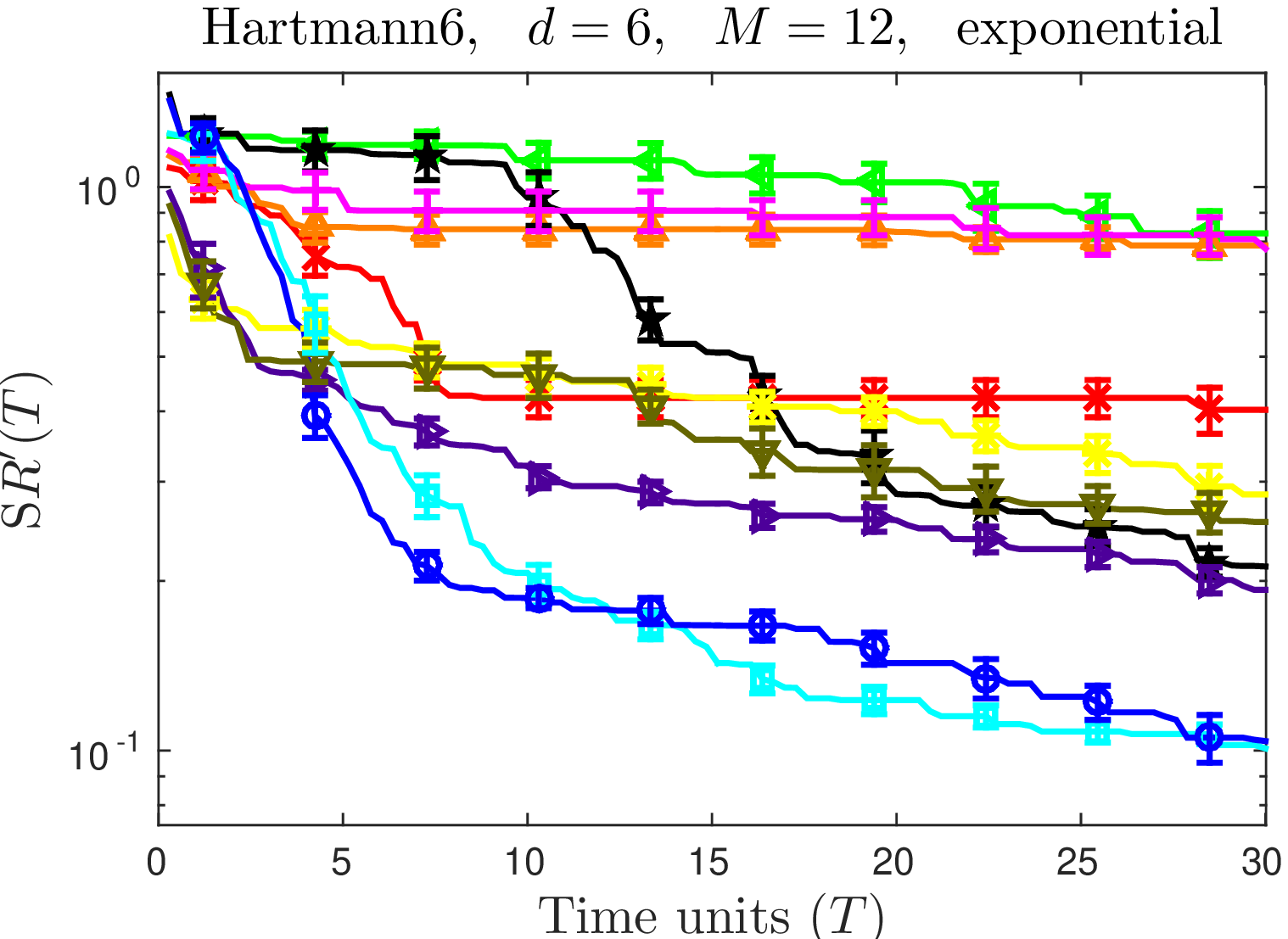}
\hspace{\imrightspace}
\\
\caption{\small
\label{fig:toyappone}
Results on the synthetic experiments.
The title states the function used, its dimensionality $d$, the number of
workers $M$ and the distribution used for the time.
All distributions were constructed so that the expected time for one evaluation
was one time unit (for e.g., in the half normal $\HNcal(\zeta^2)$ in
Table~\ref{tb:rtv}, we used $\zeta = \sqrt{\pi/2}\;$).
All figures were averaged over at least $15$ experiments.
}
\end{figure}
}

\newcommand{\insertCifarFigResults}{
\begin{figure}
\centering
  \hspace{-0.3in}
  \includegraphics[width=2.4in]{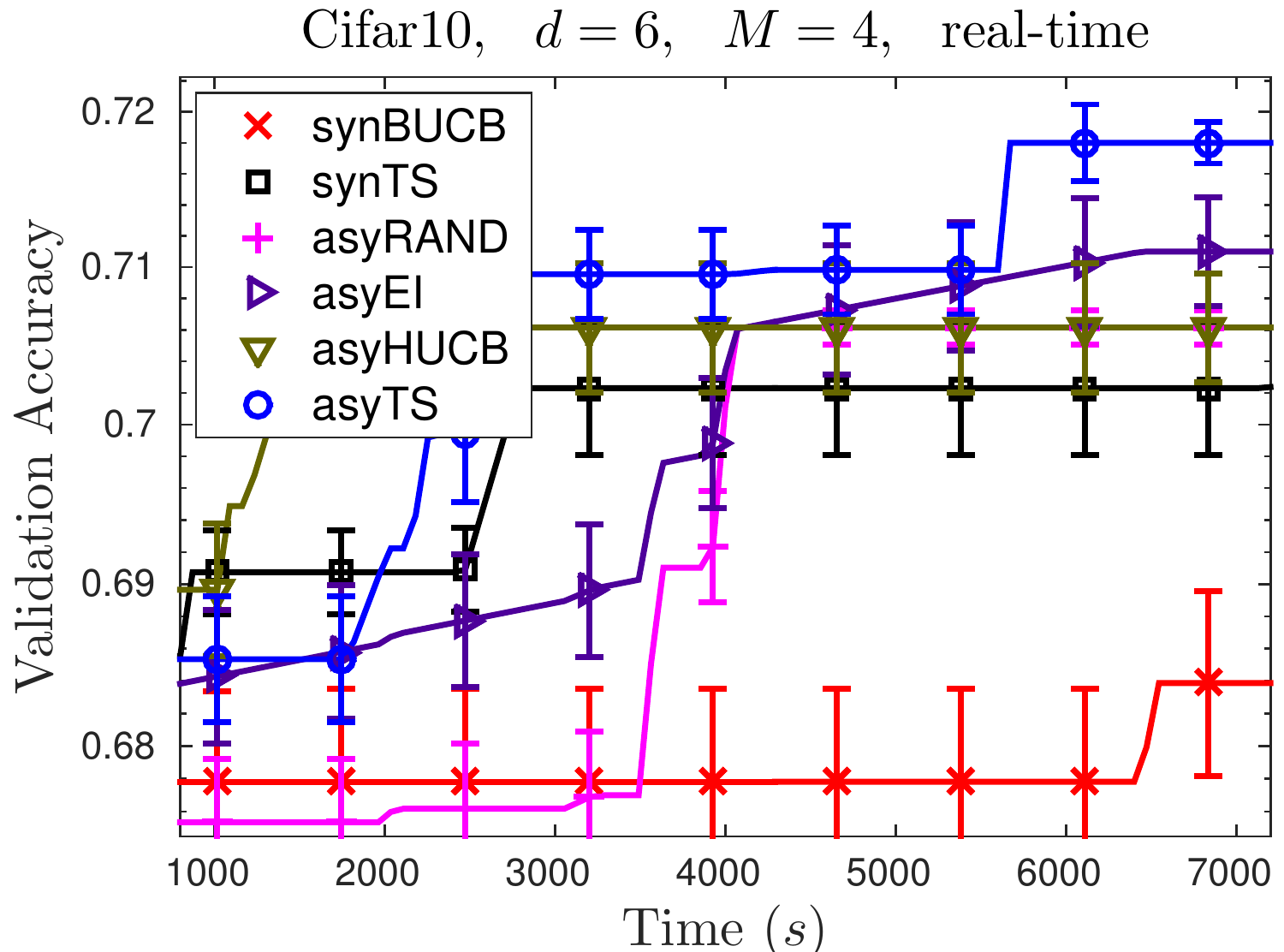}
  \hspace{0.07in}
  \begin{minipage}{3.2in}
  \centering
  \vspace{-1.7in}
\begin{tabular}{c|c|c}
\toprule
 \synbucb & \synts & \asyrand \\
\midrule
$74.37\pm 0.002$ & $77.17\pm 1.01$ & $76.07\pm 1.78$ \\
  \bottomrule \\[-0.10in]
\toprule
 \asyei & \asyhucb & \asyts \\
\midrule
$\bf 80.51\pm 0.21$ & $77.86\pm 1.12$ & $\bf 80.47\pm 0.11$ \\
\bottomrule
\end{tabular}
\caption{\small
\label{fig:cifar}
Results on the Cifar-10 experiment.
Left: The best validation set accuracy vs time for each method.
Top: Test set accuracy after training the best model chosen by each method
for $80$ epochs.
The results presented are averaged over $9$ experiments.
}
  \end{minipage}
  \vspace{-0.1in}
\end{figure}
}

\newcommand{\insertFigToyAppTwo}{
\begin{figure}
\centering
\hspace{\imleftspace}
  \includegraphics[width=\imapparrwtwo]{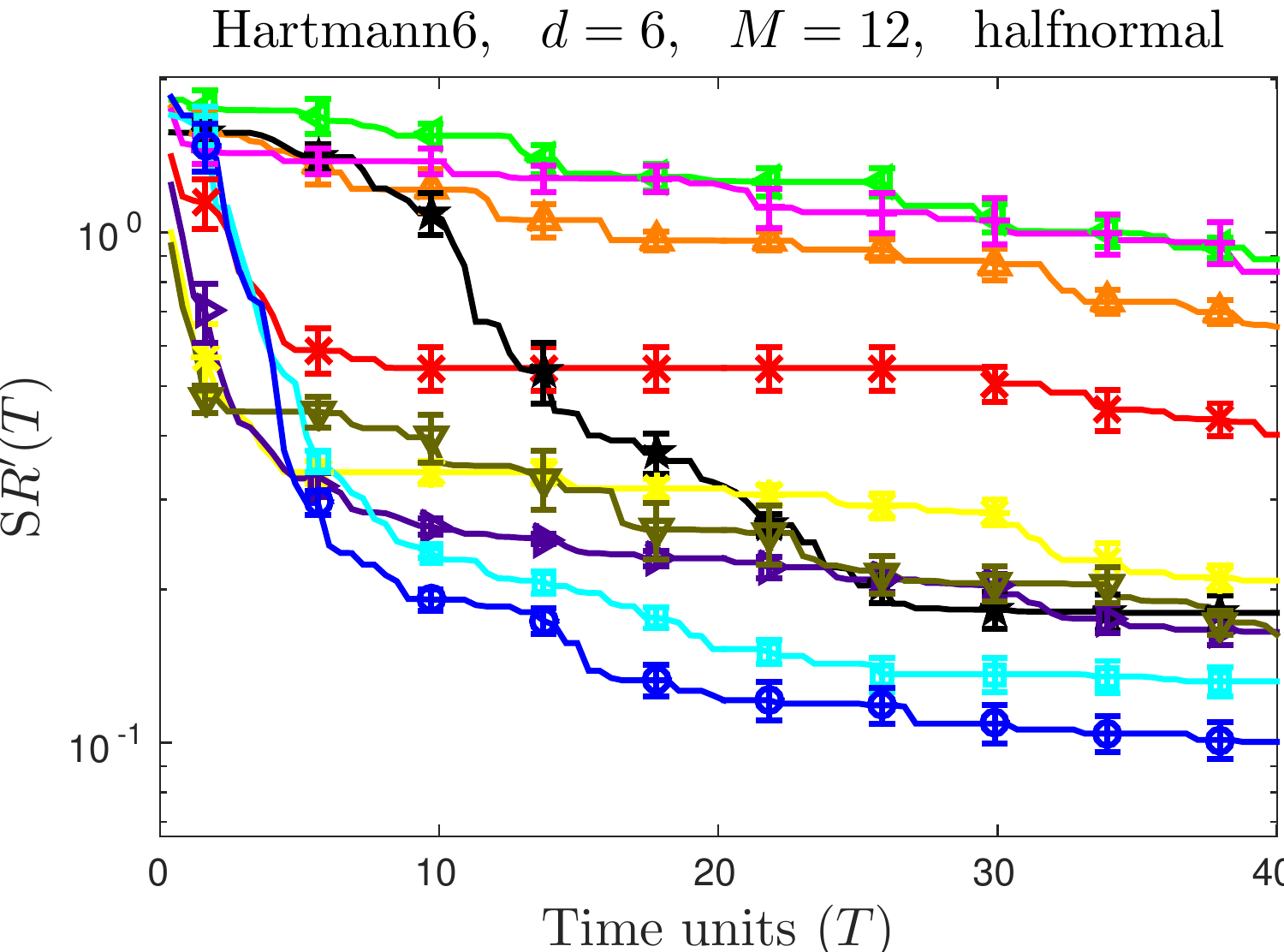} \hspace{\imapphsptwo}
  \includegraphics[width=\imapparrwtwo]{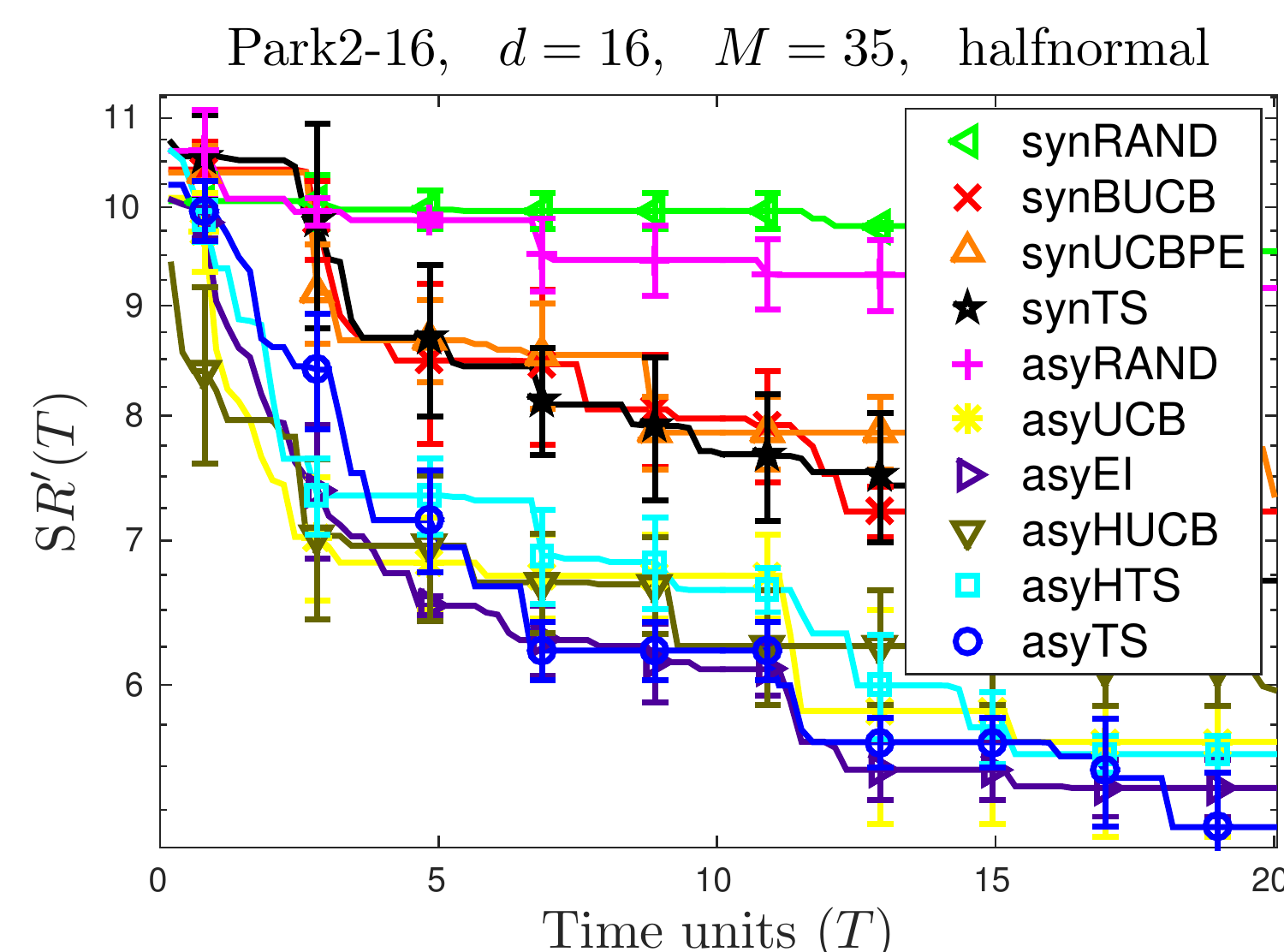}
\hspace{\imrightspace}
\\
\hspace{\imleftspace}
  \includegraphics[width=\imapparrwtwo]{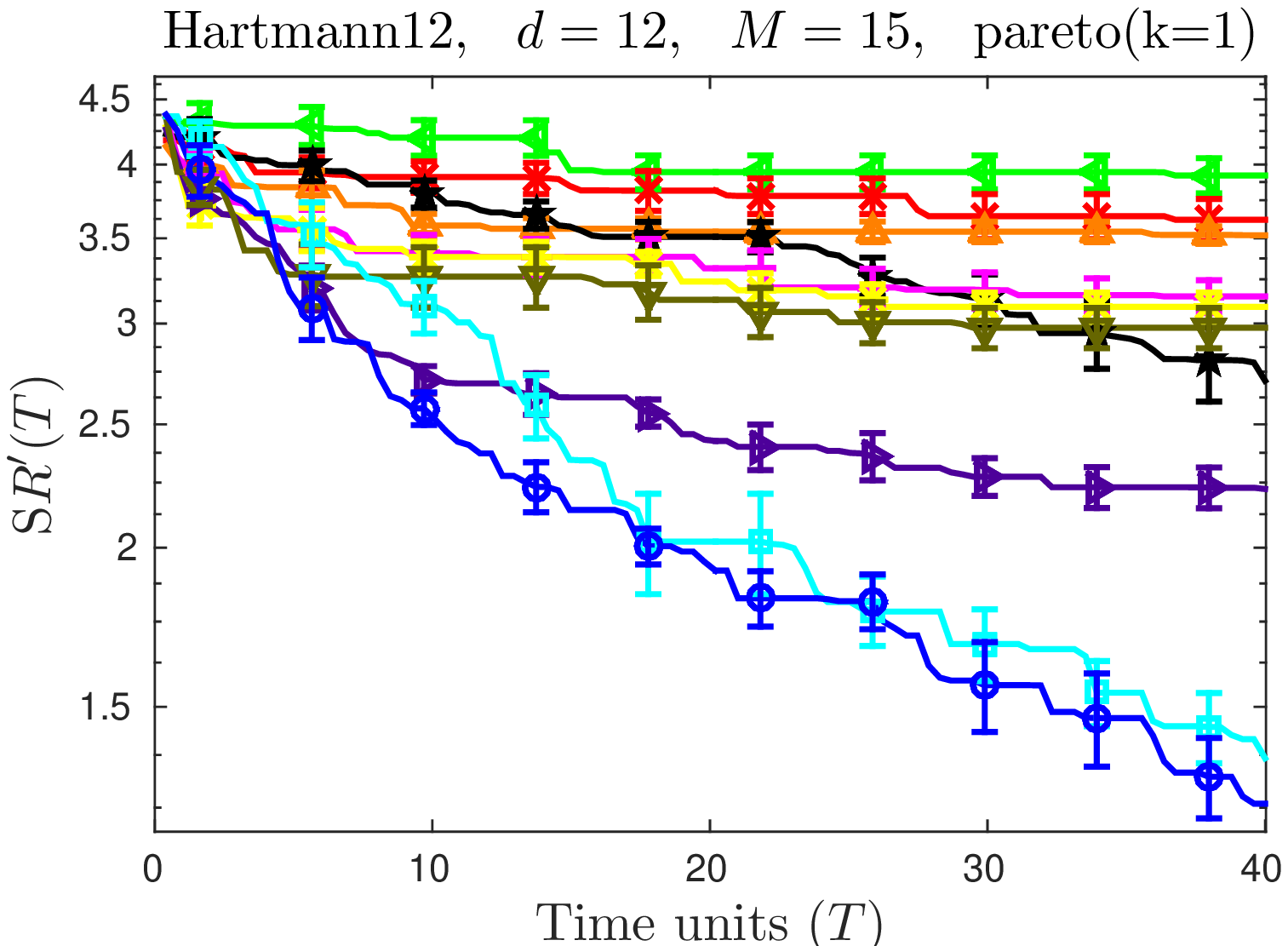} \hspace{\imapphsptwo}
  \includegraphics[width=\imapparrwtwo]{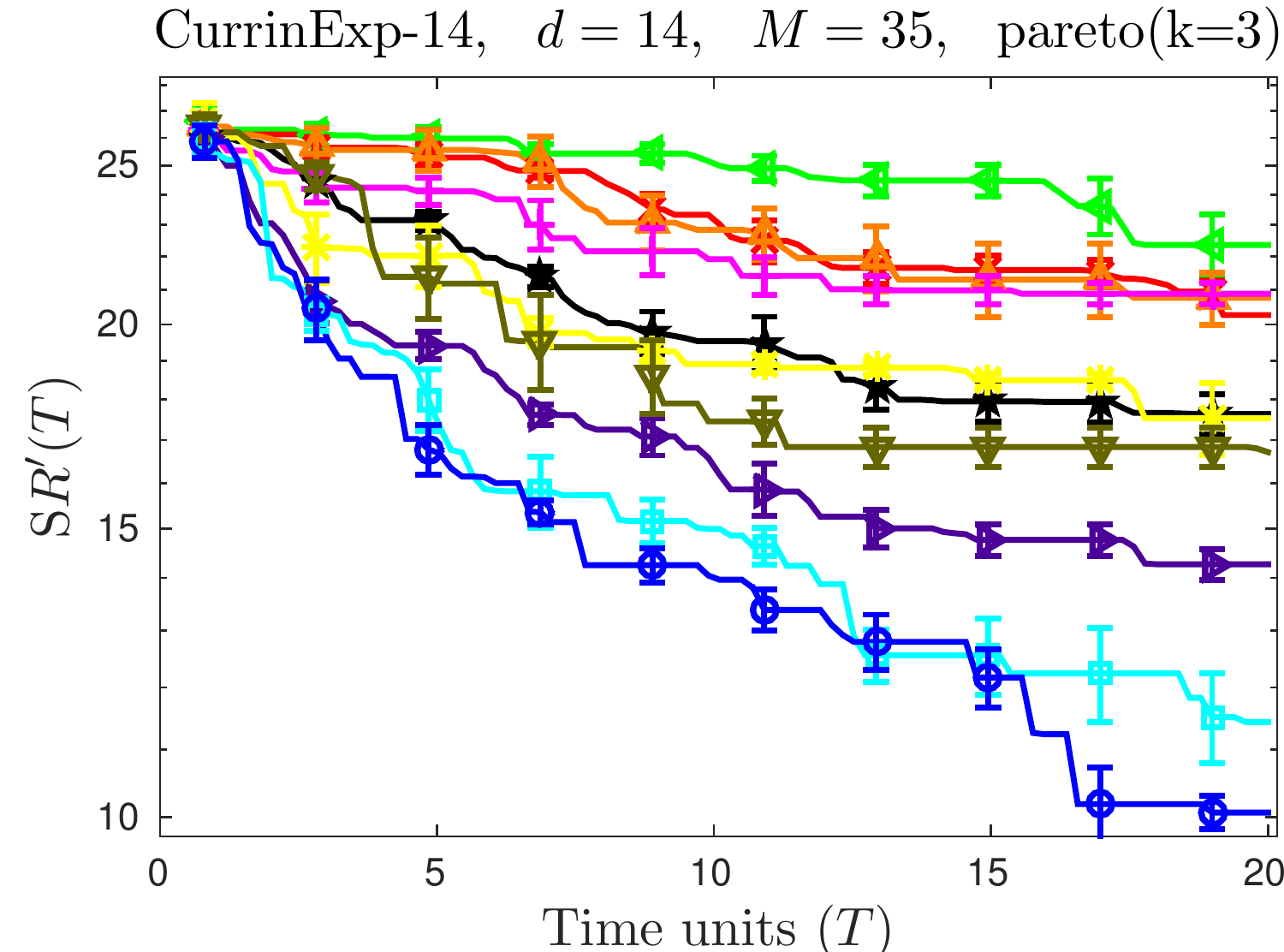}
\hspace{\imrightspace}
\\
\hspace{\imleftspace}
  \includegraphics[width=\imapparrwtwo]{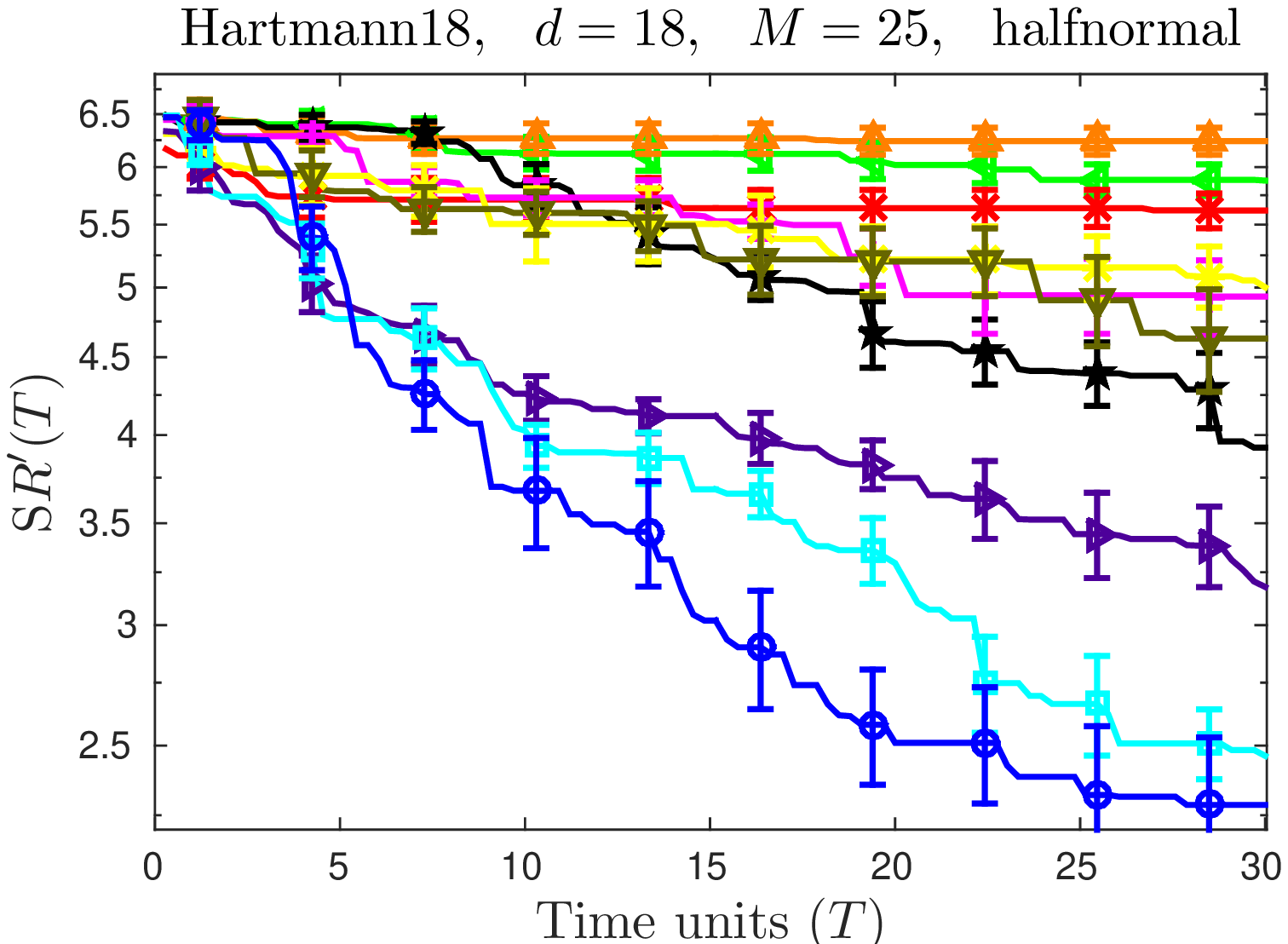} \hspace{\imapphsptwo}
  \includegraphics[width=\imapparrwtwo]{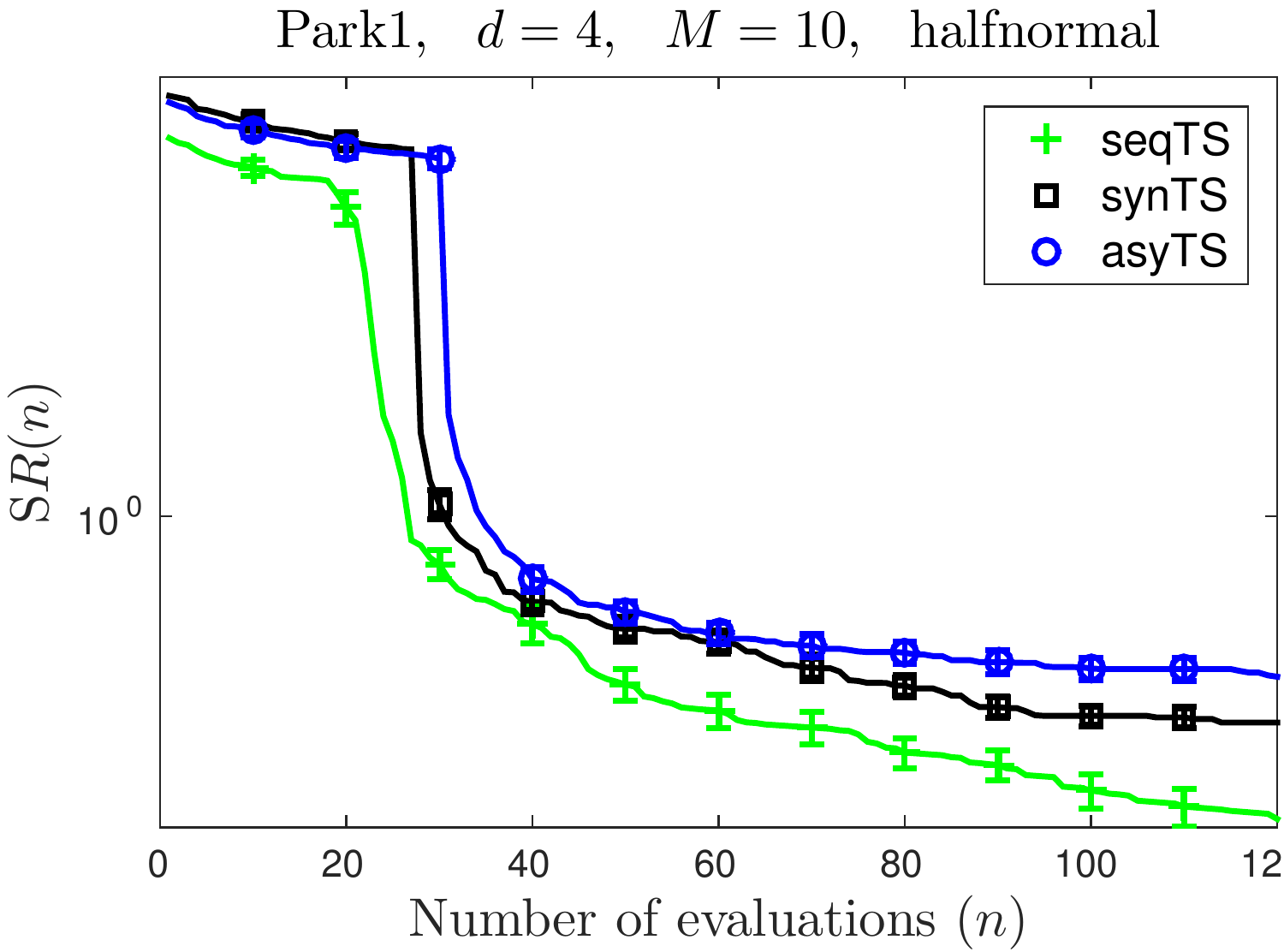}
\hspace{\imrightspace}
\\
\caption{\small
\label{fig:toyapptwo}
The first five panels are results on synthetic experiments.
See caption under Figure~\ref{fig:toyappone} for more details.
The last panel compares \seqts, \synts, and \asytss against the number of
evaluations $n$.
}
\end{figure}
}


\newcommand{\insertAlgoasytstwo}{
\begin{algorithm}[H]
\vspace{0.02in}

\begin{algorithmic}[1]
\REQUIRE Prior GP $\;\;\GP(\zero,\kernel)$.
\STATE $\filtrjj{1} \leftarrow \emptyset$,
  $\quad\GPtt{1} \leftarrow \GP(\zero, \kernel)$.
\FOR{$j=1, 2, \dots$}
\STATE Wait for a worker to finish. \label{line:wait}
\STATE $\filtrj\leftarrow \filtrjj{j-1}\cup\{(x',y')\}$ where $(x',y')$ are the worker's 
  previous query/observation. \label{line:update_data}
\STATE Compute posterior $\GPj=\GP(\mufj,\kernelfj)$. \label{line:update_posterior}
\STATE Sample $g\sim\GPj$,  $\;\;\xj\leftarrow\argmax g(x)$. \label{line:sample}
\STATE Re-deploy worker to evaluate $\func$ at $\xj$. \label{line:redeploy}
\ENDFOR
\end{algorithmic}
\caption{$\;$\asyts \label{alg:asyts}}
\end{algorithm}
}

\newcommand{\insertAlgoasyts}{
\begin{algorithm}[H]
\vspace{0.02in}
\textbf{Input: }
Prior GP $\;\;\GP(\zero,\kernel)$.
\begin{itemize}[leftmargin=0.17in]
\item $\filtrjj{1} \leftarrow \emptyset$,
  $\quad\GPtt{1} \leftarrow \GP(\zero, \kernel)$.
\item \textbf{for} $j=1, 2, \dots$
\begin{enumerate}[leftmargin=0.17in,label={[\arabic*]}]
  \item Wait for free worker $m$. Obtain feedback $\yjp$ for 
        $m$'s last evaluation at $\xjp$.
  \item $\filtrj\leftarrow \filtrjj{j-1}\cup\{(\xjp,\yjp)\}$.
    Compute posterior $\GPj=\GP(\mufj,\kernelfj)$.
    \hfill See~\eqref{eqn:gpPost}.
  \item Sample $g\sim\GPj$.
  $\;\;\xj\leftarrow\argmax_{x\in\Xcal}g(x)$.
  \item Re-deploy $m$ with evaluation of $\func$ at $\xj$.
\end{enumerate}
\end{itemize}
\caption{$\;$\asyts \label{alg:asyts}}
\end{algorithm}
}

\newcommand{\insertAlgotstwo}{
\begin{algorithm}[H]
\vspace{0.02in}
\begin{algorithmic}[1]
\REQUIRE Prior GP $\;\;\GP(\zero,\kernel)$.
\STATE $\filtrjj{1} \leftarrow \emptyset$, $\quad\GPtt{1} \leftarrow \GP(\zero, \kernel)$.
\FOR{ $j=1, 2, \dots$}
\STATE Sample $g\sim\GPj$.
  \STATE $\xj\leftarrow\argmax_{x\in\Xcal}g(x)$.
  \STATE $\yj\leftarrow$ Query $\func$ at $\xj$.
  \STATE $\filtrjj{j+1}\leftarrow \filtrj\cup\{(\xj,\yj)\}$.
  \STATE Compute posterior $\GPtt{j+1} = \GP(\mutt{\filtrjj{j+1}},$ $
      \kernelfj)$ conditioned on $\filtrjj{j+1}$.
    See~\eqref{eqn:gpPost}.
\ENDFOR
\end{algorithmic}
\caption{$\;$\seqts \label{alg:seqts}}
\end{algorithm}
}

\newcommand{\insertAlgots}{
\begin{algorithm}[H]
\vspace{0.02in}
\textbf{Input: }
Prior GP $\;\;\GP(\zero,\kernel)$.
\begin{itemize}[leftmargin=0.17in]
\item $\filtrjj{1} \leftarrow \emptyset$,
  $\quad\GPtt{1} \leftarrow \GP(\zero, \kernel)$.
\item \textbf{for} $j=1, 2, \dots$
\begin{enumerate}[leftmargin=0.20in,label={[\arabic*]}]
  \item Sample $g\sim\GPj$.
  \item $\xj\leftarrow\argmax_{x\in\Xcal}g(x)$.
  \item $\yj\leftarrow$ Query $\func$ at $\xj$.
  \item $\filtrjj{j+1}\leftarrow \filtrj\cup\{(\xj,\yj)\}$.
  \item Compute posterior $\GPtt{j+1} = \GP(\mutt{\filtrjj{j+1}},$ $
      \kernelfj)$ conditioned on $\filtrjj{j+1}$.
    See~\eqref{eqn:gpPost}.
\end{enumerate}
\end{itemize}
\caption{$\;$\seqts \label{alg:seqts}}
\end{algorithm}
}

\newcommand{\insertbothTSAlgos}{
\begin{table}
\begin{minipage}[t]{2.63in}
  \vspace{0pt}  
  \insertAlgotstwo
\end{minipage}%
  \hspace{0.05in}
\begin{minipage}[t]{2.8in}
  \vspace{0pt}
    \insertAlgoasytstwo
\end{minipage}
  \vspace{-0.2in}
\end{table}
}

\newcommand{\insertCifarResults}{
\begin{table}
\begin{center}
\small
\begin{tabular}{c|c|c|c|c|c}
\toprule
 \synbucb & \synts & \asyrand & \asyei & \asyhts & \asyts \\
\midrule
$0.330\pm 0.002$ & $0.38\pm 0.011$ & $\bf 0.197\pm 0.001$ &
  $0.225 \pm 0.001$\\
\bottomrule
\end{tabular}
\vspace{0.05in}
\end{center}
\caption{\small
\label{tb:cifar}
The accuracy on a test set of 10K images on the Cifar-10 experiment.
We chose the best model for each method after $20$ epochs and then trained it
for 80 epochs and computed the test accuracy.
}
\vspace{-0.3in}
\end{table}
}

\newcommand{\insertbsrttable}{
\begin{table}
\begin{center}
\small
\begin{tabular}{l|c|c|c|c}
\toprule
 Distribution & pdf $\;\;p(x)$ & \seqts & \synts & \asyts \\
\midrule
 $\unif(a,b)$ & $\frac{1}{b-a}$ for $x\in(a,b)$ & $\nseq = \frac{2T}{b+a}$
  & $\nsyn=M\frac{T(M+1)}{a+bM}$  &  $\nasy = M\nseq \;\; (>\nsyn)$  \\
\midrule
 $\HNcal(\zeta^2)$ & $\frac{\sqrt{2}}{\zeta\sqrt{\pi}} e^{-\frac{x^2}{2\zeta^2}}$
                for $x>0$ & $\nseq= \frac{T\sqrt{\pi}}{\zeta\sqrt{2}}$ &
                $\nsyn \asymp \frac{M\nseq}{\sqrt{\log(M)}}$  & 
                $\nasy = M\nseq$  \\
\midrule
 $\exponential(\lambda)$ & $\lambda e^{-\lambda x}$ for $x>0$ & $\nseq = \lambda T$ &
                $\nsyn \asymp \frac{M\nseq}{\log(M)}$  &  $\nasy = M\nseq$ \\
\bottomrule
\end{tabular}
\vspace{0.05in}
\end{center}
\caption{\small
\label{tb:rtv}
The second column shows the probability density functions $p(x)$ for the uniform
$\unif(a,b)$,
half-normal $\HNcal(\zeta^2)$, and exponential $\exponential(\lambda)$ distributions.
The subsequent columns show the expected number of evaluations 
$\nseq,\nsyn,\nasy$ for \seqts, \synts, and \asytss with $M$ workers.
\syntss always completes fewer evaluations than \asyts; e.g., in the exponential
case, the difference could be a $\log(M)$ factor.
}
\vspace{-0.2in}
\end{table}
}

\begin{abstract}
We design and analyse variations of the classical Thompson sampling (TS) procedure for
Bayesian optimisation (BO) in settings where function evaluations are
expensive, but can be performed in parallel.
Our theoretical analysis shows that a direct application of the sequential Thompson
sampling algorithm in either synchronous or asynchronous parallel
settings yields a surprisingly powerful result: making $n$ evaluations
distributed among $M$ workers is essentially equivalent to performing $n$ evaluations
in sequence.
Further, by modeling the time taken to complete a function evaluation, we show
that, under a time constraint, asynchronously parallel TS achieves
asymptotically lower regret than both the synchronous and sequential versions.
These results are complemented by an experimental
analysis, showing that asynchronous TS 
outperforms a suite of existing parallel BO algorithms in simulations and
in a hyper-parameter tuning application in convolutional neural networks.
In addition to these, the proposed procedure is conceptually and computationally
much simpler than existing work for parallel BO.
\end{abstract}

%
%
%

\section{Introduction}
\label{sec:intro}

Many real world problems require maximising an unknown function $\func$ from noisy
evaluations. 
Such problems arise in varied applications including hyperparameter tuning,
experiment design, online advertising, and scientific experimentation.
As evaluations are typically expensive in such applications, we would
like to optimise the function with a minimal number of evaluations.
Bayesian optimisation (BO) refers to a suite of methods for black-box optimisation
under Bayesian assumptions on $\func$
that has been successfully applied in many of the above applications%
~\citep{snoek12practicalBO,hutter2011smac,martinez07robotplanning,%
parkinson06wmap3,gonzalez14gene}.

Most black-box optimisation methods, including BO, are inherently sequential in
nature, waiting for an evaluation to complete before issuing the next.
However, in many applications, we may have the opportunity to conduct
several evaluations in parallel, inspiring a surge of interest in
parallelising BO methods~\citep{ginsbourger2011dealing,janusevskis2012ei,wang2016parallel,%
gonzalez2015batch,desautels2014parallelizing,contal2013parallel,shah2015parallel,%
kathuria2016bodpps,wang2017batched,wu2016parallel}. Moreover, in these
applications, there is significant variability in the time to complete
an evaluation, and, while prior research typically studies the
relationship between optimisation performance and the number of
evaluations, we argue that, especially in the parallel setting, it is
important to account for evaluation times.
For example, consider the task of tuning the hyperparameters
of a machine learning system. This is a proto-typical example of
black-box optimisation, since we cannot analytically model the validation
error as a function of the hyperparameters and resort to noisy
train and validation procedures.
Moreover, while  training a single model
is computationally demanding, many hyperparameters can be
evaluated in parallel with modern computing infrastructure.
Further, training times are influenced by a myriad of factors, such as contention on
shared compute resources, and the
hyper-parameter choices, so they typically exhibit significant variability.

In this paper, we contribute to the line of research on parallel BO by
developing and analysing synchronous and asynchronously parallel
versions of Thompson Sampling (\ts)~\citep{thompson33sampling}, which we call \syntss and \asytss, respectively.
By modeling variability in evaluation times in our theoretical analysis,
we conclude that \asytss outperforms all existing parallel BO methods.
A key goal of this paper is to  champion this asynchronous
Thompson Sampling algorithm, due to its simplicity as well as its
strong theoretical and empirical performance.
 Our \textbf{main contributions} in this work are,
\vspace{-0.04in}
\begin{enumerate}[leftmargin=0.20in]
\item A theoretical analysis demonstrating
that both \syntss and \asytss making $n$ evaluations distributed among $M$
workers is almost as good as if $n$ evaluations were made in sequence.
\insertitemspacing
\item By factoring time as a resource, we prove that under a time constraint, the
asynchronous version outperforms the synchronous and sequential versions.
\insertitemspacing
\item Empirically, we demonstrate that \asytss significantly outperforms
existing methods for parallel BO on several synthetic problems and a hyperparameter
tuning task.
\end{enumerate}

\vspace{-0.05in}
\subsection*{Related Work}
\vspace{-0.1in}

Bayesian optimisation methods start with a prior belief distribution for $\func$
and incorporate function evaluations into updated beliefs in the form of a posterior.
Popular algorithms choose points to evaluate $\func$ via deterministic query rules
such as expected improvement (EI)~\citep{jones98expensive} or upper confidence bounds
(UCB)~\citep{srinivas10gpbandits}.
We however, will focus on a randomised selection procedure known as Thompson sampling~\cite{thompson33sampling}, which selects a point by maximising a random sample from the posterior.
Some recent theoretical advances have characterised the performance of \tss in
sequential settings~\citep{russo2014learning,chowdhury2017kernelized,agrawal2012analysis,%
russo2016thompson}.

The sequential nature of BO is a serious bottleneck when scaling up to large scale
applications where parallel evaluations are possible, such as the hyperparameter tuning application.
Hence, there has been a flurry of recent activity in this area%
~\citep{ginsbourger2011dealing,janusevskis2012ei,wang2016parallel,%
gonzalez2015batch,desautels2014parallelizing,contal2013parallel,shah2015parallel,%
kathuria2016bodpps,wang2017batched,wu2016parallel}.
Due to space constraints, we will not describe each method in detail but instead
summarise the differences with our work.
In comparison to this prior work, 
our approach enjoys  one or more of the following advantages.
\begin{enumerate}[leftmargin=0.2in]
\item \textbf{Asynchronicity:}
The majority of work on parallel BO are in the synchronous (batch) setting.
To our knowledge, only~\citep{ginsbourger2011dealing,janusevskis2012ei,wang2016parallel}
focus on asynchronous parallelisation.
\insertitemspacing
\item \textbf{Theoretical underpinnings:}
Most methods for parallel BO do not come with theoretical guarantees, with the exception
of some work using UCB techniques~\citep{kathuria2016bodpps,%
contal2013parallel,desautels2014parallelizing}.
Crucially, to the best of our knowledge, no theoretical guarantees are available for asynchronous methods. 
\insertitemspacing
\item \textbf{Computationally and conceptually simple:}
When extending a sequential BO algorithm to the parallel setting, \emph{all} of the
above methods either introduce additional hyper-parameters and/or ancillary
computational subroutines.
Some methods become computationally prohibitive when there are a large number of
workers and must resort to
approximations~\citep{wang2016parallel,wu2016parallel,shah2015parallel,janusevskis2012ei}.
In contrast, our approach is conceptually simple -- a direct adaptation of the
sequential TS algorithm to the parallel setting.
It does not introduce any additional hyper-parameters or ancillary routines and
has the same computational complexity as sequential BO methods.
\end{enumerate}

We mention that parallelised versions of \tss have been explored to
varying degrees in some applied domains of bandit and reinforcement
learning research~\citep{israelsen2017towards,
  hernandez2016distributed,osband2016deep}.  However, to our
knowledge, we are the first to theoretically analyse parallel \ts.
More importantly, we are also the first to propose and analyse \tss in
an asynchronous parallel setting.
Besides BO, there has been a line of work
on online learning with delayed feedback (as we have in the parallel
setting)~\citep{joulani2013online,quanrud2015online}.
In addition, ~\citet{jun2016top} study a best-arm identification problem when queries are
issued in batches.
But these papers do not address the general BO setting since they  consider finite
decision sets, nor do they model evaluation times to study trade-offs
when time is viewed as the primary resource.


\section{Preliminaries}
\label{sec:prelims}
\vspace{-0.1in}

Our goal is to maximise an unknown function $\func:\Xcal\rightarrow
\RR$ defined on a compact domain $\Xcal \subset \RR^d$, by repeatedly
obtaining noisy evaluations of $\func$: when we evaluate $\func$ at
$x\in\Xcal$, we observe $y=f(x) + \epsilon$ where the noise $\epsilon$
satisfies $\EE[\epsilon]=0$.  We work in the Bayesian paradigm,
modeling $\func$ itself as a random quantity. Following the
plurality of Bayesian optimisation literature, we assume that $\func$
is a sample from a Gaussian process~\cite{rasmussen06gps} and that the
noise, $\epsilon \sim \Ncal(0,\eta^2)$, is i.i.d normal.  A Gaussian process (GP)
is characterised by a mean function $\mu:\Xcal\rightarrow\RR$ and
prior (covariance) kernel $\kernel:\Xcal^2\rightarrow\RR$.  If
$\func\sim\GP(\mu,\kernel)$, then $\func(x)$ is distributed normally as
$\Ncal(\mu(x), \kernel(x,x))$ for all $x\in\Xcal$.  Additionally,
given $n$ observations $A=\{(x_i, y_i)\}_{i=1}^n$ from this GP, where
$x_i\in\Xcal$, $y_i = \func(x_i) + \epsilon_i \in\RR$, the posterior
process for $\func$ is also a GP with mean $\mu_A$ and covariance
$\kernel_A$ given by
\begin{align*}
\hspace{-0.05in}
\mu_A(x) = k^\top(K + \eta^2I_n)^{-1}Y, \hspace{0.35in}
\kernel_A(x,\tilde{x}) = \kernel(x,\tilde{x}) - k^\top(K + \eta^2I_n)^{-1}\tilde{k},
\numberthis \label{eqn:gpPost}
\end{align*}
where $Y\in\RR^n$ is a vector with $Y_i=y_i$, and
$k,\tilde{k}\in\RR^n$ are such that $k_i =
\kernel(x,x_i),\tilde{k}_i=\kernel(\tilde{x},x_i)$.  The Gram matrix
$K\in \RR^{n\times n}$ is given by $K_{i,j} = \kernel(x_i,x_j)$, and
$I_n\in\RR^{n\times n}$ is the identity matrix.  Some common choices
for the kernel are the squared exponential (SE) kernel and the
\matern kernel.  We refer the reader to chapter 2
of~\citet{rasmussen06gps} for more background on GPs.

Our goal is to find the maximiser $\xopt = \argmax_{x\in\Xcal}\func(x)$
of $\func$ through repeated evaluations. 
In the BO literature, this is typically framed as 
minimising the \emph{simple regret}, which is the difference between
the optimal value $\func(\xopt)$ and the best evaluation of the
algorithm.
Since $\func$ is a random quantity, so is its optimal value 
and hence the simple regret.
This motivates studying the \emph{Bayes simple regret}, which is the
expectation of the simple regret. 
Formally, we define the simple regret,
$\srn$, and Bayes simple regret, $\bsrn$, of an algorithm after $n$ evaluations as,
\begin{align*}
\srn \,=\, \func(\xopt) - \max_{j=1,\dots,n} \func(\xj),
\hspace{0.5in}
\bsrn \,=\, \EE[\srn].
\numberthis \label{eqn:srDefn}
\end{align*}
The expectation in $\bsrn$ is with respect to the prior
$\func\sim\GP(\zero,\kernel)$, the noise in the observations
$\epsilon_j\sim\Ncal(0,\eta^2)$, and any randomness of the algorithm.
We focus on simple regret here mostly to simplify
exposition; our proof also applies for \emph{cumulative regret}, which may be
more familiar.

In many applications of BO, including hyperparameter optimisation, the
time required to evaluate the function is the dominant cost, and we
are most interested in maximising $\func$ in a short period of
time. Moreover, there is often considerable variability in the time
required for different evaluations, caused either because different
points in the domain have different evaluation costs, the randomness
of the environment, or other factors.  To adequately capture these
settings, we model the time to complete an evaluation as a random
variable, and measure performance in terms of the simple regret within
a time budget, $T$. Specifically, letting $N=N(T)$ denote the (random)
number of function evaluations performed by an algorithm within time
$T$, we define the simple regret $\srtT$ and the Bayes simple regret
$\bsrtT$ as
\begin{align*}
\srtT \,=\, \begin{cases}
\func(\xopt) - \max_{j\leq N} \func(\xj) \hspace{0.2in} & \text{if } N \geq 1 \\[0.03in]
\max_{x\in\Xcal} |\func(\xopt) - \func(x)| &\text{otherwise}
\end{cases},
\hspace{0.4in}
\bsrtT \,=\, \EE[\srtT].
\numberthis \label{eqn:srtDefn}
\end{align*}
This definition is very similar to~\eqref{eqn:srDefn}, except, when an
algorithm has not completed an evaluation yet, its simple regret is
the worst possible value.  In $\bsrtT$, the expectation now also
includes the randomness in the evaluation times in addition to the
three sources of randomness in $\bsrn$.  In this work, we will model
the evaluation time as a random variable independent from $\func$,
specifically we consider Uniform, Half-Normal, or Exponential random
variables. This model is appropriate in many applications of BO; for
example, in hyperparameter tuning, unpredictable factors such as resource
contention, initialisation, etc., may induce significant variability
in evaluation times.
While the model does not precisely capture all
aspects of evaluation times observed in practice, we prefer it because
(a) it is fairly general, (b) it leads to a clean algorithm and
analysis, and (c) the resulting algorithm has good performance on real
applications, as we demonstrate in Section~\ref{sec:experiments}.
Studying other models for the evaluation time is an intriguing
question for future work and is discussed further in
Section~\ref{sec:conclusion}.



To our knowledge, all prior theoretical work for parallel
BO~\citep{desautels2014parallelizing,contal2013parallel,kathuria2016bodpps},
measures regret in terms of the total number of evaluations,
i.e. $\srn,\bsrn$. However, explicitly modeling evaluation times and
treating time as the main resource in the definition of regret is a
better fit for applications and leads to new conclusions in the
parallel setting as our results show.

\insertFigSynAsynSchemes

\textbf{Parallel BO:} We are interested in parallel
approaches for BO, where the algorithm has access to $M$ workers that
can evaluate $\func$ at different points in parallel.
In this setup, we wish to differentiate between the synchronous and
asynchronous settings, illustrated in
Fig.~\ref{fig:parallelschemes}. In the former, the algorithm issues a
batch of $M$ queries simultaneously, one per worker, and waits for all
$M$ evaluations to be completed before issuing the next batch.  In contrast, in the
asynchronous setting, a new evaluation may be issued as soon as a
worker finishes its last job and becomes available.  In the parallel
setting, $N$ in~\eqref{eqn:srtDefn} will refer to the number of
evaluations completed by \emph{all} $M$ workers.  

Due to variability in evaluation times, worker utilisation is lower in
the synchronous setting than in the asynchronous setting, since, in
each batch, some workers may wait idly for others to finish.
However, when issuing queries, a synchronous algorithm has
  more information about $\func$, since all previous evaluations
  complete before a batch is selected, whereas asynchronous algorithms
  always issue queries with $M-1$ missing evaluations.
For example, in Fig.~\ref{fig:parallelschemes}, when dispatching the fourth job,
the synchronous version uses results from the first three evaluations whereas
the asynchronous version is only using the result of the first evaluation.
Foreshadowing our theoretical results, resource utilisation is more important
than information assimilation, and hence the asynchronous setting will
enable better bounds on $\bsrtT$.
Next, we present our algorithms.

\section{Thompson Sampling for Parallel Bayesian Optimisation}
\label{sec:method}

\insertbothTSAlgos

\textbf{A review of sequential \ts:}
Thompson sampling~\citep{thompson33sampling} is a randomised strategy for sequential
decision making.
At step $j$, \tss samples $\xj$ according to the posterior probability that it
is the optimum. 
That is, $\xj$ is drawn  from the posterior density 
$p_{\xopt}(\cdot|\filtrj)$  where $\filtrj=\{(x_i,y_i)\}_{i=1}^{j-1}$ is the history
of query-observation pairs
up to step $j$. 
For GPs, this allows for a very simple and elegant algorithm.  Observe
that we can write $p_{\xopt}(x|\filtrj) = \int p_{\xopt}(x|g)
\,p(g|\filtrj)\ud g$, and that $p_{\xopt}(\cdot|g)$ puts all its mass at
the maximiser $\argmax_{x}g(x)$ of $g$.  Therefore, at step $j$, we
draw a sample $g$ from the posterior for $\func$ conditioned on
$\filtrj$ and set $\xj = \argmax_x g(x)$ to be the maximiser of $g$.
We then evaluate $\func$ at $\xj$.  The resulting procedure, called
\seqts, is displayed in Algorithm~\ref{alg:seqts}. 

\textbf{Asynchronous Parallel \ts:}
For the asynchronously parallel setting, 
we propose a direct application of the above idea.
Precisely, when a worker finishes an evaluation, 
we update the posterior with the query-feedback pair, sample $g$ from the updated posterior,
and re-deploy the worker with an evaluation at $\xj=\argmax_{x}g(x)$.
We call the procedure~\asyts, displayed in Algorithm~\ref{alg:asyts}.
In the first $M$ steps, when at least one of the workers have not been assigned a
job yet,
the algorithm skips lines~\ref{line:wait}--\ref{line:update_posterior} and samples $g$ from the prior GP, $\GPtt{1}$, in line~\ref{line:sample}.

\textbf{Synchronous Parallel \ts:}
To
illustrate comparisons, we also introduce a synchronous parallel
version, \synts, which makes the following changes to
Algorithm~\ref{alg:asyts}.  In line~\ref{line:wait} we wait for all
$M$ workers to finish and compute the GP posterior with all $M$
evaluations in
lines~\ref{line:update_data}--\ref{line:update_posterior}.  In
line~\ref{line:sample} we draw $M$ samples and re-deploy all workers
with evaluations at their maxima in line~\ref{line:redeploy}.

We emphasize that \asytss and \syntss are conceptually simple and
computationally efficient, since they are essentially the same as their
sequential counterpart.  This is in contrast to existing work on
parallel BO discussed above which require additional hyperparameters
and/or potentially expensive computational routines to avoid redundant
function evaluations.
While encouraging ``diversity" of query points seems necessary to prevent
deterministic strategies such as UCB/EI from picking the same or
similar points for all $M$ workers, our main intuition is that the
inherent randomness of TS suffices to address the
exploration-exploitation trade-off when managing $M$ workers in
parallel. 
Hence, such diversity schemes are not necessary for parallel \ts.
We further demonstrate this empirically by constructing a variant \asyhtss of \asytss which
employs  one such diversity scheme found in the literature.
\asyhtss performs either about the same as or slightly worse than \asytss on many
problems we consider.
While we focus on GP priors for $\func$ in this exposition,
\tss applies to more complex models, such as neural networks.
That we can ignore the points currently in evaluation in \tss is useful
in such models, as it can lead
to efficient and distributed implementations~\citep{hernandez2016distributed}.

\subsection{Theoretical Results}
\label{sec:analysis}

We now present our theoretical contributions.  Our analysis is based
on~\citet{russo2014learning} and~\citet{srinivas10gpbandits}, and also
uses some techniques from~\citet{desautels2014parallelizing}.
We provide informal theorem statements here to convey the main
intuitions, with all formal statements and proofs deferred to
Appendices~\ref{app:ts} and~\ref{app:rtv}.
We use $\asymp,\lesssim$ to
denote equality/inequality up to constant factors.


\textbf{Maximum Information Gain (MIG):} As in prior work, our regret
bounds involve the MIG~\citep{srinivas10gpbandits}, which captures the
statistical difficulty of the BO problem. It quantifies the maximum
information a set of $n$ observations provide about $\func$. To define
the MIG, and for subsequent convenience, we introduce one notational
convention. For a finite subset $A \subset \Xcal$, we use $y_A =
\{(x,f(x)+\epsilon) \mid x \in A\}$ to denote the query-observation
pairs corresponding to the set $A$.  The MIG is then defined as $\IGn
= \max_{A \subset \Xcal, |A| = n} I(f; y_A)$ where $I$ denotes the
Shannon Mutual Information.
~\citet{srinivas10gpbandits} show that $\IGn$ is sublinear in $n$ for
different classes of kernels; e.g. for the SE kernel, $\IGn\propto
\log(n)^{d+1}$ and for the \matern kernel with smoothness parameter $\nu$, $\IGn
\propto n^{1 - \frac{\nu}{2\nu + d(d+1)}}$.

Our first result bounds the Bayes simple regret $\bsrn$ for \seqts,
\synts, and \asytss purely in terms of the number of completed evaluations $n$.  In this
comparison, parallel algorithms are naturally at a disadvantage:
the sequential algorithm makes use of feedback from all its previous evaluations when
issuing a query, whereas a parallel algorithm
could be missing up to $M-1$ of them.
\citet{desautels2014parallelizing} showed that this difference in
available information can be quantified in terms of a bound
$\xiM$ on the information we can gain about $\func$ from the
evaluations in progress conditioned on the past evaluations.  To
define $\xiM$, assume that we have already completed $n$ evaluations
to $\func$ at the points in $\filtrn$ and that there are $q$ current
evaluations in process at points in $A_q$.  That is
$\filtrn,A_q\subset\Xcal$, $|\filtrn| = n$ and $|A_q| = q< M$.
Then $\xiM>0$ satisfies,
\begin{align*}
\text{for all $n\geq 1$},  \hspace{0.3in}
\max_{A_q\subset\Xcal, |A_q| < M} \, 
  I(\func; y_{A_q} | y_{\filtrn}) \;\leq\; \frac{1}{2}\log(\xiM).
\numberthis \label{eqn:xiMbound}
\end{align*}
$\xiM$ is typically increasing with $M$.
The theorem below bounds the Bayesian simple regret for Thompson
sampling after $n$ evaluations in terms of $\xiM$ and the MIG $\IGn$.

\insertthmprespacing
\begin{theorem}[Simple regret for \ts, Informal]
\label{thm:main}
Let $\func\sim\GP(\zero,\kernel)$ and
assume that condition~\eqref{eqn:xiMbound} holds.
Then the Bayes' simple regret~\eqref{eqn:srDefn} 
for \emph{\seqts}, \emph{\synts}, and \emph{\synts}
after $n$ evaluations can be bound as,
\emph{
\begin{align*}
\text{\seqts:} \hspace{0.1in}
  \bsrn \lesssim \sqrt{\frac{\logn \IGn}{n}}, \hspace{0.6in}
\text{\synts, \asyts:} \hspace{0.1in}
\bsrn \lesssim \sqrt{\frac{\xiM \logn  \IGn}{n}}.
\end{align*}
}
\end{theorem}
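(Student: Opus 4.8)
The plan is to adapt the information-theoretic analysis of Thompson sampling of \citet{russo2014learning} to the parallel setting, combining it with the maximum-information-gain machinery of \citet{srinivas10gpbandits} and the ``extra conditioning barely reduces posterior variance'' argument of \citet{desautels2014parallelizing}. I will bound the Bayes \emph{cumulative} regret $R_n := \sum_{j=1}^n \EE[\func(\xopt)-\func(\xj)]$ and then use $\bsrn \le R_n/n$, which follows from the definition~\eqref{eqn:srDefn}.

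Fix step $j$, write $\sigma_j(x) = \kernelfj(x,x)^{1/2}$ for the posterior standard deviation conditioned on the history $\filtrj$ actually available when $\xj$ is dispatched (in the parallel cases $\filtrj$ may be missing up to $M-1$ in-progress evaluations), and introduce the $\filtrj$-measurable UCB function $U_j(x) = \mufj(x) + \beta_j^{1/2}\sigma_j(x)$. The key structural fact, and the reason the ``naive'' parallel algorithm is sound, is that $\xj = \argmax_x g(x)$ for a fresh sample $g\sim\GPj$, so (i) $\xj$ has the same conditional law given $\filtrj$ as $\xopt = \argmax_x \func(x)$, and (ii) $\xj$ is conditionally independent of $\func$ given $\filtrj$, since $g\perp\func\mid\filtrj$ and the in-progress query locations are themselves $\filtrj$-measurable up to independent sampling noise (here we use that evaluation times are independent of $\func$). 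From (i), $\EE[U_j(\xopt)\mid\filtrj] = \EE[U_j(\xj)\mid\filtrj]$; from (ii), $\EE[\func(\xj)\mid\filtrj,\xj] = \mufj(\xj)$. Hence
\begin{align*}
\EE[\func(\xopt)-\func(\xj)\mid\filtrj] \;=\; \EE\!\big[\func(\xopt)-U_j(\xopt)\mid\filtrj\big] \;+\; \beta_j^{1/2}\,\EE\!\big[\sigma_j(\xj)\mid\filtrj\big].
\end{align*}
For the first term, a standard discretisation of $\Xcal$ on a grid of size $\mathrm{poly}(j)$, together with Gaussian maximal inequalities and sample-path regularity of the GP (valid for the SE and \matern kernels), shows that with $\beta_j\asymp d\log j$ one has $\func(x)\le U_j(x)$ for all $x$ with conditional probability at least $1-j^{-2}$; on the complementary event a Borell--TIS bound on $\sup_x\func(x)$ controls the deficit, and summing over $j$ contributes only a constant to $R_n$.

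It remains to bound $\sum_{j=1}^n \beta_j^{1/2}\EE[\sigma_j(\xj)]$, and this is where $\xiM$ enters. Let $\tilde\sigma_j(x)$ be the posterior standard deviation conditioned on \emph{all} $j-1$ previously dispatched points (completed or in progress); extra conditioning only shrinks variance, so $\tilde\sigma_j\le\sigma_j$, while condition~\eqref{eqn:xiMbound}, applied to the at-most-$M-1$ in-progress evaluations, gives the reverse-direction control $\sigma_j(x)^2 \le \xiM\,\tilde\sigma_j(x)^2$. Because $\tilde\sigma_j$ conditions on exactly the previously dispatched points, the usual mutual-information telescoping yields $\sum_{j=1}^n \tilde\sigma_j(\xj)^2 \lesssim \sum_j \log(1+\eta^{-2}\tilde\sigma_j(\xj)^2) \le 2\,I(\func;y_{\{x_1,\dots,x_n\}}) \le 2\IGn$. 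Combining, $\sum_j\sigma_j(\xj)^2 \le \xiM\sum_j\tilde\sigma_j(\xj)^2 \lesssim \xiM\IGn$, so Cauchy--Schwarz gives $\sum_j\sigma_j(\xj) \lesssim \sqrt{n\,\xiM\IGn}$. Taking expectations and using $\beta_j\lesssim\logn$ yields $R_n \lesssim \sqrt{\logn\, n\, \xiM\IGn}$; dividing by $n$ gives the bound for \synts\ and \asyts. For \seqts\ no evaluation is ever missing, so $\xiM=1$ and the same calculation gives $\bsrn\lesssim\sqrt{\logn\,\IGn/n}$.

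The main obstacle I anticipate is not the regret algebra but the measure-theoretic bookkeeping peculiar to the parallel setting: one must check carefully that at the moment $\xj$ is dispatched the posterior $\GPj$ used by the algorithm really is the conditional law of $\func$ given $\filtrj$ (so that $U_j$ is a legitimate confidence bound and the Thompson identity $\xj\overset{d}{=}\xopt\mid\filtrj$ holds), despite the random, history-dependent pattern of which workers have returned. This hinges on evaluation times being independent of $\func$ and on the in-progress locations being functions of $\filtrj$ and independent sampling noise; making this precise, and then aligning it with the $\xiM$ variance-ratio lemma of \citet{desautels2014parallelizing}, is the crux. The \synts\ case is marginally easier (the missing evaluations are exactly the current batch, so $|A_q|\le M-1$ automatically), whereas \asyts\ needs the same bound with the in-progress set taken to be the $M-1$ most recently dispatched queries.
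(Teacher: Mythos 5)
Your proposal is correct and follows essentially the same route as the paper's proof: the cumulative-to-simple regret reduction, the Russo--Van Roy decomposition via the $\filtrj$-measurable UCB and the identity $\EE[U_j(\xopt)\mid\filtrj]=\EE[U_j(\xj)\mid\filtrj]$, a polynomially-growing discretisation to handle the continuous domain, the variance-ratio lemma of \citet{desautels2014parallelizing} to introduce $\xiM$, and the MIG telescoping plus Cauchy--Schwarz. The only differences are cosmetic (the paper applies the Thompson identity at the discretised points and bounds the expected positive overshoot of the Gaussian directly, rather than via a high-probability event plus a bound on the bad event), so nothing substantive is missing.
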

The theorem states that purely in terms of the number of
 evaluations $n$, \seqtss is better than the parallel versions.
This is to be expected for reasons explained before;
unlike a sequential method, a parallel method could be missing feedback for up to
$M-1$ of its previous evaluations.
Similarly, \syntss will outperform \asytss when measured against the number of
evaluations $n$. 
While we have stated the same upper bound for \syntss and \asyts, it is possible to
quantify the difference between the two algorithms 
(see Appendix~\ref{app:asyproofs}); however, the dominant effect, relative to the
sequential version, is the maximum number of missing evaluations which
is $M-1$ for both algorithms.

The main difference between the sequential and parallel versions is the dependence on the parameter $\xiM$. 
While this quantity may not always be well controlled,%
~\citet{desautels2014parallelizing} showed that with a particular
initialisation scheme, $\xiM$ can be bounded by a constant for their
UCB based algorithm.
Fortunately, we can use the same scheme to bound $\xiM$ for \ts.  We
state their result formally below.

\insertthmprespacing
\begin{proposition}[\citep{desautels2014parallelizing}]
\label{prop:initialisation}
There exists an asynchronously parallelisable initialisation
scheme requiring
at most $\bigO(M\polylog(M))$ evaluations to $\func$ such that $\xiM$ is bounded
by a constant\footnote{%
After this initialisation,~\eqref{eqn:xiMbound} should be modified so that $\filtrn$
also contains the points in the initialisation.
Also, condition~\eqref{eqn:xiMbound} has close connections to the MIG but they
are not essential for this exposition.
}.
If we execute algorithms \emph{\synts}, \emph{\asyts} after this initialisation we have
\emph{$\bsrn \lesssim \sqrt{\logn \IGn/n}$} for both.
\end{proposition}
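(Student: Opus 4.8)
The plan is to combine Theorem~\ref{thm:main} with the initialisation analysis of~\citet{desautels2014parallelizing} in an essentially black-box fashion; the only real work is checking that their scheme fits our setting. First I would recall their procedure: greedily select a sequence of query points by maximising the posterior variance reduction of the GP (equivalently, uncertainty sampling). The crucial observation is that for a GP the posterior covariance $\kernel_A$ in~\eqref{eqn:gpPost} depends only on the \emph{locations} in $A$ and not on the observed values, so this selection rule is \emph{non-adaptive}: the whole sequence of $N_0 = \bigO(M\polylog(M))$ points can be pre-computed and the next unassigned point handed to whichever worker becomes free, making the scheme trivially asynchronously parallelisable. \citet{desautels2014parallelizing} show that after such an initialisation, the conditional information gain from any at-most-$(M-1)$ points currently in evaluation, conditioned on all completed evaluations \emph{together with} the $N_0$ initialisation points, is at most $\tfrac12\log(C)$ for an absolute constant $C$; that is, the modified version of~\eqref{eqn:xiMbound} (as in the footnote) holds with $\xiM \le C$, uniformly in the number $n$ of completed evaluations appearing in that condition.

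Next I would invoke Theorem~\ref{thm:main}. Inspecting its proof, the parameter $\xiM$ enters only through the bound~\eqref{eqn:xiMbound}, which is used to control how much a query issued with up to $M-1$ missing evaluations loses relative to one issued with the full history; likewise the $\IGn$ term arises from the standard information-gain telescoping of posterior variances along the algorithm's trajectory. Both ingredients are only helped by extra conditioning data, since additional observations can only shrink posterior variances. Hence running \synts or \asyts \emph{after} the initialisation and treating the $N_0$ initialisation points as part of the history reproduces the bound $\bsrn \lesssim \sqrt{\xiM\logn\,\IGn/n}$ verbatim, now with $\xiM$ the post-initialisation constant. Substituting $\xiM \le C$ and absorbing $C$ into $\lesssim$ yields $\bsrn \lesssim \sqrt{\logn\,\IGn/n}$ for both algorithms.

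Finally, a small bookkeeping step: if $n$ counts all evaluations, the algorithm proper has made $n - N_0$ queries, and in the regime of interest $n \gg M\polylog(M)$ we have $n - N_0 \asymp n$, so the rate is unchanged (alternatively, one states the bound for the $n$ post-initialisation evaluations and notes $\IG_{n+N_0} \asymp \IGn$ for the usual kernels). The main obstacle — essentially the only one — is confirming that the conditional mutual information quantity~\citet{desautels2014parallelizing} bound after initialisation is exactly (or an upper bound for) the left-hand side of our~\eqref{eqn:xiMbound}, and that their guarantee does not degrade as further post-initialisation data is accumulated; this holds because their bound follows from monotonicity/submodularity of the information gain and therefore persists under additional conditioning.
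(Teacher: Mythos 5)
Your proposal is correct and follows essentially the same route as the paper: cite~\citet{desautels2014parallelizing} for the constant bound on $\xiM$ after uncertainty-sampling initialisation, observe that the GP posterior variance is observation-independent so the scheme is non-adaptive and hence asynchronously parallelisable, and then plug $\xiM\leq C$ into Theorem~\ref{thm:main}. The only cosmetic difference is that the paper accounts for the initialisation evaluations via an explicit additive term of order $\ninit/n$ (bounding each such evaluation's regret by $2\supgp$), whereas you absorb it asymptotically via $n-N_0\asymp n$; both are valid.
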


The initialisation scheme is an uncertainty sampling procedure designed to 
reduce the posterior variance throughout the domain $\Xcal$.
Here, we first pick the point with the largest prior GP variance,
$\xinitjj{1} = \argmax_x\kernel(x,x)$.
We then iterate $\xinitjj{j} = \argmax_{x\in\Xcal}\kernel_{j-1}(x,x)$
where $\kernel_{j-1}$ denotes the posterior kernel with the previous $j-1$ evaluations.
As the posterior variance of a GP does not depend on the observations, this scheme is
asynchronously parallelisable: simply pre-compute the evaluation points and
then deploy them in parallel. 
We believe that such an initialisation may not be necessary for
\tss but currently do not have a proof.
Despite this,
Theorem~\ref{thm:main} and Proposition~\ref{prop:initialisation} imply a very powerful
conclusion: up to multiplicative constants,
\tss with $M$ parallel workers is almost as good as the sequential version
with as many evaluations.

\insertbsrttable

Now that we have bounds on the regret as a function of the number of
evaluations, we can turn to our main theoretical results: bounds on
$\bsrtT$, the simple regret with time as the main resource.
For this, we consider three different random distribution models for
the time to complete a function evaluation: uniform, half-normal, and
exponential.  We choose these three distributions since they exhibit
three different notions of tail decay, namely bounded, sub-Gaussian,
and sub-exponential\footnote{%
While we study  uniform, half-normal and exponential,
analogous results for other distributions
  with similar tail behaviour are possible with the appropriate
  concentration inequalities. See Appendix~\ref{app:rtv}.}.
Table~\ref{tb:rtv} describes these distributions and states the
expected number of evaluations $\nseq,\nsyn,\nasy$ for \seqts, \synts,
\asytss respectively with $M$ workers in time $T$. 
Our bounds on $\bsrtT$ for Thompson sampling variants are summarised in the following
theorem.

\insertthmprespacing
\begin{theorem}[Simple regret with time for \ts, Informal]
\label{thm:tstime}
Assume the same conditions as Theorem~\ref{thm:main} and that $\xiM$ is bounded by a
constant after suitable initialisation.
Assume that the time taken for completing an evaluation is a random variable with
either a uniform, half-normal or exponential distribution and let
$\nseq,\nsyn,\nasy$ be as given in Table~\ref{tb:rtv}.
Then $\nseq \leq \nsyn \leq \nasy$ and
$\emph{$\bsrtT$}$ can be upper bounded by
the following terms for \emph{\seqts}, \emph{\synts}, and \emph{\asyts}.
{\small
\emph{
\[
\text{\seqts:} \;\; \sqrt{\frac{\log(\nseq)\IGnn{\nseq}}{\nseq}},
\hspace{0.25in}
\text{\synts:} \;\;\sqrt{\frac{\log(\nsyn)\IGnn{\nsyn}}{\nsyn}},
\hspace{0.25in}
\text{\asyts:} \;\; \sqrt{\frac{\log(\nasy)\IGnn{\nasy}}{\nasy}}.
\]
}
}
\end{theorem}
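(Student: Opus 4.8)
The plan is to reduce Theorem~\ref{thm:tstime} to the evaluation-count bound of Theorem~\ref{thm:main}, combined with $\xiM=\bigO(1)$ from Proposition~\ref{prop:initialisation}, by controlling the random number $N=N(T)$ of evaluations completed within time $T$. Write $\phi(n)=\sqrt{\log(n)\IGnn{n}/n}$ for the per-evaluation bound, let $r_n=\func(\xopt)-\max_{j\le n}\func(\xj)$ denote the simple regret after $n$ completed evaluations (so that $\srtT=r_{N(T)}$ on $\{N(T)\ge1\}$), and let $\bar n\in\{\nseq,\nsyn,\nasy\}$ be the expected count from Table~\ref{tb:rtv} for the algorithm at hand. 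The ordering $\nseq\le\nsyn\le\nasy$ is the easy part: for i.i.d.\ evaluation times $X_i$ one has $\EE[X]\le\EE[\max_{i\le M}X_i]\le M\,\EE[X]$, hence $T/\EE[X]\le MT/\EE[\max_{i\le M}X_i]\le MT/\EE[X]$, and the three middle quantities are precisely $\nseq,\nsyn,\nasy$; the closed forms in the table follow by computing $\EE[X]$ and $\EE[\max_{i\le M}X_i]$ for the uniform, half-normal and exponential laws (the latter two up to the $\sqrt{\log M}$ and $\log M$ factors).

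First I would show $N(T)$ concentrates around $\bar n$. In each case $N(T)$ is governed by a renewal process: for \seqts, $\{N(T)\ge n\}=\{\sum_{i=1}^n X_i\le T\}$; for \synts, $N(T)=M\cdot B(T)$ where $B(T)$ is a renewal count with inter-arrival times $Z_i=\max$ of a fresh batch of $M$ i.i.d.\ evaluation times; and for \asyts, $N(T)=\sum_{m=1}^M N_m(T)$ is a sum of $M$ i.i.d.\ renewal counts, one per worker, since workers are re-deployed instantaneously. In all three, $\EE[N(T)]$ matches $\bar n$ up to lower-order renewal corrections, and the inter-arrival variables are bounded (uniform), sub-Gaussian (half-normal), or sub-exponential (exponential); a Hoeffding/Bernstein tail bound on the corresponding empirical mean then gives $\Pr(N(T)\le\bar n/2)\le e^{-c\bar n}$ for some constant $c>0$. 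Since every entry of Table~\ref{tb:rtv} grows linearly in $T$ while $\phi(\bar n)$ decays only polynomially or poly-logarithmically in $\bar n$, this probability is $o(\phi(\bar n))$ as $T\to\infty$.

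Next I would transfer Theorem~\ref{thm:main}. Fixing the realised evaluation-time process $\tau$ determines $N(\tau)$ and the entire worker-completion schedule, and the algorithm then produces a fixed query sequence $x_1,x_2,\dots$ as a function of $\func$, the noise, and its internal randomness; crucially $r_n$ is non-increasing in $n$ along this sequence. Hence, with $n_0=\lceil\bar n/2\rceil$, on the event $\{N(T)\ge n_0\}$ we have $\srtT=r_{N(T)}\le r_{n_0}$ pointwise (up to rounding $n_0$ down to a multiple of $M$ in the synchronous case), so $\EE[\srtT\,\mathbf{1}\{N(T)\ge n_0\}]\le\EE[r_{n_0}]\lesssim\sqrt{\xiM}\,\phi(n_0)\lesssim\phi(\bar n)$, using Theorem~\ref{thm:main} in the penultimate step (averaged over the random schedule, which is valid since that bound holds for every adapted schedule), $\xiM=\bigO(1)$ after initialisation, and the elementary doubling inequality $\phi(\bar n/2)^2\le2\phi(\bar n)^2$ (which follows from monotonicity of $\IGn$ and of $\log$). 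On the complementary event the crude bound $\srtT\le\max_{x\in\Xcal}|\func(\xopt)-\func(x)|$ has expectation a finite constant $C$ for a GP on a compact domain with continuous kernel, so $\EE[\srtT\,\mathbf{1}\{N(T)<n_0\}]\le C\,\Pr(N(T)\le\bar n/2)=o(\phi(\bar n))$. Summing the two contributions gives $\bsrtT\lesssim\phi(\bar n)$, which is the asserted bound after substituting $\bar n=\nseq,\nsyn,\nasy$; the $\bigO(M\polylog M)$ initialisation evaluations consume only a lower-order amount of time and do not affect the asymptotics.

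I expect the main obstacle to be the renewal/concentration bookkeeping in the second step: confirming that $\EE[N(T)]$ genuinely equals the Table~\ref{tb:rtv} entries --- which requires controlling the $o(T)$ corrections from renewal theory and, for the synchronous half-normal and exponential cases, the asymptotics $\EE[\max_{i\le M}X_i]\asymp\sqrt{\log M}$ and $\asymp\log M$ --- and ensuring the deviation bound $\Pr(N(T)\le\bar n/2)$ is uniform enough in $M$ to remain dominated by $\phi(\bar n)$, which is delicate because in the synchronous case $\bar n$ itself shrinks with $M$ through $\EE[\max_{i\le M}X_i]$. The remaining ingredients --- monotonicity of $r_n$, the conditioning on $\tau$, and the doubling bound for $\phi$ --- are routine.
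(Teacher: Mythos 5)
Your proposal is correct and follows essentially the same route as the paper: concentration of the evaluation count $N(T)$ via sub-Gaussian/sub-exponential tail bounds on the (per-worker or per-batch-maximum) completion times, conditioning on the high-probability event that $N$ is at least a constant fraction of $\nseq,\nsyn,\nasy$, transferring the $n$-evaluation bound of Theorem~\ref{thm:main} using monotonicity of the simple regret, and controlling the bad event by the finite expected supremum of the GP (the paper's $\supgp$, with $\delta$ chosen as $1/(2\supgp\nasy)$ rather than your exponentially small tail, an immaterial difference). The only bookkeeping you flag as delicate --- the maxima asymptotics $\EE[\max_i X_i]\asymp\sqrt{\log M}$ and $\asymp\log M$ and the sub-exponential concentration for the exponential case --- is exactly what the paper's Lemmas on uniform/half-normal/exponential maxima and its logarithmic-Sobolev argument supply.
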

As the above bounds are decreasing with the number of
evaluations and since $\nasy > \nsyn > \nseq$, the bound for $\bsrtT$
shows the opposite trend to $\bsrn$; \asytss is better than \syntss is
better than \seqts.
\asytss can achieve asymptotically lower simple regret than both \seqtss and
\synts, given a target time budget $T$, as it can execute $M$ times as many evaluations
as a sequential algorithm.
On the other hand,
\syntss completes fewer evaluations as workers
may stay idle some time. The difference between $\nasy$ and $\nsyn$
increases with $M$ and is more pronounced for heavier tailed
distributions. 

This is our main theoretical finding: given a budget $T$ on time,
\asytss, (and perhaps more generally asynchronous BO methods) can outperform
sequential or synchronous methods.


%


\vspace{-0.05in}
\section{Experiments}
\label{sec:experiments}
\vspace{-0.05in}
In this section we describe results from two experiments we conducted
to evaluate Thompson Sampling algorithms for Bayesian
optimisation. The first experiment is a synthetic experiment,
comparing Thompson Sampling variants with a comprehensive suite of
parallel BO methods from the literature, under a variety of
experimental conditions. In the second experiment, we compare TS with
other BO methods on the task of optimising the hyperparameters of a
convolutional neural network trained on the CIFAR-10 dataset.

\textbf{Implementation details:}
In practice, the prior used for Bayesian optimisation is a modeling
 choice, but prior empirical
 work~\citep{snoek12practicalBO,kandasamy15addBO} suggest using a data dependent
  prior by estimating the kernel using past evaluations.
  Following this recommendation, we estimate and update the prior every
 $25$ iterations via the GP marginal likelihood~\citep{rasmussen06gps}
  in our Thompson Sampling implementations.
 Next, turn to initialisation.
The initialisation scheme in Proposition~\ref{prop:initialisation} may not be realisable
in practical settings as it will require that we know the kernel $\kernel$.
Unless prior knowledge is available, developing reasonable estimates of the kernel
\emph{before} collecting any data can be problematic.
In our experiments, we replace this by simply initialising TS (and other BO methods) with
evaluations at randomly selected points.
This is fairly standard in the BO literature~\citep{snoek12practicalBO} and intuitively
has a similar effect of minimising variance throughout the domain.
 Such mismatch
 between theory and practice is not uncommon for BO; most
 theoretical analyses assume knowledge of the prior kernel $\kernel$,
 but, as explained above, in practice it is typically estimated on the fly.
%
%

\insertFigToyOne

\textbf{The methods:}
We compare \asytss to the following.
\emph{Synchronous Methods:}
\synrand: synchronous random sampling,
\synts: synchronous \ts,
\synbucbs from~\citep{desautels2014parallelizing},
\synucbpes from~\citep{contal2013parallel}.
\emph{Aynchronous Methods:}
\asyrand: asynchronous random sampling,
\asybucb: an asynchronous version of UCB with hallucinated
observations~\citep{desautels2014parallelizing,ginsbourger2011dealing},
\asyucb: asynchronous upper confidence bound~\citep{srinivas10gpbandits},
\asyei: asynchronous expected improvement~\citep{jones98expensive},
\asyhts: asynchronous \tss with hallucinated observations to explicitly encourage
diversity. This last method is based on \asytss but bases the posterior on $\filtrj \cup
\{(x, \mufj(x))\}_{x\in\hallucfiltrj}$ in line~\ref{line:update_posterior} of
Algorithm~\ref{alg:asyts}, where $\hallucfiltrj$ are the points in evaluation by other workers at
step $j$ and $\mufj$ is the posterior mean conditioned on just
$\filtrj$; this preserves the mean of the GP, but shrinks
the variance around the points in $\hallucfiltrj$.
This method is inspired
by~\cite{desautels2014parallelizing,ginsbourger2011dealing}, who use
such hallucinations for UCB/EI-type strategies so as to discourage
picking points close to those that are already in evaluation.
\asyucbs and \asyeis directly use the sequential UCB and EI criteria, since the the asynchronous versions do not repeatedly pick the same point for all workers.
\asyhucbs adds hallucinated observations to encourage diversity and is similar
to~\citep{ginsbourger2011dealing} (who use EI instead) and can also be
interpreted as an asynchronous version
of~\citep{desautels2014parallelizing}.
While there are other methods for parallel BO, many of them are either
computationally quite expensive and/or require tuning several
hyperparameters.  Furthermore, they are not straightforward to
implement and their implementations are not publicly available.
Appendix~\ref{app:expadd} describes additional implementation details
for all BO methods.

\textbf{Synthetic Experiments:}
We first present some results on a suite of benchmarks for global
optimisation.
To better align with our theoretical analysis, we add Gaussian noise to the function value
when querying.
This makes the problem more challenging that standard global optimisation where
evaluations are not noisy.
In our first experiment, we corroborate the claims in
Theorem~\ref{thm:main} by comparing the performance of \seqts, \synts,
and \asytss in terms of the number of evaluations $n$ on the Park1
function.  The results, displayed in the first panel of Fig.~\ref{fig:toymain}, confirm
that when comparing solely in terms of $n$, the sequential version
outperforms the parallel versions while the synchronous does
marginally better than asynchronous.


Next, we present results on a series of global optimisation benchmarks with
different values for the number of parallel workers $M$.
We model the evaluation ``time'' as a random variable that is drawn from either a
uniform, half-normal, exponential, or Pareto\footnote{%
A Pareto distribution with parameter $k$ has pdf which decays $p(x) \propto x^{-(k+1)}$.
}
distribution.
Each time a worker makes an evaluation, we also draw a sample from this time
distribution and maintain a queueing data structure to simulate the different
start and finish times for each evaluation.
The results are presented in Fig.~\ref{fig:toymain} where we plot the
simple regret $\srtT$ against (simulated) time $T$.

In the Park2 experiment, all asynchronous methods perform roughly the
same and outperform the synchronous methods.  On all other the other
problems, \asytss performs best. \asyhtss, which also uses
hallucinated observations, performs about the same or slightly worse
than \asytss, demonstrating that there is no need for encouraging
diversity with TS. It is especially worth noting that the improvement
of \asytss over other methods become larger as $M$ increases
(e.g. $M>20$).  We believe this ability to scale well with the number
of workers is primarily due to the simplicity of our approach.  In
Appendix~\ref{app:expadd}, we provide these results in larger figures
along with additional synthetic experiments.

\textbf{Image Classification on Cifar-10:}
We also experiment with tuning hyperparameters of a $6$ layer
convolutional neural network on an image classification task on the
Cifar-10 dataset~\citep{krizhevsky2009cifar}.  We tune the number of
filters/neurons at each layer in the range $(16, 256)$.  Here, each
function evaluation trains the model on 10K images for $20$ epochs and
computes the validation accuracy on a validation set of 10K images.
Our implementation uses Tensorflow~\citep{abadi2016tensorflow} and we
use a parallel set up of $M=4$ Titan X GPUs.  The number of filters
influences the training time which varied between $\sim4$ to $\sim16$
minutes depending on the size of the model. Note that this deviates
from our theoretical analysis which treats function evaluation times
as independent random variables, but it still introduces variability to evaluation
times and demonstrates the robustness of our approach.
Each method is given a budget of $2$ hours to find the best model
by optimising accuracy on a validation set.  These
evaluations are noisy since the result of each training procedure
depends on the initial parameters of the network and other
stochasticity in the training procedure.  Since the true value of this
function is unknown, we simply report the best validation accuracy
achieved by each method.  Due to the expensive nature of this
experiment we only compare $6$ of the above methods.  The results are
presented in Fig.~\ref{fig:cifar}.
\asytss performs best on the validation accuracy.
The following are ranges for the number of evaluations for each method over
$9$ experiments;
\emph{synchronous:}
\synbucb: 56 - 68,
\synts:  56 - 68. 
\emph{asynchronous:}
\asyrand: 93 - 105,
\asyei: 83 - 92,
\asyhucb: 85 - 92,
\asyts: 80 - 88.

While $20$ epochs is insufficient to completely train a model, the validation error
gives a good indication of how well the model would perform after sufficient training.
In Fig.~\ref{fig:cifar}, we also give the error on a test set of 10K images after training
the best model chosen by each algorithm to completion, i.e. for $80$ epochs.
\asytss and \asyeis are able to recover the best models which achieve an accuracy of
about $80\%$.  While this falls short of state of the art results on
Cifar-10 (for e.g.~\citep{he2016deep}), it is worth noting that we use
only a small subset of the Cifar-10 dataset and a relatively small
model.  Nonetheless, it demonstrates the superiority of our approach
over other baselines for hyperparameter tuning.

\insertCifarFigResults

\vspace{-0.10in}
\section{Conclusion}
\label{sec:conclusion}
\vspace{-0.10in}

This paper studies parallelised versions of \tss for synchronous and asynchronous
BO.
We demonstrate that the algorithms \syntss and \asytss
perform as well as their purely sequential counterpart in terms of number of evaluations.
However, when we factor time in, \asytss outperforms the other two versions.
The main advantage of the proposed methods over existing literature
is its simplicity, which enables us to scale well with a large number of workers.

We close with some intriguing avenues for future research. On a
technical level, is the initialisation scheme of Proposition~\ref{prop:initialisation}
necessary for \ts?
We are also interested in more general models for
evaluation times, for example to capture correlations between the
evaluation time and the query point $\xj \in \Xcal$ that arise
practice, such as in our CNN experiment. 
One could also consider models where some workers are slower than the rest.
We look forward to pursuing these directions.
 

{\small
\renewcommand{\bibsection}{\section*{References\vspace{-0.1em}} }
\setlength{\bibsep}{1.1pt}
\bibliography{kky,thompson}
}
\bibliographystyle{plainnat}

\newpage

\appendix
{\Large \textbf{Appendix}}
\vspace{0.1in}

\section{Theoretical Analysis for Parallelised Thompson Sampling in GPs}
\label{app:ts}

\subsection{Some Relevant Results on GPs and GP Bandits}
\label{app:tsrelated}

We first review some related results on GPs and GP bandits.
We begin with the definition of the \emph{Maximum Information Gain} (\mig)
which characterises the statistical difficulty of GP bandits~\citep{srinivas10gpbandits}.

\insertthmprespacing
\begin{definition}[Maximum Information Gain~\citep{srinivas10gpbandits}]
Let $f\sim \GP(\zero, \kernel)$
where $\kernel:\Xcal^2\rightarrow \RR$.
Let $A = \{x_1, \dots, x_n\} \subset \Xcal$ be a finite subset.
Let $f_{A}, \epsilon_{A}\in\RR^n$ such that $(f_{A})_i=f(x_i)$ and
$(\epsilon_{A})_i\sim\Ncal(0,\eta^2)$.
Let $y_{A} = f_{A}+\epsilon_{A} \in \RR^n$.
Denote the Shannon Mutual Information by $I$.
The MIG is the maximum information we can gain about $\func$
using $n$ evaluations. That is,
\[
\IGn = \max_{A\subset \Xcal, |A| = n} I(f; y_{A}).
\]
\label{def:infGain}
\end{definition}

\citet{srinivas10gpbandits} and~\citet{seeger08information} provide
bounds on the \migs for different classes of kernels.
For example for the SE kernel, $\IGn\asymp \logn^{d+1}$
and for the \matern kernel with smoothness parameter $\nu$,
$\IGn \asymp n^{\frac{d(d+1)}{2\nu + d(d+1)}}\logn$.
The next theorem due to~\citet{srinivas10gpbandits} bounds the sum of variances
of a GP using the~\mig.

\insertthmprespacing
\begin{lemma}[Lemma 5.2 and 5.3 in~\citep{srinivas10gpbandits}]
\label{lem:IGBoundLemma}
Let $\func\sim\GP(0,\kernel)$, $\func:\Xcal\rightarrow\RR$ and each time we query at
any $x\in\Xcal$ we observe $y = \func(x)+\epsilon$, where
$\epsilon\sim\Ncal(0,\eta^2)$.
Let $\{x_1,\dots,x_n\}$ be an arbitrary set of $n$ evaluations to $\func$ where
$\xj\in\Xcal$ for all $j$.
Let $\sigmasqjmo$ denote the posterior variance conditioned on the first
$j-1$ of these queries, $\{x_1,\dots,x_{j-1}\}$.
Then, $\sum_{j=1}^n\sigmasqjmo(\xj)\leq \frac{2}{\log(1+\eta^{-2})}\IGn$.
\end{lemma}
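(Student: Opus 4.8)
The plan is to tie the left-hand side to the mutual information $I(\func; y_A)$ between $\func$ and the observations $y_A$ at the query set $A=\{x_1,\dots,x_n\}$, and then bound that information by $\IGn$ using the definition of the \mig.

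First I would apply the chain rule for mutual information, $I(\func; y_A) = \sum_{j=1}^n I(\func; y_j \mid y_1,\dots,y_{j-1})$, and expand each summand as a difference of (differential) entropies, $I(\func; y_j \mid y_{1:j-1}) = H(y_j \mid y_{1:j-1}) - H(y_j \mid \func, y_{1:j-1})$. Since the noise is i.i.d.\ $\Ncal(0,\eta^2)$ and independent of $\func$, the second term equals $\tfrac12\log(2\pi e\eta^2)$. For the first term, note that conditioned on $y_{1:j-1}$ the GP predictive distribution of $\func(\xj)$ is Gaussian with variance $\sigmasqjmo(\xj)$, so $y_j$ is Gaussian with variance $\sigmasqjmo(\xj)+\eta^2$ and hence $H(y_j\mid y_{1:j-1}) = \tfrac12\log\big(2\pi e(\sigmasqjmo(\xj)+\eta^2)\big)$. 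Subtracting gives the identity $I(\func;y_A) = \tfrac12\sum_{j=1}^n\log\big(1 + \eta^{-2}\sigmasqjmo(\xj)\big)$.

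Next I would convert each logarithmic term back into a variance using an elementary scalar inequality. Because conditioning never increases the variance of a GP, $\sigmasqjmo(\xj)\le\kernel(\xj,\xj)\le 1$ under the standard normalisation $\kernel(x,x)\le 1$; hence $s_j := \eta^{-2}\sigmasqjmo(\xj)$ lies in $[0,\eta^{-2}]$. Since $s\mapsto s/\log(1+s)$ is increasing on $(0,\infty)$, for every $s\in[0,\eta^{-2}]$ we have $s \le \frac{\eta^{-2}}{\log(1+\eta^{-2})}\log(1+s)$, which rearranges to $\sigmasqjmo(\xj) \le \frac{1}{\log(1+\eta^{-2})}\log\big(1+\eta^{-2}\sigmasqjmo(\xj)\big)$. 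Summing over $j$ and substituting the identity from the previous step yields $\sum_{j=1}^n\sigmasqjmo(\xj) \le \frac{2}{\log(1+\eta^{-2})}\,I(\func;y_A)$, and since $|A|=n$ the definition of the \mig\ gives $I(\func;y_A)\le\IGn$, completing the argument.

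There is no real obstacle here: this is the standard argument behind Lemmas~5.2 and~5.3 of~\citet{srinivas10gpbandits}. The only point requiring care is the bound $\sigmasqjmo(\xj)\le 1$, which relies on normalising the kernel; this is exactly what keeps $s_j$ in the range where the scalar inequality $s\le\frac{\eta^{-2}}{\log(1+\eta^{-2})}\log(1+s)$ is valid, and a different bound on the prior variance would only change the leading constant.
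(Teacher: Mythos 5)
Your argument is correct and is precisely the standard proof of Lemmas~5.2--5.4 in~\citet{srinivas10gpbandits}: the chain rule plus Gaussian entropies give $I(\func;y_A)=\tfrac12\sum_j\log\bigl(1+\eta^{-2}\sigmasqjmo(\xj)\bigr)$, and the monotonicity of $s\mapsto s/\log(1+s)$ together with $\sigmasqjmo(\xj)\le\kernel(\xj,\xj)\le1$ converts each variance into the corresponding log term with constant $2/\log(1+\eta^{-2})$. The paper does not reprove this result but imports it by citation, so there is nothing to compare beyond noting that your derivation matches the cited source; the only hypothesis worth flagging explicitly, as you do, is the kernel normalisation $\kernel(x,x)\le1$, which the paper assumes without loss of generality in Theorem~\ref{thm:tsparallelmain}.
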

\insertthmpostspacing

Next we will need the following regularity condition on the derivatives of the
GP sample paths.
When $\func\sim\GP(\zero,\kernel)$, it is satisfied when $\kernel$ is four times
differentiable, e.g. the SE kernel and \matern{} kernel when
$\nu>2$~\citep{ghosal06gpconsistency}.

\insertthmprespacing
\begin{assumption}[Gradients of GP Sample Paths~\citep{ghosal06gpconsistency}]
Let $\func\sim\GP(\zero,\kernel)$, where
$\kernel:\Xcal^2 \rightarrow \RR$ is a stationary kernel.
The partial derivatives
of $\func$  satisfies the following condition.
There exist constants $a, b >0$ such that,
\[
\text{for all $J>0$, $\;$and for all $i \in \{1,\dots,d\}$},\quad \PP\left( \sup_{x} 
\Big|\partialfrac{x_i}{\func(x)}\Big| > J \right)
\leq a e^{-(J/b)^2}.
\]
\label{asm:kernelAssumption}
\end{assumption}

Finally, we will need the following result on the supremum of a Gaussian process.
It is satisfied when $\kernel$ is twice differentiable.

\insertthmprespacing
\begin{lemma}[Supremum of a GP~\citep{adler1990gps}]
\label{lem:supgp}
Let $\func\sim\GP(0,\kernel)$ have continuous sample paths.
Then, $\EE\|\func\|_\infty = \supgp < \infty$.
\end{lemma}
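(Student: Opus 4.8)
The plan is to combine two classical ingredients: (i) on the compact domain $\Xcal$, continuity of the sample paths of $\func$ forces $\|\func\|_\infty$ to be finite almost surely; and (ii) for a Gaussian process, almost-sure boundedness of the supremum automatically upgrades to integrability, via Gaussian concentration (the Borell--TIS inequality). Combining the two and integrating a Gaussian tail bound then yields the finite constant $\supgp := \EE\|\func\|_\infty < \infty$.

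First I would record the elementary reductions. Since $\Xcal \subset \RR^d$ is compact and each sample path $x \mapsto \func(x)$ is continuous, every sample path is bounded, so $\|\func\|_\infty < \infty$ with probability one; being an a.s.\ finite nonnegative random variable, $\|\func\|_\infty$ then has a finite median $m$. Also, $\kernel$ is continuous on the compact set $\Xcal^2$, hence bounded there, so $\sigma^2 := \sup_{x\in\Xcal}\kernel(x,x) = \sup_{x\in\Xcal}\EE[\func(x)^2] < \infty$.

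Next I would establish the sub-Gaussian tail $\PP\big(\|\func\|_\infty > m + t\big) \le e^{-t^2/(2\sigma^2)}$ for all $t>0$. Path continuity makes $\func$ separable, so for a countable dense set $\{x_i\}_{i\ge 1}\subset\Xcal$ we have $\|\func\|_\infty = \sup_i |\func(x_i)|$ a.s. For each finite $n$ write $M_n = \max_{i\le n}|\func(x_i)|$: the vector $(\func(x_1),\dots,\func(x_n))$ is a centred Gaussian, and expressing it as a linear image of i.i.d.\ standard normals makes $v\mapsto\max_{i\le n}|v_i|$ a Lipschitz function of the standard Gaussian with Lipschitz constant $\le \sigma$ (its covariance has diagonal entries $\le\sigma^2$), so Gaussian Lipschitz concentration on $\RR^n$ yields $\PP\big(M_n > \mathrm{med}(M_n) + t\big) \le e^{-t^2/(2\sigma^2)}$. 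Since $M_n \le \|\func\|_\infty$ we have $\mathrm{med}(M_n) \le m$, and the events $\{M_n > m+t\}$ increase to $\{\|\func\|_\infty > m+t\}$ as $n\to\infty$, so continuity of probability from below transfers the bound to $\|\func\|_\infty$.

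Finally, integrating the tail,
\[
\EE\|\func\|_\infty \;\le\; m + \int_0^\infty \PP\big(\|\func\|_\infty > m + t\big)\,\ud t \;\le\; m + \int_0^\infty e^{-t^2/(2\sigma^2)}\,\ud t \;=\; m + \sigma\sqrt{\pi/2} \;<\; \infty,
\]
which proves the lemma (and the same computation bounds every moment of $\|\func\|_\infty$). The main obstacle is step (ii): cleanly passing from the uncountable index set to finite-dimensional Gaussian vectors and invoking Gaussian isoperimetry to get concentration around the median; everything else is bookkeeping. An alternative is to bound $\EE\|\func\|_\infty$ directly by Dudley's entropy integral $\int_0^\infty \sqrt{\log N(\epsilon,\Xcal,d_\func)}\,\ud\epsilon$ with $d_\func(x,x')=\sqrt{\EE[(\func(x)-\func(x'))^2]}$ the canonical metric, but certifying convergence of that integral from path continuity alone is delicate, so I would prefer the concentration route.
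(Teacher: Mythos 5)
The paper does not actually prove this lemma; it is imported verbatim from Adler (1990), so there is no in-paper argument to compare against. Your proof is correct and is essentially the standard textbook route: almost-sure finiteness of $\|\func\|_\infty$ from compactness plus path continuity, the Borell--TIS/Gaussian isoperimetric inequality to get a sub-Gaussian tail around the median, and integration of that tail. The finite-dimensional reduction (writing the centred Gaussian vector as a linear image of a standard Gaussian so that $v \mapsto \max_i |v_i|$ becomes a $\sigma$-Lipschitz function, concentrating around the median, then passing to the full supremum along a countable dense set by continuity from below) is exactly how the concentration step is usually carried out, and each step you give goes through, including the observation that $\mathrm{med}(M_n)\le m$ because $M_n \le \|\func\|_\infty$ pointwise. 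One small remark: you obtain $\sigma^2=\sup_{x}\kernel(x,x)<\infty$ from continuity of $\kernel$, which is not among the stated hypotheses; it is cleaner to note that almost-sure boundedness of $\|\func\|_\infty$ already forces $\sup_x \EE[\func(x)^2]<\infty$ (if $\kernel(x_n,x_n)\to\infty$ along some sequence, then $\PP(|\func(x_n)|>M)\to 1/2$ for every $M$, contradicting $\PP(\|\func\|_\infty\le M)\ge 3/4$ for some finite $M$), and in any case the paper assumes $\kernel(x,x')\le 1$ elsewhere, which makes $\sigma\le 1$. Your instinct to avoid the Dudley entropy-integral alternative is also sound: convergence of that integral does not follow from path continuity alone, whereas the concentration argument needs nothing beyond what is stated.
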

This, in particular implies that in the definition of $\bsrtT$ in~\eqref{eqn:srtDefn},
$\max_{x\in\Xcal}|f(\xopt) - f(x)| \leq 2\supgp$.

Finally, we will use the following result in our parallel analysis.
Recall that the posterior variance of a GP does not depend on the observations.

\insertthmprespacing
\begin{lemma}[Lemma 1 (modified) in~\citep{desautels2014parallelizing}]
\label{lem:stdmi}
Let $\func\sim\GP(0,\kernel)$.
Let $A,B$ be finite subsets of $\Xcal$.
Let $y_A\in\RR^{|A|}$ and $y_B\in\RR^{|B|}$ denote the observations when we evaluate
$\func$ at $A$ and $B$ respectively.
Further let $\sigma_A,\, \sigma_{A\cup B}\,:\,\Xcal\rightarrow\RR$
denote the posterior standard deviation of
the GP when conditioned on $A$ and $A\cup B$ respectively.
Then,
\[
\text{for all $x\in\Xcal$}, \hspace{0.2in}
\frac{\sigma_{A}(x)}{\sigma_{A\cup B}(x)} \leq \exp\big( I(\func; y_B | y_A) \big)
\]
\end{lemma}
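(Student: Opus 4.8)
The plan is to identify the log-ratio of posterior variances at a single point with a conditional mutual information, and then to bound that single-point quantity by the global information $I(\func; y_B \mid y_A)$ using monotonicity (data processing) of mutual information; this mirrors the argument in~\citet{desautels2014parallelizing}.

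First I would fix an arbitrary $x \in \Xcal$ and work with the finite-dimensional jointly Gaussian vector formed by $\func(x)$, the function values $f_B \in \RR^{|B|}$ at the points of $B$, and the observation vectors $y_A, y_B$. Applying the elementary identity for the mutual information of a scalar Gaussian against a jointly Gaussian observation, after conditioning on $y_A$, gives
\[
I\big(\func(x); y_B \mid y_A\big) \;=\; \tfrac12 \log \frac{\mathrm{Var}\big(\func(x)\mid y_A\big)}{\mathrm{Var}\big(\func(x)\mid y_A, y_B\big)} \;=\; \tfrac12 \log \frac{\sigma_A^2(x)}{\sigma_{A\cup B}^2(x)},
\]
where the second equality is just the characterisation of the GP posterior variance as the conditional variance of $\func(x)$, which (as remarked before the lemma) does not depend on the realised observations, so this is a genuine deterministic identity.

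Next I would show $I(\func(x); y_B \mid y_A) \le I(\func; y_B \mid y_A)$. Since $y_B = f_B + \epsilon_B$ with $\epsilon_B \sim \Ncal(\zero, \eta^2 I)$ independent of $\func$ and of $\epsilon_A$, we have $y_B \perp (\func(x), y_A) \mid f_B$. The chain rule then gives $I\big((\func(x), f_B); y_B \mid y_A\big) = I(f_B; y_B \mid y_A) + I(\func(x); y_B \mid f_B, y_A) = I(f_B; y_B \mid y_A)$, and discarding the $f_B$ coordinate (monotonicity) yields $I(\func(x); y_B \mid y_A) \le I(f_B; y_B \mid y_A)$. Finally, since $f_B$ is a deterministic function of $\func$ while $y_B$ depends on $\func$ only through $f_B$, data processing in both directions gives $I(f_B; y_B \mid y_A) = I(\func; y_B \mid y_A)$. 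Substituting into the display above yields $\log\big(\sigma_A^2(x)/\sigma_{A\cup B}^2(x)\big) \le 2\, I(\func; y_B \mid y_A)$; taking square roots and exponentiating completes the proof, and since $x$ was arbitrary the bound holds for all $x \in \Xcal$.

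There is no serious obstacle here; the only point requiring a little care is that $\func$ is an infinite-dimensional object, so mutual informations involving it must be read as suprema over finite marginals — but every quantity that actually appears is determined by the finite jointly Gaussian vector $(\func(x), f_B, y_A, y_B)$, so all the identities and the data-processing step reduce to the standard finite-dimensional ones.
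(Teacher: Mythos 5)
Your proposal is correct and follows essentially the same route as the paper, which itself defers to Lemma 1 of~\citet{desautels2014parallelizing}: identify $\tfrac12\log\big(\sigma_A^2(x)/\sigma_{A\cup B}^2(x)\big)$ with the Gaussian conditional mutual information $I(\func(x); y_B \mid y_A)$ and then bound it by $I(\func; y_B \mid y_A)$ via monotonicity/data processing. The only difference is that you spell out the data-processing steps that the paper leaves implicit.
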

The proof exactly mimics the proof in~\citet{desautels2014parallelizing}.
Lemma~\ref{lem:stdmi} implies $\sigma_{A}(x)\leq \xiM^{1/2} \sigma_{A\cup B}(x)$
where $\xiM$ is from~\eqref{eqn:xiMbound}.

\subsection{Notation \& Set up}
\label{app:tsproofs}


We will require some set up in order to unify the analysis for the sequential,
synchronously parallel and asynchronously parallel settings.
\begin{itemize}[leftmargin=0.3in]
\item
The first is an indexing for the function evaluations.
This is illustrated for the synchronous and asynchronous parallel settings
in Figure~\ref{fig:parallelschemes}.
In our analysis, the index $j$ or step $j$ will refer to the $j$\ssth function evaluation
dispatched by the algorithm.
In the sequential setting this simply means that there were $j-1$ evaluations before
the $j$\ssth.
For synchronous strategies we index the first batch from $j=1,\dots,M$ and then the
next batch $j=M+1,\dots,2M$ and so on as in Figure~\ref{fig:parallelschemes}.
For the asynchronous setting, this might differ as each evaluation takes different
amounts of time. For example, in Figure~\ref{fig:parallelschemes}, the first worker
finishes the $j=1$\superscript{st} job and then starts the $j=4$\superscript{th},
while the second
worker finishes the $j=2$\superscript{nd} job and starts the $j=6$\superscript{th}.

\item Next, we define $\filtrj$ at step $j$ of the algorithm to be the query-observation
pairs $(\xk,\yk)$ for function evaluations completed by step $j$.
In the sequential setting $\filtrj = \{(\xk,\yk): k\in\{1,\dots,j-1\}\}$ for all $j$.
For the synchronous setting in Figure~\ref{fig:parallelschemes},
$\filtrjj{1}=\filtrjj{2}=\filtrjj{3} = \emptyset$,
$\filtrjj{4}=\filtrjj{5}=\filtrjj{6} = \{(\xk,\yk): k\in\{1,2,3\}\}$,
$\filtrjj{7}=\filtrjj{8}=\filtrjj{9} = \{(\xk,\yk): k\in\{1,2,3,4,5,6\}\}$ etc.
Similarly, for the asynchronous setting,
$\filtrjj{1}=\filtrjj{2}=\filtrjj{3} = \emptyset$,
$\filtrjj{4}= \{(\xk,\yk): k\in\{1\}\}$,
$\filtrjj{5}= \{(\xk,\yk): k\in\{1,3\}\}$,
$\filtrjj{6}= \{(\xk,\yk): k\in\{1,2,3\}\}$,
$\filtrjj{7}= \{(\xk,\yk): k\in\{1,2,3,5\}\}$ etc.
Note that in the asynchronous setting $|\filtrj| = j-M$ for all $j>M$.
$\{\filtrj\}_{j\geq 1}$ determines the filtration 
when constructing the posterior GP at every step $j$.

\item Finally, in all three settings, $\mu_A:\Xcal\rightarrow\RR$ and
$\sigma_A:\Xcal\rightarrow\RR_+$ will refer to the posterior
mean and standard deviation of the GP conditioned on some evaluations $A$,
i.e. $A\subset\Xcal\times\RR$ is a set of $(x,y)$ values and $|A|<\infty$.
They can be computed by plugging in the $(x,y)$ values in $A$
to~\eqref{eqn:gpPost}.
For example, $\mufj,\sigmafj$ will denote the mean and standard deviation conditioned
on the completed evaluations, $\filtrj$.
Finally, when using our indexing scheme above we will also overload notation so that
$\sigmajmo$ will denote the posterior standard deviation conditioned on evaluations
from steps $1$ to $j-1$.
That is $\sigmajmo = \sigma_A$ where $A=\{(\xk,\yk)\}_{k=1}^{j-1}$.
\end{itemize}

\subsection{Parallelised Thompson Sampling}
\label{app:asyproofs}

In the remainder of this section, $\betan\in\RR$ for all $n\geq 1$
will denote the following value.
\begin{align*}
\betan \,=\, 4(d+1)\logn + 2d\log(dab\sqrt{\pi})
\;\;\asymp\; d\logn,
\numberthis\label{eqn:betan}
\end{align*}
Here $d$ is the dimension, $a,b$ are from
Assumption~\ref{asm:kernelAssumption}, and $n$ will denote the number of evaluations.
Our first theorem below is a bound on the simple regret for~\syntss and~\asytss
after $n$ completed evaluations.

\insertthmprespacing
\begin{theorem}
\label{thm:tsparallelmain}
Let $\func\sim\GP(\zero,\kernel)$ where $\kernel:\Xcal^2\rightarrow\RR$ satisfies
Assumption~\ref{asm:kernelAssumption}.
Further,  without loss of generality $\kernel(x,x')\leq 1$.
Then for \emph{\synts} and \emph{\asyts}, the Bayes simple regret after $n$ evaluations
satisfies,
\emph{
\[
\bsrn\;\leq\; \frac{C_1}{n} + \sqrt{\frac{C_2\xiM\betan\IGn}{n}},
\]
}
where $\IGn$ is the MIG in Definition~\ref{def:infGain}, $\betan$ is 
as defined in~\eqref{eqn:betan},
$\xiM$ is from~\eqref{eqn:xiMbound}, and $C_1=\pi^2/6 + \sqrt{2\pi}/12$,
$\;C_2=2/\log(1+\eta^{-2})$ are constants.
\end{theorem}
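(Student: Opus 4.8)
The plan is to follow the classical Thompson-sampling-via-UCB argument of \citet{russo2014learning}, adapted to the GP setting as in \citet{srinivas10gpbandits}, but with the posterior at step $j$ conditioned on the (possibly incomplete) history $\filtrj$ rather than on all $j-1$ previous queries. First I would define, for each step $j$, a high-probability upper confidence bound $U_j(x) = \mufj(x) + \beta_n^{1/2}\sigmafj(x)$ built from the posterior mean and standard deviation conditioned on $\filtrj$. Using Assumption~\ref{asm:kernelAssumption} to control the fluctuation of GP sample paths on a suitably fine discretisation of $\Xcal$ (a standard covering/union-bound argument with grid size polynomial in $n$), one shows that with the choice of $\betan$ in~\eqref{eqn:betan}, $U_j$ simultaneously upper bounds $\func$ at $\xopt$ and is not too far above $\func(\xj)$, up to an additive $O(1/n^2)$ discretisation error that sums to the constant $C_1$. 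The key distributional fact of Thompson sampling is that, conditioned on $\filtrj$, the chosen point $\xj$ and the optimum $\xopt$ are identically distributed; hence $\EE[U_j(\xopt)\mid\filtrj] = \EE[U_j(\xj)\mid\filtrj]$, which lets us replace the unknown-optimum term by a term involving the queried points.

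The second ingredient is the per-step regret decomposition
\[
\EE[\func(\xopt) - \func(\xj)] \;\lesssim\; \frac{1}{n^2} + \beta_n^{1/2}\,\EE[\sigmafj(\xj)],
\]
and here is where the parallel setting differs from the sequential one: $\sigmafj$ is the posterior standard deviation given only the \emph{completed} evaluations $\filtrj$, whereas the information-gain bound of Lemma~\ref{lem:IGBoundLemma} controls $\sum_j \sigma_{j-1}(\xj)$, where $\sigma_{j-1}$ conditions on \emph{all} queries from steps $1,\dots,j-1$. To bridge this gap I would invoke Lemma~\ref{lem:stdmi} together with condition~\eqref{eqn:xiMbound}: since the at most $M-1$ evaluations in progress at step $j$ have conditional mutual information with $\func$ at most $\frac12\log\xiM$ given $\filtrj$, we get $\sigmafj(x) \le \xiM^{1/2}\,\sigma_{j-1}(x)$ for every $x$. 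This is exactly the point where the factor $\xiM$ enters the bound and is, I expect, the main subtlety of the proof — one has to be careful that the set of ``missing'' evaluations at step $j$ is a valid $A_q$ with $|A_q|<M$ in~\eqref{eqn:xiMbound}, which holds for both \synts\ (at most $M-1$ missing within a batch) and \asyts\ (exactly $M-1$ missing).

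Putting the pieces together: sum the per-step bound over $j=1,\dots,n$, use $\sum_j \sigmafj(\xj) \le \xiM^{1/2}\sum_j \sigma_{j-1}(\xj) \le \xiM^{1/2}\sqrt{n\sum_j \sigma_{j-1}^2(\xj)} \le \xiM^{1/2}\sqrt{n \cdot C_2 \IGn}$ by Cauchy--Schwarz and Lemma~\ref{lem:IGBoundLemma}, and divide by $n$ to obtain the cumulative regret bound; the simple-regret bound $\bsrn \le \frac1n\sum_j \EE[\func(\xopt)-\func(\xj)]$ follows since the best evaluation is at least the average. Collecting constants gives $\bsrn \le C_1/n + \sqrt{C_2\xiM\betan\IGn/n}$. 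The routine parts are the discretisation/union-bound step (controlling sample-path oscillation via Assumption~\ref{asm:kernelAssumption}) and the constant bookkeeping; the conceptually delicate step is the $\xiM$ substitution, i.e. verifying that the ``information deficit'' from the in-progress evaluations is uniformly bounded and correctly translating it into a multiplicative factor on the posterior standard deviation. A refined accounting of how many evaluations are actually missing at each step $j$ (fewer than $M-1$ for the early steps of a synchronous batch, and in a controlled pattern for the asynchronous schedule) is what would yield the sharper comparison between \synts\ and \asyts\ alluded to in the main text, but is not needed for the stated bound.
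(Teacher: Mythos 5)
Your proposal follows essentially the same route as the paper's proof: the Russo--Van Roy decomposition with $\EE[\Uj(\cdot)\mid\filtrj]$ equal at $\xj$ and $\xopt$ by the Thompson-sampling exchangeability, a deterministic discretisation whose error sums to the constant term, the $\sigmafj(x)\le\xiM^{1/2}\sigmajmo(x)$ bridge via Lemma~\ref{lem:stdmi} and~\eqref{eqn:xiMbound}, and Cauchy--Schwarz with Lemma~\ref{lem:IGBoundLemma}, finishing with $\bsrn\le\frac1n\bcrn$. The only cosmetic difference is that the paper controls the confidence-bound failure directly in expectation (Lemma~\ref{lem:discucbbound}) rather than via a high-probability union bound, and it additionally handles the case where the $n$ completed evaluations are not the first $n$ dispatched; neither changes the substance of your argument.
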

\insertthmpostspacing

\begin{proof}
Our proof is based on techniques from~\citet{russo2014learning} and%
~\citet{srinivas10gpbandits}.
We will first assume that the $n$ evaluations completed are the the evaluations
indexed $j=1,\dots,n$.

As part of analysis, we will discretise $\Xcal$ at each step $j$ of the algorithm.
Our discretisation $\nu_j$, is obtained via a grid of $\tau_j=j^2dab\sqrt{\pi}$
equally spaced points along each coordinate and has size $|\nu_j| = \tau_j^d$.
It is easy to verify that 
$\nu_j$ satisfies the following property: for all $x\in\Xcal$, $\|x-\xnuj\|_1 \leq
d/\tau_j$, where $\xnuj$ is the closest point to $x$ in $\nu_j$.
This discretisation is deterministically constructed ahead of time and does not depend on
any of the random quantities in the problem.

For the purposes of our analysis, we define the Bayes cumulative regret after $n$
evaluations as,
\begin{align*}
\bcrn = \EE\bigg[\sum_{j=1}^n \func(\xopt) - \func(\xj) \bigg].
\end{align*}
Here, just as in~\eqref{eqn:srDefn}, the expectation is with respect to the randomness
in the prior, observations and algorithm.
Since the average is larger than the minimum, we have
$\frac{1}{n}\sum_j\func(\xopt) - \func(\xj) \geq
\min_j (\func(\xopt) - \func(\xj)) = \srn$;
hence $\bsrn \leq \frac{1}{n}\bcrn$.

Following~\citet{russo2014learning} we denote
$\Uj(\cdot) = \mufj(\cdot) + \betajh\sigmafj(\cdot)$ to be an upper confidence bound
for $\func$ and begin by decomposing $\bcrn$ as follows,
\begingroup
\allowdisplaybreaks
\begin{align*}
\bcrn\;
&=\; \sum_{j=1}^n \EE \left[\func(\xopt) - \func(\xj)\right]
 \stackrel{(a)}{=}\;
  \sum_{j=1}^n \EE\big[ \EE\left[\func(\xopt) - \func(\xj)\,|\filtrj\right]\,\big] \\
&\stackrel{(b)}{=}\;\sum_{j=1}^n  \EE\big[ \EE[\func(\xopt)  - \func(\xj)
    - \func(\xoptnuj) + \func(\xoptnuj) 
-\Uj(\xoptnuj) + \Uj(\xjnuj) \,-\\
  &\hspace{3in} \func(\xjnuj) + \func(\xjnuj) -   \func(\xj)\,|\filtrj]\,\big] \\
&\stackrel{(c)}{=}\;
     \underbrace{\sum_{j=1}^n \EE[ \func(\xopt) - \func(\xoptnuj)]}_{A_1} \;+\;
     \underbrace{\sum_{j=1}^n \EE[ \func(\xjnuj) - \func(\xj)]}_{A_2} \;+\;
     \underbrace{\sum_{j=1}^n \EE[ \func(\xoptnuj) - \Uj(\xoptnuj)]}_{A_3} \\
  &\hspace{2in} \;+\;
     \underbrace{\sum_{j=1}^n \EE[ \Uj(\xjnuj) - \func(\xjnuj)]}_{A_4}.
\end{align*}
\endgroup
In $(a)$ we have used the tower property of expectation and in $(c)$ we have
simply rearranged the terms from the previous step.
In $(b)$ we have added and subtracted $\func(\xoptnuj)$ and then
$\func(\xjnuj)$.
The crucial step in $(b)$ is that we have also added $-\Uj(\xoptnuj) + \Uj(\xjnuj)$
which is justified if $\EE[\Uj(\xoptnuj)|\filtrj] = \EE[\Uj(\xjnuj)|\filtrj]$.
For this, first note that as $\xj$ is sampled from the posterior distribution
for $\xopt$ conditioned on $\filtrj$, both $\xj|\filtrj$ and $\xopt|\filtrj$ have the same
distribution.
Since the discretisation $\nu_j$ is fixed ahead of time,
and $\Uj$ is deterministic conditioned on $\filtrj$,
so do $\Uj(\xjnuj)|\filtrj$ and $\Uj(\xoptnuj)|\filtrj$.
Therefore, both quantities are also equal in expectation.


To bound $A_1, A_2$ and $A_3$ we use the following Lemmas.
The proofs are in Sections~\ref{app:discboundproof} and~\ref{app:discucbboundproof}.

\begin{lemma}
\label{lem:discbound}
At step $j$, for all $x\in\Xcal$, $\EE[|\func(x) - \func(\xnuj)|] \leq \frac{1}{2j^2}$.
\end{lemma}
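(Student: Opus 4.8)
The plan is to control the oscillation of $\func$ over the grid cell of $\nu_j$ containing $x$ using the gradient tail bound of Assumption~\ref{asm:kernelAssumption}. Since the excerpt assumes $\kernel$ is (at least) four times differentiable, $\func$ has continuously differentiable sample paths almost surely. Hence, for any fixed $x\in\Xcal$, integrating $\nabla\func$ along an axis-aligned polygonal path from $\xnuj$ to $x$ gives the deterministic pointwise bound
\[
|\func(x) - \func(\xnuj)| \;\le\; \sum_{i=1}^d \big|x_i - (\xnuj)_i\big| \cdot L_i,
\qquad \text{where } L_i := \sup_{z\in\Xcal}\Big|\partialfrac{z_i}{\func(z)}\Big|.
\]

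Next I would bound $\EE[L_i]$. By the layer-cake identity $\EE[L_i] = \int_0^\infty \PP(L_i > t)\,\ud t$, together with the bound $\PP(L_i > t) \le a e^{-(t/b)^2}$ for all $t>0$ supplied by Assumption~\ref{asm:kernelAssumption}, we get
\[
\EE[L_i] \;\le\; \int_0^\infty a\, e^{-(t/b)^2}\,\ud t \;=\; \frac{a b\sqrt{\pi}}{2},
\]
uniformly over $i\in\{1,\dots,d\}$.

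Finally, taking expectations in the first display, using linearity over the finite sum (legitimate once $\EE[L_i]<\infty$ as just shown), and invoking the grid property $\sum_i |x_i - (\xnuj)_i| = \|x-\xnuj\|_1 \le d/\tau_j$ recorded in the construction of $\nu_j$, we obtain
\[
\EE\big[|\func(x)-\func(\xnuj)|\big] \;\le\; \frac{a b\sqrt{\pi}}{2}\sum_{i=1}^d \big|x_i-(\xnuj)_i\big|
\;\le\; \frac{a b\sqrt{\pi}}{2}\cdot\frac{d}{\tau_j}
\;=\; \frac{a b\sqrt{\pi}}{2}\cdot\frac{d}{j^2 d a b\sqrt{\pi}}
\;=\; \frac{1}{2j^2},
\]
which is exactly the claim; note that the choice $\tau_j = j^2 d a b\sqrt{\pi}$ in the discretisation is precisely what makes the constant work out to $\tfrac12$.

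The only mildly delicate points are measurability and integrability: one must check that each $L_i$ is a genuine random variable, which holds because $\nabla\func$ is continuous on the compact set $\Xcal$ so the supremum equals one over a countable dense subset, and that the expectation is finite so that the interchange with the finite sum is valid — both handled by the tail estimate above. Everything else is a routine calculation, so I do not anticipate a real obstacle here.
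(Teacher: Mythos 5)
Your proof is correct and follows essentially the same route as the paper's: bound the increment by a gradient supremum times the $L^1$ grid distance, integrate the Gaussian tail from Assumption~\ref{asm:kernelAssumption} via the layer-cake identity, and let the choice $\tau_j = j^2 d a b\sqrt{\pi}$ absorb the constants. Your per-coordinate bookkeeping (bounding each $\EE[L_i] \le ab\sqrt{\pi}/2$ separately rather than working with a single $L = \max_i L_i$) is in fact slightly cleaner, since the paper's union bound over the $d$ coordinates introduces a factor $d$ in the tail probability that its subsequent integral silently drops.
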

\begin{lemma}
\label{lem:discucbbound}
At step $j$, for all $x \in \nu_j$, 
$\EE[\indfone\{\func(x)>\Uj(x)\}\cdot(\func(x) - \Uj(x))]
  \leq \frac{1}{j^2 \sqrt{2\pi}|\nu_j|}$.
\end{lemma}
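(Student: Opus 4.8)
The plan is to use the defining property of Gaussian process posteriors: conditioned on the completed evaluations $\filtrj$, the value $\func(x)$ at a fixed point $x$ is a univariate Gaussian, so the left-hand side is simply the expected positive part of a shifted Gaussian, which admits a clean closed-form bound. First I would fix $x \in \nu_j$ and condition on $\filtrj$. By the posterior formula~\eqref{eqn:gpPost}, $\func(x)\mid\filtrj \sim \Ncal(\mufj(x),\sigmafj^2(x))$, and since $\Uj(x) = \mufj(x) + \betajh\sigmafj(x)$ is $\filtrj$-measurable, we may write $\func(x) - \Uj(x) = \sigmafj(x)\,(W-\betajh)$ with $W\sim\Ncal(0,1)$ (the degenerate case $\sigmafj(x)=0$ is trivial since then $\func(x)=\Uj(x)$ a.s.). Hence
\[
\EE\big[\indfone\{\func(x) > \Uj(x)\}\cdot(\func(x) - \Uj(x)) \,\big|\, \filtrj\big]
  \;=\; \sigmafj(x)\,\EE\big[\max(0,\,W - \betajh)\big].
\]

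Next I would bound this Gaussian overshoot elementarily. Since $\betajh = \beta_j^{1/2}\ge 0$, the quantity $\max(0, W-\betajh)$ vanishes on $\{W\le\betajh\}$ and is at most $W$ on $\{W>\betajh\}$, so
\[
\EE\big[\max(0,\,W-\betajh)\big] \;\le\; \int_{\betajh}^\infty w\,\tfrac{1}{\sqrt{2\pi}}e^{-w^2/2}\,\ud w
  \;=\; \tfrac{1}{\sqrt{2\pi}}\,e^{-\betajh^2/2} \;=\; \tfrac{1}{\sqrt{2\pi}}\,e^{-\beta_j/2}.
\]
Because conditioning only decreases the GP variance and $\kernel(x,x)\le 1$, we also have $\sigmafj(x)\le\sqrt{\kernel(x,x)}\le 1$. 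Combining these and taking the expectation over $\filtrj$ via the tower property yields $\EE[\indfone\{\func(x)>\Uj(x)\}(\func(x)-\Uj(x))] \le \tfrac{1}{\sqrt{2\pi}}e^{-\beta_j/2}$.

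It then remains to verify that $\beta_j$ in~\eqref{eqn:betan} was calibrated precisely so that $e^{-\beta_j/2} = 1/(j^2|\nu_j|)$. Indeed $\beta_j/2 = 2(d+1)\log j + d\log(dab\sqrt{\pi}) = \log\!\big(j^{2d+2}(dab\sqrt{\pi})^d\big)$, while $|\nu_j| = \tau_j^d = (j^2 dab\sqrt{\pi})^d = j^{2d}(dab\sqrt{\pi})^d$, so $j^2|\nu_j| = j^{2d+2}(dab\sqrt{\pi})^d = e^{\beta_j/2}$. Substituting, the bound becomes $\tfrac{1}{\sqrt{2\pi}\,j^2|\nu_j|}$, which is the claim. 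I do not expect a real obstacle here: the argument reduces in one line to the Gaussian positive-part integral, and the only care needed is (i) that $\Uj$ and $\sigmafj$ are $\filtrj$-measurable and so pass outside the conditional expectation, and (ii) the bookkeeping that $\beta_j$ is chosen to cancel both the discretisation size $|\nu_j|$ and the extra $j^2$ factor that makes $\sum_j j^{-2}$ summable in the regret decomposition; no sharper Gaussian tail estimate than $\int_\beta^\infty w\phi(w)\,\ud w = \phi(\beta)$ is required.
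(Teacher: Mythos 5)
Your proof is correct and follows essentially the same route as the paper's: condition on $\filtrj$, reduce to the expected positive part of a Gaussian shifted by $-\betajh\sigmafj(x)$, bound it by $\sigmafj(x)\,e^{-\beta_j/2}/\sqrt{2\pi}$, use $\sigmafj(x)\le 1$, and observe that $\beta_j$ is calibrated so that $e^{\beta_j/2}=j^2|\nu_j|$. The only (cosmetic) difference is that you bound the overshoot by the tail integral $\int_{\betajh}^\infty w\phi(w)\,\ud w=\phi(\betajh)$ directly, whereas the paper invokes the closed form for $\EE[Z\indfone(Z>0)]$; your version is if anything slightly cleaner, since that expression is an exact identity only up to the (here negative) mean term.
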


Using Lemma~\ref{lem:discbound} and the fact that $\sum_j j^{-2} = \pi^2/6$, we
have $A_1 + A_2 \leq \pi^2/6$.
We bound $A_3$ via,
\begingroup
\allowdisplaybreaks
\begin{align*}
A_3\;
&\leq\; \EE\bigg[\sum_{j=1}^n \indfone\{\func(\xoptnuj) > \Uj(\xoptnuj)\}\cdot
  (\func(\xoptnuj) - \Uj(\xoptnuj)) \bigg] \\
&\leq\; \sum_{j=1}^n \sum_{x\in\nu_j}
    \EE\big[\,\indfone\{\func(x) > \Uj(x)\}\cdot (\func(x) - \Uj(x)) \,\big] 
\leq\; \sum_{j=1}^n \sum_{x\in\nu_j} \frac{1}{j^2\sqrt{2\pi}|\nu_j|}
= \frac{\sqrt{2\pi}}{12}
\end{align*}
\endgroup
In the first step we upper bounded $A_3$ by only considering the positive
terms in the summation.
The second step bounds the term for $\xoptnuj$ by the sum of corresponding terms
for all $x\in\nu_j$.
We then apply Lemma~\ref{lem:discucbbound}.

Finally, we bound each term inside the summation of $A_4$ as follows,
\begin{align*}
&\EE[\Uj(\xjnuj) - \func(\xjnuj)] = 
\EE[\mufj(\xjnuj) + \betajh\sigmafj(\xjnuj) - \func(\xjnuj)] 
\numberthis \label{eqn:afourdecomp} \\
&\hspace{0.4in}=\EE[\mufj(\xjnuj) + \betajh\sigmafj(\xjnuj) - \EE[\func(\xjnuj)|\filtrj]] 
=\EE[\betajh\sigmafj(\xjnuj)] 
\end{align*}
Once again, we have used the fact that $\mufj,\sigmafj$ are deterministic given $\filtrj$.
Therefore,
\begin{align*}
A_4 &\stackrel{(a)}{\leq}\betanh\sum_{j=1}^n \EE[\sigmafj(\xjnuj)]
\stackrel{(b)}{\leq}\betanh\xiM^{1/2} \EE\Bigg[\sum_{j=1}^n \sigmajmo(\xjnuj) \Bigg]  \\
&\stackrel{(c)}{\leq} \betanh\xiM^{1/2} \EE\Bigg[ \bigg(n\sum_{j=1}^n
    \sigmasqj(\xjnuj)\bigg)^{1/2}\Bigg]
\stackrel{(d)}{\leq} \sqrt{\frac{2\xiM n\betan\IGn}{\log(1+\eta^{-2})}}
\numberthis \label{eqn:afour}
\end{align*}
Here, $(a)$ uses~\eqref{eqn:afourdecomp} and that 
$\betaj$ is increasing in $j$~\eqref{eqn:betan}.
$(c)$ uses the Cauchy-Schwarz inequality and $(d)$ uses Lemma~\ref{lem:IGBoundLemma}.
For $(b)$, first we note that $\filtrj\subseteq \{(\xii{i},\yii{i})\}_{i=1}^{j-1}$.
In the synchronously parallel setting $\filtrj$ could be missing up to $M$ of these
$j-1$ evaluations, i.e. $|\filtrj| = \floor{(j-1)/M}$.
In the asynchronous setting we will be missing exactly $M$ evaluations except during
the first $M$ steps, i.e. $|\filtrj| = j-M$ for all $j>M$.
In either case, letting $A=\filtrj$ and
$B=\{(\xii{i},\yii{i})\}_{i=1}^{j-1} \backslash \filtrj$
in Lemma~\ref{lem:stdmi} we get,
\begin{align*}
\text{for all $x\in\Xcal$}, \hspace{0.2in}
\sigmafj(x) 
\,\leq\, \exp\big(I(f;y_{B}|y_{\filtrj}) \big) \,\sigmajmo(x) 
\,\leq\, \xiM^{1/2} \sigmajmo(x).
\numberthis \label{eqn:sigmafjbound}
\end{align*}
The last step uses~\eqref{eqn:xiMbound} and that $|B| < M$.
Putting the bounds for $A_1,A_2,A_3$, and $A_4$ together we get,
$\bcrn\;\leq\; C_1 + \sqrt{C_2n\beta_n\IGn}$.  The theorem follows
from the relation $\bsrn\leq \frac{1}{n}\bcrn$.

Finally consider the case where the $n$ evaluations completed
are not the first $n$ dispatched.
Since $A_1,A_2,A_3$ are bounded by constants summing over all $n$ we only need to worry
about $A_4$.
In step $(a)$ of~\eqref{eqn:afour}, we have bounded $A_4$ by the sum of
posterior variances $\sigmafj([\xj]_j)$.
Since $\sigma_{\filtrjj{j'}}([\xj]_j) < \sigmafj([\xj]_j)$ for $j'>j$, the sum 
for any $n$ completed evaluations can be bound by the same sum for the
first $n$ evaluations dispatched.
The result follows accordingly. 
\end{proof}

The bound for the sequential setting in Theorem~\ref{thm:main} follows directly by setting
$M=1$ in the above analysis.

\insertthmprespacing
\begin{corollary}
\label{cor:tsseq}
Assume the same setting and quantities as in Theorem~\ref{thm:tsparallelmain}.
Then for \emph{\seqts}, the Bayes' simple regret after $n$ evaluations satisfies,
\emph{
\[
\bsrn\;\leq\; \frac{C_1}{n} + \sqrt{\frac{C_2\betan\IGn}{n}},
\]
}
\end{corollary}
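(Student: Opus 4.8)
The plan is to specialise the argument already given for Theorem~\ref{thm:tsparallelmain} to the case $M = 1$. The parameter $\xiM$ enters that proof in exactly one place: step $(b)$ of the chain~\eqref{eqn:afour}, where the posterior standard deviation $\sigmafj$ conditioned on the completed evaluations is compared to $\sigmajmo$, the standard deviation conditioned on \emph{all} of steps $1,\dots,j-1$. Every other ingredient — the cumulative-to-simple regret reduction $\bsrn \le \tfrac1n \bcrn$, the decomposition into $A_1,\dots,A_4$, Lemmas~\ref{lem:discbound} and~\ref{lem:discucbbound} bounding $A_1+A_2 \le \pi^2/6$ and $A_3 \le \sqrt{2\pi}/12$, and the Cauchy--Schwarz / information-gain bound of Lemma~\ref{lem:IGBoundLemma} — does not reference $M$ at all. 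So I only need to check what happens to step $(b)$ when $M=1$.

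First I would observe that for \seqts the filtration at step $j$ is $\filtrj = \{(\xk,\yk)\}_{k=1}^{j-1}$ \emph{exactly}: there are no concurrent evaluations, hence no missing feedback. Taking $A = \filtrj$ and $B = \emptyset$ in Lemma~\ref{lem:stdmi} then gives $\sigmafj(x) = \sigmajmo(x)$ for every $x\in\Xcal$. Equivalently, condition~\eqref{eqn:xiMbound} holds with $\xiM = 1$ when $M=1$, since the maximum there runs over subsets $A_q$ with $|A_q| < M = 1$, i.e. only over $A_q = \emptyset$, for which the left-hand side is $0 = \tfrac12\log 1$. Substituting $\sigmafj = \sigmajmo$ (equivalently $\xiM = 1$) into~\eqref{eqn:afour} removes the factor $\xiM^{1/2}$ and yields $A_4 \le \sqrt{2n\betan\IGn/\log(1+\eta^{-2})} = \sqrt{C_2\, n\,\betan\,\IGn}$.

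Combining this with the unchanged bounds on $A_1,A_2,A_3$ gives $\bcrn \le C_1 + \sqrt{C_2\, n\,\betan\,\IGn}$, and dividing through by $n$ using $\bsrn \le \tfrac1n \bcrn$ produces the stated inequality. I do not anticipate any genuine obstacle: the corollary is a direct specialisation of Theorem~\ref{thm:tsparallelmain}, and the only point that merits being spelled out is precisely why $\xiM = 1$ is admissible in the sequential setting.
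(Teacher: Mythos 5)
Your proposal is correct and follows the paper's own route: the paper likewise observes that the only $M$-dependent step is $(b)$ of~\eqref{eqn:afour}, which becomes unnecessary in the sequential setting where $\filtrj$ contains all $j-1$ prior evaluations. Your additional remark that this corresponds to $\xiM=1$ (since the maximum in~\eqref{eqn:xiMbound} runs only over $A_q=\emptyset$) is a correct and slightly more explicit justification of the same point.
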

\begin{proof}
The proof follows on exactly the same lines as above.
The only difference is that we will not require step $(b)$ in~\eqref{eqn:afour}
and hence will not need $\xiM$.
\end{proof}
\insertthmpostspacing

We conclude this section with justification for the discussion following
Theorem~\ref{thm:main}. 
Precisely that $\tsseqbound \leq \tssynbound \leq \tsasybound$ where
$\tsseqbound, \tssynbound, \tsasybound$ refer to the best achievable upper bounds
using our analysis. 
We first note that in Lemma~\ref{lem:stdmi},  $\sigma_{A\cup B} \leq \sigma_{A}$
as the addition of more points can only decrease the posterior variance.
Therefore, $\xiM$ is necessarily larger than $1$.
Hence $\tsseqbound \leq \tssynbound, \tsasybound$.
The result
$\tssynbound\leq\tsasybound$ can be obtained by a more careful analysis.
Precisely, in~\eqref{eqn:afour} and~\eqref{eqn:sigmafjbound} we will have to use $\xiM$ 
for the asynchronous setting for all $j>M$ since $|B| = 
|\{(\xii{i},\yii{i})\}_{i=1}^{j-1} \backslash \filtrj| = M$.
However, in the synchronous setting we can use a bound $\xi_{|B|}$ where
$|B|$ is less than $M$ most of the time.
Of course, the difference of $\tssynbound,\tsasybound$ relative to the
$\tsseqbound$ is dominated by the maximum number of missing evaluations which is
$M-1$ for both \syntss and \asyts.
We reiterate that $\tsseqbound,\tssynbound,\tsasybound$ are upper bounds
on the Bayes' regret $\bsrn$ and not the actual regret itself.

\subsubsection{Proof of Lemma~\ref{lem:discbound}}
\label{app:discboundproof}
Let $L = \sup_{i=1,\dots,d} \sup_{x\in\Xcal} \big|\frac{\partial \func(x)}{\partial
x_i}\big|$.
By Assumption~\ref{asm:kernelAssumption} and the union bound we have
$\PP(L \geq t) \leq da\exp^{-t^2/b^2}$.
Let $x\in\Xcal$.
We bound,
\begin{align*}
\EE[|\func(x) - \func(\xnuj)|] &\leq \EE[L\|x-\xnuj\|_1] \leq \frac{d}{\tau_j} \EE[L]
= \frac{d}{\tau_j}\int_0^\infty \PP(L \geq t)\ud t \\
&\leq \frac{d}{\tau_j}\int_0^\infty ae^{t^2/b^2}\ud t
= \frac{dab\sqrt{\pi}}{2\tau_j} = \frac{1}{2j^2}.
\end{align*}
The first step bounds the difference in the function values by the largest partial
derivative and the $L^1$ distance between the points.
The second step uses the properties of the discretisation $\nu_j$ and the third
step uses the identity $\EE X =\int \PP(X>t)\ud t$
for positive random variables $X$.
The last step uses the value for $\tau_j$ specified in the main proof.
\hfill \qed

\subsubsection{Proof of Lemma~\ref{lem:discucbbound}}
\label{app:discucbboundproof}
The proof is similar to Lemma 2 in~\citep{russo2014learning}, but we provide it here
for completeness.
We will use the fact that for $Z\sim\Ncal(\mu,\sigma^2)$, we have
$\EE[Z\indfone(Z > 0)] = \frac{\sigma}{\sqrt{2\pi}}e^{-\mu^2/(2\sigma^2)}$.
Noting that $\func(x)-\Uj(x)|\filtrj\sim \Ncal(-\betajh\sigmafj(x), \sigmasqfj(x))$,
we have,
\[
\EE[\indfone\{\func(x)>\Uj(x)\}\cdot(\func(x) - \Uj(x))|\filtrj]
= \frac{\sigmafj(x)}{\sqrt{2\pi}} e^{\betaj/2}
\leq \frac{1}{\sqrt{2\pi}|\nu_j| j^2}.
\]
Here, the last step uses that $\sigmafj(x) \leq \kernel(x,x) \leq 1$ 
and that $\betaj = 2\log(j^2|\nu_j|)$.
\hfill \qed

\subsection{On the Initialisation Scheme and Subsequent Results --
Proposition~\ref{prop:initialisation}}
\label{app:init}

The bound $\xiM$ could be quite large growing as fast as $M$, which is very unsatisfying
because then the bounds are no better than a strictly sequential algorithm for $n/M$
evaluations. 
~\citet{desautels2014parallelizing} show however that $\xiM$ can be bounded by a
constant $C'$ if we bootstrap a procedure with an uncertainty sampling procedure.
Precisely, we pick $\xinitjj{1} = \argmax_{x\in\Xcal}\sigmasqtt{0}(x)$ where
$\sigmasqtt{0}$ is the prior variance;
We then iterate $\xinitjj{j} = \argmax_{x\in\Xcal}\sigmasqtt{j-1}(x)$ until
$j=\ninit$.
As the posterior variance of a GP does not depend on the evaluations, this scheme is
asynchronously parallelisable: simply pre-compute the $\ninit$ evaluation points and
then deploy them in parallel.

By using the same initialisation scheme, we can achieve, for both \syntss and 
\asyts, the following:
\[
\bsrn \;\leq\; C'\tsseqbound \,+\, \frac{2\supgp\ninit}{n}
\]
Here, $\tsseqbound$ is the simple regret of \seqts.
The proof simply replaces the unconditional mutual information in the definition
of the MIG with the mutual information conditioned on the first $\ninit$ evaluations.%
~\citet{desautels2014parallelizing} provide bounds for $C'$ for different kernels.
For the SE kernel, $C'=\exp((2d/e)^d)$ and for the \matern kernel $C'=e$.
They also show that $\ninit$ typically scales as $\bigO(M\polylog(M))$.
If $M$ does not grow too large with $n$, then the first term above dominates and we
are worse than the sequential bound only up to constant factors.

In practice however, as we have alluded to already in the main text, there are two
shortcomings with this initialisation scheme.
First, it requires that we know the kernel before any evaluations to $\func$.
Most BO procedures tune the kernel on the fly using its past evaluations, but this is
problematic without any evaluations.
Second the size of the initialisation set $\ninit$ has some problem dependent constants
that we will not have access to in practice.
We conjecture that \tss will not need this initialisation and wish to resolve this in
future work.
%

\section{Proofs for Parallelised Thompson Sampling with Random
Evaluation Times}

\label{app:rtv}

The goal of this section is to prove Theorem~\ref{thm:tstime}.
In Section~\ref{app:timervresults} we derive some concentration results for uniform
and half-normal distributes and their maxima.
 In Section~\ref{sec:exprv} we do the same for
exponential random variables.
We put everything together in Section~\ref{app:tsrtv} to prove Theorem~\ref{thm:tstime}.
We begin by reviewing some well known concepts in concentration of measure.

\subsection{Some Relevant Results}
\label{app:rtvrelated}

We first introduce the notion of sub-Gaussianity, which characterises one of the
stronger types of tail behaviour for random variables.

\insertthmprespacing
\begin{definition}[Sub-Gaussian Random Variables]
\label{defn:subgauss}
A zero mean random variable is said to be $\tau$ \emph{sub-Gaussian} if it satisfies,
$\EE[e^{\lambda X}] \;\leq\; e^\frac{\tau^2 \lambda^2}{2}$ for all $\lambda\in\RR$.
\end{definition}
\insertthmpostspacing

It is well known that Normal $\Ncal(0,\zeta^2)$ variables are $\zeta$ sub-Gaussian
and bounded random variables with support in $[a, b]$ are $(b-a)/2$
sub-Gaussian.
%
%
For sub-Gaussian random variables, we have the following important and well known result.

\insertthmprespacing
\begin{lemma}[Sub-Gaussian Tail Bound]
\label{thm:hoeffding}
Let $X_1,\dots, X_n$ be zero mean independent random variables such that $X_i$ is
$\sigma_i$ sub-Gaussian. Denote 
$\Sn = \sum_{i=1}^n X_i$ and $\sigma^2 = \sum_{i=1}^n\sigma_i^2$.
Then, for all $\epsilon > 0$,
\[
\PP\left(\Sn \geq \epsilon\right)
\;\leq\; \exp\left( \frac{-\epsilon^2}{2\sigma^2} \right), \hspace{0.2in}
\PP\left(\Sn \leq \epsilon\right)
\;\leq\; \exp\left( \frac{-\epsilon^2}{2\sigma^2} \right).
\]
\end{lemma}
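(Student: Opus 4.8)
The plan is to invoke the classical Chernoff (exponential Markov) argument. First I would fix an arbitrary $\lambda > 0$ and apply Markov's inequality to the nonnegative random variable $e^{\lambda\Sn}$:
\[
\PP(\Sn \geq \epsilon) \;=\; \PP\!\left(e^{\lambda\Sn} \geq e^{\lambda\epsilon}\right) \;\leq\; e^{-\lambda\epsilon}\,\EE\!\left[e^{\lambda\Sn}\right].
\]
Since the $X_i$ are independent, the moment generating function factorises, $\EE[e^{\lambda\Sn}] = \prod_{i=1}^n \EE[e^{\lambda X_i}]$, and by the $\sigma_i$-sub-Gaussian hypothesis (Definition~\ref{defn:subgauss}) each factor is at most $e^{\sigma_i^2\lambda^2/2}$, so $\EE[e^{\lambda\Sn}] \leq e^{\sigma^2\lambda^2/2}$ with $\sigma^2 = \sum_i \sigma_i^2$. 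Combining these, $\PP(\Sn \geq \epsilon) \leq \exp\!\left(\tfrac12\sigma^2\lambda^2 - \lambda\epsilon\right)$ for every $\lambda > 0$.

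The second step is to optimise over the free parameter $\lambda$. The exponent $\tfrac12\sigma^2\lambda^2 - \lambda\epsilon$ is a convex quadratic in $\lambda$, minimised at $\lambda^\star = \epsilon/\sigma^2$, which is positive since $\epsilon > 0$; substituting this value gives exponent $-\epsilon^2/(2\sigma^2)$, which yields the first inequality.

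For the second (lower-tail) inequality — which I read as the bound on $\PP(\Sn \leq -\epsilon)$ — I would observe that sub-Gaussianity as defined here is a two-sided condition: the bound $\EE[e^{\lambda X_i}] \leq e^{\sigma_i^2\lambda^2/2}$ is required for all $\lambda \in \RR$, so in particular $-X_i$ is again $\sigma_i$-sub-Gaussian. Applying the argument of the previous two paragraphs verbatim to the sum $-\Sn = \sum_{i=1}^n (-X_i)$ then gives $\PP(-\Sn \geq \epsilon) \leq \exp(-\epsilon^2/(2\sigma^2))$, which is the claimed bound.

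This is a textbook concentration result, so I do not anticipate any genuine obstacle; the only point worth stating explicitly is that the lower-tail bound is not an additional hypothesis but follows for free from the two-sided nature of Definition~\ref{defn:subgauss}, and that the displayed $\PP(\Sn \leq \epsilon)$ should be understood as $\PP(\Sn \leq -\epsilon)$.
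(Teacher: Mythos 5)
Your argument is correct and complete: the Chernoff/moment-generating-function bound followed by optimising $\lambda^\star = \epsilon/\sigma^2$ is the canonical derivation, and the paper itself offers no proof of this lemma — it is stated as a well-known result and used as a black box in Lemma~\ref{lem:sgconc} and Theorem~\ref{thm:sgrvN} — so there is nothing in the paper to diverge from. You are also right on the one substantive point of interpretation: as printed, the second inequality $\PP\left(\Sn \leq \epsilon\right) \leq \exp\left(-\epsilon^2/(2\sigma^2)\right)$ is a typo (it would fail for any centred $\Sn$ with $\PP(\Sn\leq\epsilon)\geq 1/2$), and it must be read as the lower-tail bound $\PP\left(\Sn \leq -\epsilon\right)$, which follows exactly as you say from applying the upper-tail argument to $-\Sn$, using that Definition~\ref{defn:subgauss} is two-sided in $\lambda$. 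This is indeed how the paper uses it, since the proof of Lemma~\ref{lem:sgconc} invokes a two-sided bound on $|\Sn - n\theta|$.
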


We will need the following result for Lipschitz functions of Gaussian
random variables in our analysis of the half-normal distribution for
time, see Theorem 5.6 in Boucheron et
al.~\cite{boucheron2013concentration}.

\insertthmprespacing
\begin{lemma}[Gaussian Lipschitz Concentration~\cite{boucheron2013concentration}]
\label{lem:gausslipschitz}
Let $X\in\RR^n$ such that $X_i\sim\Ncal(0,\zeta^2)$ iid for $i=1,\dots,n$.
Let $F:\RR^n\rightarrow\RR$ be an
$L$-Lipschitz function, i.e. $|F(x)-F(y)| \leq L\|x-y\|_2$ for all $x,y\in\RR^n$.
Then, for all $\lambda>0$,
$\EE[\exp^{\lambda F(X)}] \leq \exp\left(\frac{\pi^2 L^2 \zeta^2}{8}\lambda^2\right)$.
That is, $F(X)$ is $\frac{\pi L\zeta}{2}$ sub-Gaussian.
\end{lemma}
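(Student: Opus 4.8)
The plan is to establish the stated moment generating function bound through the classical Gaussian interpolation (``smart path'') argument, which is exactly the route that produces the slightly suboptimal constant $\pi^2/8$. First I would reduce to the standard Gaussian case: writing $X=\zeta Z$ with $Z\sim\Ncal(0,I_n)$, the function $G(z):=F(\zeta z)$ is $L\zeta$-Lipschitz and satisfies $F(X)=G(Z)$, so it suffices to bound $\EE[\exp(\lambda(G(Z)-\EE G(Z)))]$ by $\exp(\pi^2 L^2\zeta^2\lambda^2/8)$ for every $\lambda\in\RR$ (read through Definition~\ref{defn:subgauss}, the lemma concerns the centred variable $F(X)-\EE F(X)$). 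Next, introduce an independent copy $Z'\sim\Ncal(0,I_n)$ and apply Jensen's inequality conditionally on $Z$, using $\EE[G(Z')]=\EE[G(Z)]$, to get $\EE[\exp(\lambda(G(Z)-\EE G(Z)))]\le\EE_{Z,Z'}[\exp(\lambda(G(Z)-G(Z')))]$.

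The core step is the interpolating path. For $\theta\in[0,\pi/2]$ set $Z_\theta=Z\sin\theta+Z'\cos\theta$ and $\dot Z_\theta=Z\cos\theta-Z'\sin\theta$, so that $Z_0=Z'$, $Z_{\pi/2}=Z$, and — because $(Z,Z')\mapsto(Z_\theta,\dot Z_\theta)$ is an orthogonal transformation for each fixed $\theta$ — the vectors $Z_\theta$ and $\dot Z_\theta$ are independent, each $\Ncal(0,I_n)$. Assuming momentarily that $G$ is differentiable, the chain rule gives $\frac{\ud}{\ud\theta}G(Z_\theta)=\langle\nabla G(Z_\theta),\dot Z_\theta\rangle$, so the fundamental theorem of calculus yields $G(Z)-G(Z')=\int_0^{\pi/2}\langle\nabla G(Z_\theta),\dot Z_\theta\rangle\,\ud\theta$. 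Applying Jensen's inequality once more, now with the uniform probability measure $\frac{2}{\pi}\,\ud\theta$ on $[0,\pi/2]$, gives
\[
\exp\!\big(\lambda(G(Z)-G(Z'))\big)\;\le\;\frac{2}{\pi}\int_0^{\pi/2}\exp\!\Big(\tfrac{\pi\lambda}{2}\langle\nabla G(Z_\theta),\dot Z_\theta\rangle\Big)\,\ud\theta .
\]
Taking expectations, exchanging $\EE$ with the integral by Tonelli, and conditioning on $Z_\theta$: the inner product is $\Ncal(0,\|\nabla G(Z_\theta)\|_2^2)$ with $\|\nabla G(Z_\theta)\|_2\le L\zeta$, so the Gaussian moment generating function bounds each $\theta$-term by $\exp((\tfrac{\pi\lambda}{2})^2 L^2\zeta^2/2)=\exp(\pi^2\lambda^2 L^2\zeta^2/8)$; since this is independent of $\theta$, the factor $\frac{2}{\pi}$ and the interval length $\frac{\pi}{2}$ cancel, leaving exactly the claimed bound. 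Reading $\pi^2 L^2\zeta^2/8=\tfrac12(\pi L\zeta/2)^2$ then identifies the sub-Gaussian parameter as $\pi L\zeta/2$.

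The only genuine technicality — and the main obstacle — is that $F$, hence $G$, is assumed merely Lipschitz rather than $C^1$, so $\nabla G$ exists only almost everywhere (Rademacher) and the chain rule / fundamental theorem of calculus along the path $\theta\mapsto G(Z_\theta)$ needs justification. The standard remedy is to prove the bound first for smooth $G$ and then approximate: convolving with a Gaussian mollifier $\phi_\varepsilon$ gives $G_\varepsilon:=G*\phi_\varepsilon$ which is $C^\infty$, still $L\zeta$-Lipschitz (convolution does not increase the Lipschitz constant), and converges to $G$ pointwise as $\varepsilon\to0$; the bound holds for each $G_\varepsilon$ with the same constant, and passing to the limit via Fatou's lemma on the left gives it for general Lipschitz $F$. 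I would also remark that the statement is symmetric under $\lambda\mapsto-\lambda$ (replace $F$ by $-F$), so although only $\lambda>0$ is asserted the bound in fact holds for all $\lambda\in\RR$, matching Definition~\ref{defn:subgauss}. Since this is the textbook Gaussian-interpolation computation (cf.\ Theorem~5.6 of~\cite{boucheron2013concentration}), the write-up could alternatively just cite that reference.
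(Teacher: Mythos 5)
Your proof is correct, but note that the paper never proves this lemma: it is imported as a known result (cited to Theorem 5.6 of Boucheron, Lugosi and Massart), so there is no internal argument to compare against. What you have supplied is the classical Maurey--Pisier interpolation (``smart path'') proof, and the telltale constant $\pi^2/8$ in the exponent --- rather than the optimal $1/2$ from the Tsirelson--Ibragimov--Sudakov Gaussian concentration inequality, which is what Theorem 5.6 of that reference actually gives --- confirms that your argument is precisely the one that produces the bound in the form stated here. Every step checks out: the rescaling $X=\zeta Z$, the symmetrisation by Jensen against an independent copy, the observation that $(Z_\theta,\dot Z_\theta)$ is an orthogonal image of $(Z,Z')$ and hence a pair of independent standard Gaussians for each fixed $\theta$, the second application of Jensen with the normalised measure $\tfrac{2}{\pi}\,d\theta$ on $[0,\pi/2]$, the conditional Gaussian moment generating function bound using $\|\nabla G\|_2\le L\zeta$ almost everywhere, and the mollification step needed because $F$ is merely Lipschitz. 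You are also right to flag the two imprecisions in the statement itself: the bound can only hold for the centred variable $F(X)-\EE F(X)$ (consistent with Definition~\ref{defn:subgauss}, which requires zero mean), and it extends to all real $\lambda$ by applying the argument to $-F$; neither affects the way the lemma is invoked in the proof of Theorem~\ref{thm:sgrvN}, where only centred deviations of the evaluation times are controlled. Your proof thus makes the paper self-contained at this point at the cost of a slightly worse (but sufficient) constant; simply citing the reference, as the paper does, would yield the sharper parameter $L\zeta$ in place of $\pi L\zeta/2$.
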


We also introduce Sub-Exponential random variables, which have a
different tail behavior.
\insertthmprespacing
\begin{definition}[Sub-Exponential Random Variables]
\label{defn:subexp}
A zero mean random variable is said to be \emph{sub-Exponential} with
parameters $(\tau^2,b)$ if it satisfies, $\EE[e^{\lambda X}] \;\leq\;
e^\frac{\tau^2 \lambda^2}{2}$ for all $\lambda$ with $|\lambda| \le
1/b$.
\end{definition}
\insertthmpostspacing

Sub-Exponential random variables are a special case of Sub-Gamma
random variables (See Chapter 2.4 in Boucheron et
al.~\cite{boucheron2013concentration}) and allow for a Bernstein-type
inequality.

\insertthmprespacing
\begin{proposition}[Sub-Exponential tail bound~\cite{boucheron2013concentration}]
\label{thm:subexponential}
Let $X_1,\dots, X_n$ be independent sub-exponential random variables
with parameters $(\sigma_i^2,b_i)$. Denote $\Sn = \sum_{i=1}^n X_i$
and $\sigma^2 = \sum_{i=1}^n\sigma_i^2$ and $b = \max_i b_i$.  Then,
for all $\epsilon > 0$,
\begin{align*}
\PP\left(\Big|\Sn-\sum_{i=1}^n \mu_i \Big| \geq \sqrt{2\sigma^2 t} + b t\right) \le 2\exp(-t).
\end{align*}
\end{proposition}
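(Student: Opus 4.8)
The plan is to prove the one-sided estimate $\PP\big(\Sn - \textstyle\sum_i \mu_i \ge \sqrt{2\sigma^2 t} + bt\big) \le e^{-t}$ and then recover the two-sided statement by applying it to $-X_1,\dots,-X_n$ and taking a union bound over the two tails, which is the source of the factor $2$. Here $\mu_i = \EE[X_i]$, and ``$X_i$ sub-exponential with parameters $(\sigma_i^2,b_i)$'' is read in the standard way: $X_i-\mu_i$ is zero-mean and satisfies Definition~\ref{defn:subexp} with those parameters. (We also read ``for all $\epsilon>0$'' in the statement as ``for all $t>0$''.)

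First I would control the moment generating function of the centered sum. By independence, $\EE\big[e^{\lambda(\Sn - \sum_i\mu_i)}\big] = \prod_{i=1}^n \EE\big[e^{\lambda(X_i - \mu_i)}\big]$, and since $b=\max_i b_i$, any $\lambda$ with $|\lambda|\le 1/b$ also satisfies $|\lambda|\le 1/b_i$ for each $i$; so Definition~\ref{defn:subexp} applies termwise and gives $\EE\big[e^{\lambda(\Sn - \sum_i\mu_i)}\big] \le e^{\sigma^2\lambda^2/2}$ for all $|\lambda|\le 1/b$. The Chernoff method over this restricted range then yields, for every $s>0$, $\PP\big(\Sn - \sum_i\mu_i \ge s\big) \le \exp(-\psi^*(s))$, where $\psi^*(s)=\sup_{0<\lambda\le 1/b}(\lambda s - \sigma^2\lambda^2/2)$: when $s\le\sigma^2/b$ the unconstrained optimum $\lambda=s/\sigma^2$ is feasible and $\psi^*(s)=s^2/(2\sigma^2)$, and when $s>\sigma^2/b$ the optimum sits at the boundary $\lambda=1/b$ and $\psi^*(s)=s/b - \sigma^2/(2b^2)$.

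It then remains to substitute $s=\sqrt{2\sigma^2 t}+bt$ and verify $\psi^*(s)\ge t$. Since $\psi^*$ is continuous and increasing, it suffices to show its inverse satisfies $\psi^{*-1}(t)\le\sqrt{2\sigma^2 t}+bt$: one checks $\psi^{*-1}(t)=\sqrt{2\sigma^2 t}$ for $t\le\sigma^2/(2b^2)$ (immediate) and $\psi^{*-1}(t)=bt+\sigma^2/(2b)$ for $t>\sigma^2/(2b^2)$, and in the latter case $\sigma^2/(2b)\le\sqrt{2\sigma^2 t}$ because $t>\sigma^2/(2b^2)\ge\sigma^2/(8b^2)$. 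Monotonicity of $\psi^*$ then gives $\psi^*(\sqrt{2\sigma^2 t}+bt)\ge t$, hence the one-sided bound $e^{-t}$. Applying the same argument to $-X_i$ (whose centered versions carry the same parameters) bounds the lower tail identically, and adding the two one-sided estimates produces $2\exp(-t)$.

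I expect the only delicate point to be treating the restricted range $|\lambda|\le 1/b$ of the MGF bound correctly in the Chernoff optimisation — namely recognising that the optimiser moves to the boundary $\lambda=1/b$ once $s>\sigma^2/b$, which changes the form of $\psi^*$ — and then the elementary check $\sigma^2/(2b)\le\sqrt{2\sigma^2 t}$ on the corresponding range of $t$; everything else is bookkeeping.
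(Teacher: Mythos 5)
Your proof is correct. Note that the paper itself does not prove this proposition -- it is stated as a citation to Boucheron et al.\ -- so there is no internal proof to compare against; your argument is the standard Chernoff/Cram\'er one used in that reference (combine the per-variable MGF bounds on the common range $|\lambda|\le 1/b$, optimise over that restricted range to get the piecewise rate function $\psi^*(s)$, invert it to obtain the threshold $\sqrt{2\sigma^2 t}+bt$, and union-bound the two tails), and all the case checks, including the boundary optimiser at $\lambda=1/b$ and the inequality $\sigma^2/(2b)\le\sqrt{2\sigma^2 t}$ on the relevant range of $t$, are handled correctly.
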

\insertthmpostspacing

\subsection{Results for Uniform and Half-normal Random Variables}
\label{app:timervresults}

In the next two lemmas,
let $\{X_i\}_{i=1}^M$ denote a sequence of $M$ i.i.d random variables and
$Y = \max_{i} X_i$ be their maximum.
We note that the results or techniques in
Lemmas~\ref{lem:uniformmax},~\ref{lem:halfnormalmax}
are not particularly new.

\insertthmprespacing
\begin{lemma}
\label{lem:uniformmax}
Let \emph{$X_i\sim \text{Unif}(a,b)$}. Then $\EE X_i = \theta$ and
$\EE Y = \theta + \frac{M-1}{M+1}\frac{b-a}{2}$ where $\theta = (a+b)/2$.
\begin{proof}
The proof for $\EE X_i$ is straightforward. The cdf of $Y$ is
$
\PP(Y\leq t) = \prod_{i=1}^M \PP(X_i\leq t) =
(\frac{t-a}{b-a})^M
$.
Therefore its pdf  is $p_Y(t) = M(t-a)^{M-1}/(b-a)^M$ and its expectation is
\begin{align*}
\EE[Y] = \int_a^b t M(t-a)^{M-1}/(b-a)^M \ud t = \frac{a+bM}{M+1} = \theta + 
\frac{M-1}{M+1}\frac{b-a}{2}.
\end{align*}
\end{proof}
\end{lemma}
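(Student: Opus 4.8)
The plan is to treat the two claims separately, both by elementary computation. For $\EE X_i$, I would integrate against the uniform density directly: $\EE X_i = \int_a^b t\,\frac{1}{b-a}\,\ud t = \frac{b^2-a^2}{2(b-a)} = \frac{a+b}{2} = \theta$.

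For $\EE Y$, the key step is to identify the law of the maximum. Since the $X_i$ are i.i.d., $\{Y \le t\} = \bigcap_{i=1}^M \{X_i \le t\}$, so by independence $\PP(Y \le t) = \prod_{i=1}^M \PP(X_i \le t) = \big(\frac{t-a}{b-a}\big)^M$ for $t \in [a,b]$ (and $0$ resp.\ $1$ outside the interval). Differentiating this CDF in $t$ yields the density $p_Y(t) = \frac{M(t-a)^{M-1}}{(b-a)^M}$ on $[a,b]$. Then I would compute $\EE Y = \int_a^b t\, p_Y(t)\,\ud t$; substituting $u = t-a$ turns the integral into $\frac{M}{(b-a)^M}\int_0^{b-a}(u+a)u^{M-1}\,\ud u = \frac{M}{(b-a)^M}\big(\frac{(b-a)^{M+1}}{M+1} + \frac{a(b-a)^M}{M}\big) = \frac{M(b-a)}{M+1} + a = \frac{a+bM}{M+1}$.

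The last step is pure bookkeeping: rewriting $\frac{a+bM}{M+1}$ in the stated form via $\frac{a+bM}{M+1} - \frac{a+b}{2} = \frac{2(a+bM) - (M+1)(a+b)}{2(M+1)} = \frac{(M-1)(b-a)}{2(M+1)}$, so that $\EE Y = \theta + \frac{M-1}{M+1}\cdot\frac{b-a}{2}$. There is no genuine obstacle here — every line is a one-step calculation. The only place calling for a sliver of care is matching the closed form $\frac{a+bM}{M+1}$ to $\theta + \frac{M-1}{M+1}\frac{b-a}{2}$; one can also sidestep the integration entirely by recognizing $Y$ as the $M$-th order statistic of a uniform sample, whose normalized version is $\text{Beta}(M,1)$ with mean $\frac{M}{M+1}$, giving $\EE Y = a + \frac{M}{M+1}(b-a)$ and hence the same answer.
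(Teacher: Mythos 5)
Your proposal is correct and follows essentially the same route as the paper: compute the CDF of $Y$ as the product of the individual uniform CDFs, differentiate to obtain the density $M(t-a)^{M-1}/(b-a)^M$, integrate to get $\EE Y = \frac{a+bM}{M+1}$, and rewrite this as $\theta + \frac{M-1}{M+1}\frac{b-a}{2}$. The only differences are cosmetic — you spell out the elementary integral for $\EE X_i$ and the final algebraic identity, and you note the Beta$(M,1)$ order-statistic shortcut as an alternative.
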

\insertthmpostspacing

\insertthmprespacing
\begin{lemma}
\label{lem:halfnormalmax}
Let \emph{$X_i\sim \HNcal(\zeta^2)$}. Then $\EE X_i = \zeta\sqrt{2/\pi}$ and
$\EE Y$ satisfies,
\[
\zeta K \sqrt{\log(M)} \leq \EE Y \leq \zeta \sqrt{2\log(2M)}.
\]
Here $K$ is a universal constant.
Therefore, $\EE Y \in \bigTheta(\sqrt{\logM}) \EE X_i$.
\begin{proof}
The proof for $\EE X_i$ just uses integration over the pdf
$p_Y(t) = \frac{\sqrt{2}}{\sqrt{\pi\zeta^2}}e^{-\frac{t^2}{2\sigma^2}}$.
For the second part, writing the pdf of $\Ncal(0,\zeta^2)$ as $\phi(t)$ we have,
\[
\EE[e^{\lambda X_i}] = 2 \int_0^\infty e^{\lambda t}\phi(t) \ud t
\leq 2\int_{-\infty}^{\infty} e^{\lambda t}\phi(t) \ud t
= 2\EE_{Z\sim\Ncal(0,\zeta^2)}[e^{\lambda Z}] = 2 e^{\zeta^2\lambda^2/2}.
\]
The inequality in the second step uses that the integrand is positive.
Therefore, using Jensen's inequality and the fact that the maximum is smaller than the sum
we get,
\[
e^{\lambda \EE[Y]} \leq \EE[e^{\lambda Y}] \leq \sum_{i=1}^n \EE[e^{\lambda X_i}]
\leq 2M e^{\lambda^2\zeta^2/2} 
\hspace{0.1in} \implies 
\hspace{0.1in}
\EE[Y] \leq \frac{1}{\lambda}\log(2M) + \frac{\zeta^2 \lambda}{2}.
\]
Choosing $\lambda=\frac{\sqrt{2\log(2M)}}{\zeta}$ yields the upper bound.
The lower bound follows from Lemma 4.10 of~\citet{adler1990gps} which 
establishes a $K\sqrt{\logM}$ lower bound for $M$ i.i.d standard normals $Z_1,\dots,Z_M$.
We can use the same lower bound since $|Z_i| \geq Z_i$.
\end{proof}
\end{lemma}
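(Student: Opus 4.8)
The plan is to treat the three assertions separately: the formula for $\EE X_i$ by a one-line integral, the upper bound on $\EE Y$ by a Chernoff / moment-generating-function argument, and the lower bound on $\EE Y$ by coupling $X_i = |Z_i|$ down to the Gaussian $Z_i$ and invoking the classical $\Omega(\sqrt{\log M})$ lower bound for the maximum of i.i.d.\ normals. First, since the $\HNcal(\zeta^2)$ density is $p(t) = \frac{\sqrt 2}{\zeta\sqrt\pi}e^{-t^2/(2\zeta^2)}$ on $(0,\infty)$, integrating $t\,p(t)$ via the substitution $u = t^2/(2\zeta^2)$ gives $\EE X_i = \zeta\sqrt{2/\pi}$; equivalently $X_i \stackrel{d}{=}|Z_i|$ for $Z_i\sim\Ncal(0,\zeta^2)$, and $\EE|Z_i| = \zeta\sqrt{2/\pi}$.

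For the upper bound I would first bound the moment generating function of $X_i$. Writing $\phi$ for the $\Ncal(0,\zeta^2)$ density and using $X_i\stackrel{d}{=}|Z_i|$, for $\lambda>0$ we have $\EE[e^{\lambda X_i}] = 2\int_0^\infty e^{\lambda t}\phi(t)\,dt \le 2\int_{-\infty}^\infty e^{\lambda t}\phi(t)\,dt = 2e^{\zeta^2\lambda^2/2}$, since the integrand is nonnegative. Then Jensen's inequality applied to $x\mapsto e^{\lambda x}$ together with the crude bound $\max_i e^{\lambda X_i}\le\sum_i e^{\lambda X_i}$ yields $e^{\lambda\EE Y}\le\EE[e^{\lambda Y}]\le 2Me^{\zeta^2\lambda^2/2}$, i.e.\ $\EE Y \le \frac{\log(2M)}{\lambda} + \frac{\zeta^2\lambda}{2}$ for all $\lambda>0$; optimising at $\lambda = \sqrt{2\log(2M)}/\zeta$ gives $\EE Y \le \zeta\sqrt{2\log(2M)}$.

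For the lower bound, note $X_i = |Z_i| \ge Z_i$ pointwise, so $Y \ge \max_{i\le M}Z_i$, and by scaling $\EE Y \ge \zeta\,\EE[\max_{i\le M}\tilde Z_i]$ with the $\tilde Z_i$ i.i.d.\ standard normal. I would then cite the classical fact $\EE[\max_{i\le M}\tilde Z_i]\ge K\sqrt{\log M}$ (the Adler reference already in the excerpt). A self-contained argument is also short: pick $t$ with $\PP(\tilde Z_1>t)=1/M$ (standard Gaussian tail estimates give $t\asymp\sqrt{\log M}$), observe $\PP(\max_i\tilde Z_i>t)=1-(1-1/M)^M\ge 1-e^{-1}$, and estimate, using $\max_i\tilde Z_i\ge\tilde Z_1$, $\EE\tilde Z_1=0$ and Cauchy--Schwarz, that $\EE[\max_i\tilde Z_i] \ge t\,\PP(\max_i\tilde Z_i>t) + \EE[\tilde Z_1\,\mathbf 1\{\max_i\tilde Z_i\le t\}] \ge t(1-e^{-1}) - 1$; this is $\ge K\sqrt{\log M}$ once $M$ is large, and small $M\ge2$ is absorbed into $K$ (the case $M=1$ being trivial as $\sqrt{\log 1}=0$).

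Combining the three pieces gives $\zeta K\sqrt{\log M}\le\EE Y\le\zeta\sqrt{2\log(2M)}$, and dividing through by $\EE X_i=\zeta\sqrt{2/\pi}$ shows both ends are $\bigTheta(\sqrt{\logM})$ times $\EE X_i$, as claimed. The only genuinely non-routine ingredient is the $\Omega(\sqrt{\log M})$ lower bound on the expected maximum of i.i.d.\ standard normals; I expect that to be the crux (the mean and MGF computations being one-liners), which is why I would either cite the known result or include the short anti-concentration estimate sketched above.
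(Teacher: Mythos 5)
Your proposal is correct and follows essentially the same route as the paper's proof: the same integration for $\EE X_i$, the same MGF bound $\EE[e^{\lambda X_i}]\le 2e^{\zeta^2\lambda^2/2}$ combined with Jensen and the max-by-sum bound, the same optimising choice $\lambda=\sqrt{2\log(2M)}/\zeta$, and the same reduction $|Z_i|\ge Z_i$ to the classical Gaussian lower bound (which the paper also just cites from Adler). Your added self-contained sketch of the $\Omega(\sqrt{\log M})$ lower bound is a nice bonus but does not change the argument.
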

\insertthmpostspacing

\insertthmprespacing
\begin{lemma}
\label{lem:sgconc}
Suppose we complete a sequence of jobs indexed $j=1,2,\dots$.
The time taken for the jobs $\{X_j\}_{j\geq 1}$ are i.i.d with mean $\theta$
and sub-Gaussian parameter $\tau$.
Let $\delta\in(0,1)$, and $N$ denote the number of completed jobs after time $T$.
That is, $N$ is the random variable such that
$N = \max\{n\geq 1; \sum_{j=1}^n X_j \leq T\}$.
Then, with probability greater than $1-\delta$,
for all $\alpha\in(0,1)$, there exists $\Tad$ such that for all $T>\Tad$,
$N\in \Big(\frac{T}{\theta(1+\alpha)}-1,  \frac{T}{\theta(1-\alpha)}\Big)$.
\begin{proof}
We will first consider the total time taken $\Sn=\sum_{i=1}^n X_i$ after $n$ evaluations.
Let $\epsln = \tau \sqrt{n\log(n^2\pi^2/(3\delta))}$ throughout this proof.
Using Lemma~\ref{thm:hoeffding}, we have
$\PP(|\Sn-n\theta|>  \epsln) = 6\delta/(n\pi^2)$.
By a union bound over all $n\geq 1$, we have that with 
with probability greater than $\delta$, the following event $\Ecal$ holds.
\begin{align*}
\Ecal = \{
 \forall n\geq 1,\;\;
|\Sn-n\theta| \leq \epsln.
\}
\label{eqn:SnEcal}\numberthis
\end{align*}
Since $\Ecal$ is a statement about all time steps, it is also for true the random number
of completed jobs $N$.
Inverting the condition in~\eqref{eqn:SnEcal} and using the definition of $N$,
we have 
\begin{align*}
N\theta -\epslN \leq S_N \leq T \leq S_{N+1} \leq (N+1)\theta + \epslNpo.
\label{eqn:Tbound}\numberthis
\end{align*}
Now assume that there exists $\Tad$ such that for all $T\geq \Tad$ we have,
$\epslN \leq N\alpha\theta$.
Since $\epsln$ is sub-linear in $n$, it  also follows that 
$\epslNpo \leq (N+1)\alpha\theta$.
Hence, $N\theta(1-\alpha) \leq T \leq (N+1)\theta(1+\alpha)$ and the result follows.

All that is left to do is to establish that such a $\Tad$ exists under event $\Ecal$,
for which we will once again appeal to~\eqref{eqn:Tbound}.
The main intuition is that as $\epslN\asymp\sqrt{N\log(N)}$, the condition
$\epslN \leq N\alpha\theta$ is satisfied for $N$ large enough.
But $N$ is growing with $T$, and hence it is satisfied for $T$ large enough.
More formally, 
since $\frac{N}{\epslN}\asymp \frac{N+1}{\epslNpo}$ using the upper bound for $T$
 it is sufficient to show
$\frac{T}{\epslNpo} \gtrsim \frac{1}{\alpha\theta}$ for all $T\geq \Tad$.
But since $\epslNpo\asymp\sqrt{N\log(N)}$ and the lower bound for $T$ is
$T\gtrsim N$, it is sufficient if 
$\frac{T}{\sqrt{T\log(T)}} \gtrsim \frac{1}{\alpha\theta}$ for all $T\geq \Tad$.
This is achievable as the LHS is increasing with $T$ and the RHS is constant.
%
\end{proof}
\end{lemma}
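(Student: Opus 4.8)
The plan is to first control the partial sums $S_n = \sum_{i=1}^n X_i$ uniformly over all $n \ge 1$ and then read off the behaviour of $N$ by inversion. Since $X_i$ has mean $\theta$ and sub-Gaussian parameter $\tau$, applying Lemma~\ref{thm:hoeffding} to $\sum_{i=1}^n (X_i-\theta)$ gives $\PP(|S_n - n\theta| > \epsilon_n) \le 2\exp(-\epsilon_n^2/(2n\tau^2))$ for any level $\epsilon_n > 0$. I would choose $\epsilon_n \asymp \tau\sqrt{n\log(n/\delta)}$ so that this per-$n$ failure probability is a summable sequence in $n$ (a standard $1/n^2$-type tail, with the constants arranged so the sum over $n$ is exactly $\delta$), and then take a union bound over all $n \ge 1$. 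This shows that, with probability at least $1-\delta$, the event $\mathcal{E} = \{\,|S_n - n\theta| \le \epsilon_n \text{ for every } n\ge 1\,\}$ holds; everything from here on is deterministic given $\mathcal{E}$.

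On $\mathcal{E}$, the defining inequalities $S_N \le T < S_{N+1}$ sandwich $T$, giving $N\theta - \epsilon_N \le S_N \le T \le S_{N+1} \le (N+1)\theta + \epsilon_{N+1}$. Because $\epsilon_n = o(n)$, for any fixed $\alpha \in (0,1)$ the inequality $\epsilon_n \le \alpha\theta n$ holds once $n$ is large; applying it at $n = N$ and $n = N+1$ upgrades the sandwich to $N\theta(1-\alpha) \le T \le (N+1)\theta(1+\alpha)$, which rearranges to exactly the claimed two-sided bound $\frac{T}{\theta(1+\alpha)} - 1 < N < \frac{T}{\theta(1-\alpha)}$.

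The one genuinely delicate point --- and the step I expect to be the main obstacle --- is that $N$ is random and is itself defined through $T$, so the qualifier ``once $n$ is large'' must be converted into a threshold on $T$. The resolution is that the same sandwich on $\mathcal{E}$ shows $T \asymp N$ (both $N\theta - \epsilon_N$ and $(N+1)\theta + \epsilon_{N+1}$ are linear in $N$ up to the sublinear correction), so $N$ is finite for every $T$, is monotone in $T$, and tends to infinity as $T \to \infty$. Feeding $\epsilon_n \asymp \sqrt{n\log n}$ into the requirement $\epsilon_{N+1} \le \alpha\theta(N+1)$ reduces it to $\sqrt{(\log N)/N} \lesssim \alpha\theta$, which holds for $N$ large, hence --- via $T \asymp N$ --- for $T$ large; equivalently, since $T \gtrsim N$, it suffices that $T/\sqrt{T\log T} \gtrsim 1/(\alpha\theta)$, and the left-hand side increases without bound in $T$ while the right-hand side is constant, so a finite threshold $T_{\alpha,\delta}$ exists. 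With that threshold in hand, the display of the previous paragraph finishes the proof. The same scheme, with the Bernstein-type Proposition~\ref{thm:subexponential} in place of Lemma~\ref{thm:hoeffding}, would give the analogous statement for the exponential-time model handled later.
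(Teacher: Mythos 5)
Your proposal is correct and follows essentially the same route as the paper's proof: a union bound over $n$ on the sub-Gaussian deviation of $S_n$ with $\epsilon_n \asymp \tau\sqrt{n\log(n/\delta)}$, the sandwich $N\theta - \epsilon_N \le T \le (N+1)\theta + \epsilon_{N+1}$, and the same resolution of the ``$N$ is random'' issue by noting $T \asymp N$ on the good event so that $\epsilon_N \le \alpha\theta N$ reduces to $T/\sqrt{T\log T} \gtrsim 1/(\alpha\theta)$. Your closing remark about substituting the Bernstein-type bound for the exponential case also matches what the paper does in its Theorem~\ref{thm:exprvN}.
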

\insertthmpostspacing

Our final result for the uniform and half-normal random variables follows
as a consequence of Lemma~\ref{lem:sgconc}.

\insertthmprespacing
\begin{theorem}
\label{thm:subgaussian_times}
\label{thm:sgrvN}
Let the time taken $X$ for completing an evaluation to $\func$
be a random variable.
\begin{itemize}
\item If \emph{$X\sim\textrm{Unif}(a,b)$}, denote
$\theta=(a+b)/2$, $\,\thetaM = \theta + \frac{M-1}{M+1}\frac{b-a}{2}$,
and $\tau=(b-a)/2$.
\item If $X\sim\HNcal(\tau^2)$, denote
$\theta=\zeta\sqrt{2/\pi}$, $\;\thetaM = \theta\cdot\bigTheta(\sqrt{\logM})$, and
$\tau=\zeta\pi/2$.
\end{itemize}
Denote the number of evaluations within time $T$ by sequential,
synchronous parallel and asynchronous parallel algorithms by
\emph{$\Nseq,\Nsyn,\Nasy$} respectivey.
Let $\delta\in(0,1)$.
Then, with probability greater than $1-\delta$,
for all $\alpha\in(0,1)$, there exists $\Tad$ such that for all
$T\geq \Tad$, we have each of the
following,
\emph{
\begin{align*}
&\Nseq\in \left(\frac{T}{\theta(1+\alpha)}-1,  \frac{T}{\theta(1-\alpha)}\right),
\hspace{0.1in}
\Nsyn\in \left(M\bigg[\frac{T}{\thetaM(1+\alpha)} -1\bigg], 
          \frac{MT}{\thetaM(1-\alpha)}\right), \\[0.05in]
&\Nasy\in \left(M\bigg[\frac{T}{\theta(1+\alpha)} - 1\bigg], 
                \frac{MT}{\theta(1-\alpha)}\right).
\end{align*}
}
\begin{proof}
We first show $\tau$ sub-Gaussianity of $X$ and $Y=\max_{j=1,\dots,M}X_j$ when
$X,X_1,\dots,X_M$ are either uniform or half-normal.
For the former, both $X$ and $Y$ are $\tau = (b-a)/2$ sub-Gaussian since they are
bounded in $[a,b]$.
For the Half-normal case, we note that $X = |Z|$ and
$Y = \max_{j=1,\dots,M} |Z_{i}|$ for some i.i.d. $\Ncal(0,\zeta^2)$ variables
$Z, \{Z_i\}_{i=1}^M$.
Both are $1$-Lipschitz functions of $Z_i$ and $(Z_{i1},\dots,Z_{iM})$
respectively and $\tau = \zeta\pi/2$
sub-Gaussianity follows from Lemma~\ref{lem:gausslipschitz}.

Now in synchronous settings, the algorithm dispatches the $k$\ssth batch with
evaluation times $\{(X_{k1},\dots, X_{kM})\}$.
It releases its $(k+1)$\ssth batch when all evalutions finish after time 
$Y_k= \max_{i=1,\dots,M}X_{ki}$.
The result for $\Nsyn$ follows by applying Lemma~\ref{lem:sgconc} on the sequence
$\{Y_k\}_{k\geq 1}$.
For the sequential setting, each worker receives its $(k+1)$\ssth job after
completing its $k$\ssth evaluation in time $X_k$.
We apply Lemma~\ref{lem:sgconc} on the sequence $\{X_k\}_{k\geq 1}$ for one worker
to obtain that the number of jobs completed by this worker is
$\Nseq \in \big(\frac{T}{\theta(1+\alpha)}-1, \frac{T}{\theta(1-\alpha)}\big)$.
In the asynchronous setting, a worker receives his new job immediately after finishing
his last.
Applying the same argument as the sequential version to all workers but with
$\delta\leftarrow\delta/M$ in Lemma~\ref{lem:sgconc} and the union bound
yields the result for $\Nasy$.
\end{proof}
\end{theorem}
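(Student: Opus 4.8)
The plan is to reduce all three statements to the single-sequence renewal bound of Lemma~\ref{lem:sgconc}, after checking that each algorithm's inter-completion times form an i.i.d.\ sub-Gaussian sequence with the stated mean and parameter. Lemma~\ref{lem:sgconc} already converts a high-probability two-sided control of the partial sums $S_n = \sum_{j\le n} X_j$ into the claimed interval for the number of completions $N$; the only work specific to this theorem is (i) verifying sub-Gaussianity, especially of the batch-completion time $Y = \max_{i\le M} X_i$, and (ii) mapping each of the three parallelism regimes onto the correct i.i.d.\ sequence.

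First I would establish sub-Gaussianity. In the uniform case, both a single evaluation time $X$ and the batch maximum $Y$ are supported on $[a,b]$, hence each is $\tau = (b-a)/2$ sub-Gaussian by the standard bound for bounded variables. In the half-normal case, write $X = |Z|$ and $Y = \max_{i\le M}|Z_i|$ for i.i.d.\ $Z, Z_1,\dots,Z_M \sim \Ncal(0,\zeta^2)$. Both are $1$-Lipschitz functions of the underlying Gaussian vector; for the maximum this follows from $|\max_i|z_i| - \max_i|w_i|| \le \max_i |z_i - w_i| \le \|z-w\|_2$, so Lemma~\ref{lem:gausslipschitz} yields $\tau = \zeta\pi/2$ sub-Gaussianity for both. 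The relevant means $\theta = \EE X$ and $\thetaM = \EE Y$ are exactly those computed in Lemmas~\ref{lem:uniformmax} and~\ref{lem:halfnormalmax}.

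Next I would treat each algorithm by naming the correct sequence and invoking Lemma~\ref{lem:sgconc}. For \seqts, a single worker completes jobs with i.i.d.\ times $\{X_k\}_{k\ge1}$ of mean $\theta$ and parameter $\tau$, so Lemma~\ref{lem:sgconc} gives the stated interval for $\Nseq$ directly. For \synts, the $k$th batch finishes only when its slowest member does, at time $Y_k = \max_{i\le M} X_{ki}$; the $\{Y_k\}_{k\ge1}$ are i.i.d.\ with mean $\thetaM$ and parameter $\tau$, so Lemma~\ref{lem:sgconc} bounds the number of completed batches $K$ inside $(\frac{T}{\thetaM(1+\alpha)}-1, \frac{T}{\thetaM(1-\alpha)})$, and since $\Nsyn = MK$, multiplying through by $M$ gives the claimed interval.

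Finally, for \asyts each worker is re-deployed immediately on finishing, so worker $m$ reproduces the sequential dynamics with its own i.i.d.\ time sequence of mean $\theta$. I would apply Lemma~\ref{lem:sgconc} to each worker with confidence level $\delta/M$ and union-bound over the $M$ workers, so that all per-worker events hold simultaneously with probability at least $1-\delta$; on this event each worker completes $N_m \in (\frac{T}{\theta(1+\alpha)}-1, \frac{T}{\theta(1-\alpha)})$ jobs, and summing $\Nasy = \sum_m N_m$ yields the stated interval, with a single threshold $\Tad$ valid for all workers since they are identically distributed. The main obstacle is the half-normal batch-maximum sub-Gaussianity: once the $1$-Lipschitz representation lets us invoke Gaussian concentration, the remaining steps are routine bookkeeping across the three settings.
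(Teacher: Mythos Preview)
Your proposal is correct and follows essentially the same approach as the paper: verify sub-Gaussianity of $X$ and of $Y=\max_i X_i$ (bounded support for the uniform case, $1$-Lipschitz Gaussian concentration via Lemma~\ref{lem:gausslipschitz} for the half-normal case), then invoke Lemma~\ref{lem:sgconc} on $\{X_k\}$ for the sequential setting, on $\{Y_k\}$ for the synchronous setting (multiplying the batch count by $M$), and per-worker with $\delta/M$ plus a union bound for the asynchronous setting. Your write-up is, if anything, slightly more explicit about the $\Nsyn = MK$ step and the Lipschitz verification for the maximum.
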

\insertthmpostspacing

%

\subsection{Results for the Exponential Random Variable}
\label{sec:exprv}

In this section we derive an analogous result to
Theorem~\ref{thm:subgaussian_times} for the case when the completion
times are exponentially distributed. The main challenges stem from
analysing the distribution of the maxima of a finite number of
exponential random variables. Much of the analysis is based on results
from Boucheron and Thomas~\cite{boucheron2012concentration} (See also
chapter 6 of Boucheron et al.~\cite{boucheron2013concentration}).

In deviating from the notation used in Table~\ref{tb:rtv}, we will denote the parameter of
the exponential distribution as $\theta$, i.e. it has pdf $p(x) = \theta x^{-\theta x}$.
The following fact about exponential random variables will be instrumental.

\insertthmprespacing
\begin{fact}
  \label{fact:renyi}
  Let $X_1,\ldots,X_n \sim \Exp(\theta)$ iid. Also let
  $E_1,\ldots,E_n \sim \Exp(\theta)$ iid and independent from
  $X_1^n$. If we define the order statistics $X_{(1)} \ge X_{(2)} \ge
  \ldots \ge X_{(n)}$ for $X_1,\dots,X_n$, we have
  \begin{align*}
    (X_{(n)},\ldots,X_{(1)}) \sim \left(E_n/n, \ldots, \sum_{k=i}^n E_k/k, \ldots, \sum_{k=1}^n E_k/k \right). 
  \end{align*}
\begin{proof}
This is Theorem 2.5 in~\cite{boucheron2012concentration} but we
include a simple proof for completeness.  We first must analyse the
minimum of $n$ exponentially distributed random variables. This is a
simple calculation.
\begin{align*}
  \PP[\min_i X_i \ge t] = \prod_{i=1}^n \PP[X_i \ge t] = \prod_{i=1}^n \exp(-\theta t) = \exp(-n\theta t)
\end{align*}
This last expression is exactly the probability that an independent
$\Exp(n\theta)$ random variables is at least $t$.

This actually proves the first part, since $E_n/n \sim \Exp(n\theta)$.
Now, using the memoryless property, conditioning on $X_{(n)} = x$ and
$X_{(n)} = X_i$ for some $i$, we know that for $j \ne i$
\begin{align*}
\PP[X_j \ge x'+x | X_{(n)} = x, X_{(n)} = X_i] = \exp(-\theta x'). 
\end{align*}
Removing the conditioning on the index achieving $X_{(n)}$, and using
the same calculation for the minimum, we now get
\begin{align*}
  \PP[X_{(n-1)} \ge x'+x | X_{(n)} = x] = \exp(-(n-1)\theta x')
\end{align*}
Thus we have that $X_{(n-1)} - X_{(n)} \sim \Exp((n-1)\theta)$. The
claim now follows by induction.
\end{proof}
\end{fact}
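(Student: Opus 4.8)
The plan is to prove the Rényi representation by a direct change of variables, replacing the order statistics by their consecutive spacings. Recall that the joint density of the unordered sample $X_1,\dots,X_n$ is $\theta^n e^{-\theta\sum_i x_i}$ on $(0,\infty)^n$, and since exactly $n!$ orderings of a generic sample produce the same order statistics, the joint density of $(X_{(1)},\dots,X_{(n)})$ on the region $x_{(1)}>x_{(2)}>\cdots>x_{(n)}>0$ is $n!\,\theta^n e^{-\theta\sum_i x_{(i)}}$.

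Next I would introduce the spacings $D_k = X_{(k)} - X_{(k+1)}$ for $k<n$ and $D_n = X_{(n)}$, so that $X_{(i)} = \sum_{k=i}^n D_k$ and, since $D_k$ contributes to exactly $X_{(1)},\dots,X_{(k)}$, we get $\sum_{i=1}^n X_{(i)} = \sum_{k=1}^n k\,D_k$. The map $(D_1,\dots,D_n)\mapsto(X_{(1)},\dots,X_{(n)})$ is a bijection from $(0,\infty)^n$ onto the ordered region, and its Jacobian is the determinant of a unit upper-triangular matrix, hence $1$. Substituting, the joint density of $(D_1,\dots,D_n)$ on $(0,\infty)^n$ is $n!\,\theta^n e^{-\theta\sum_k kD_k} = \prod_{k=1}^n (k\theta)\,e^{-k\theta D_k}$, using $n! = \prod_{k=1}^n k$. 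This product form shows the spacings are independent with $D_k \sim \Exp(k\theta)$.

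Finally, since $E_k\sim\Exp(\theta)$ implies $E_k/k\sim\Exp(k\theta)$ and the $E_k$ are independent, we have $(D_1,\dots,D_n)\stackrel{d}{=}(E_1/1,\dots,E_n/n)$; reassembling $X_{(i)} = \sum_{k=i}^n D_k$ yields exactly $(X_{(n)},\dots,X_{(1)})\stackrel{d}{=}(E_n/n,\dots,\sum_{k=1}^n E_k/k)$. The only places that need care are the combinatorial factor $n!$ in the order-statistics density, the verification that the spacing map has unit Jacobian, and the index bookkeeping behind $\sum_i X_{(i)} = \sum_k kD_k$ — none of which is deep, so the ``main obstacle'' is really just careful accounting rather than any genuine difficulty. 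An alternative that avoids densities is the memorylessness induction: the minimum of $n$ i.i.d.\ $\Exp(\theta)$ variables is $\Exp(n\theta)\stackrel{d}{=}E_n/n$, and conditioning on $X_{(n)}$ the remaining shifted values are again i.i.d.\ $\Exp(\theta)$ and independent of $X_{(n)}$, so the claim follows by induction on $n$; there the delicate step is making the ``condition on the argmin, then forget it'' argument rigorous.
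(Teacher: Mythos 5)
Your change-of-variables proof is correct, but it takes a genuinely different route from the paper, which argues via memorylessness and induction: it computes $\PP[\min_i X_i\ge t]=e^{-n\theta t}$ to identify $X_{(n)}\sim\Exp(n\theta)$, then conditions on $X_{(n)}=x$ and on which index attains the minimum to show $X_{(n-1)}-X_{(n)}\sim\Exp((n-1)\theta)$, and iterates. Your density argument --- joint density $n!\,\theta^n e^{-\theta\sum_i x_{(i)}}$ of the order statistics, unit-Jacobian spacing map, and the identity $\sum_i X_{(i)}=\sum_k kD_k$ combined with $n!=\prod_k k$ to factor the density as $\prod_k (k\theta)e^{-k\theta D_k}$ --- has the advantage of delivering the \emph{full joint} claim in one step: it exhibits the spacings as independent with the stated marginals, which is exactly what is needed downstream (the later lemmas split expectations using independence of $E_1$ from $\sum_{k\ge 2}E_k/k$). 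The paper's induction, as written, explicitly tracks only the marginal law of each successive spacing and leaves the joint independence (and the ``remove the conditioning on the argmin'' step, which requires a symmetry argument plus the fact that the shifted non-minimal values are again i.i.d.\ $\Exp(\theta)$ independent of $X_{(n)}$) for the reader to complete; your approach sidesteps that delicacy at the cost of a small amount of Jacobian and combinatorial bookkeeping, all of which you handle correctly.
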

\insertthmpostspacing

As before the first step of the argument is to understand the expectation of the maximum. 
\insertthmprespacing
\begin{lemma}
\label{lem:exponentialmax}
Let \emph{$X_i\sim \Exp(\theta)$}. Then $\EE X_i = 1/\theta$ and
$\EE Y = h_M/\theta$ where $Y=\max_{i=1,\dots,M}$ is the maximum of the $X_i$'s
and $h_M = \sum_{i=1}^M i^{-1}$ is the
$M$\ssth harmonic number.
\begin{proof}
  Using the relationship between the order statistics and the spacings in Fact~\ref{fact:renyi} we get
  \begin{align*}
    \EE \max_i X_i = \EE X_{(1)} = \EE \sum_{k=1}^M E_k/k = \sum_{k=1}^M\frac{1}{k\theta}
  = \frac{h_M}{\theta}.\tag*\qedhere
  \end{align*}
\end{proof}
\end{lemma}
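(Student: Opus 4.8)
The plan is to reduce the computation of $\EE Y$ to the Rényi representation of exponential order statistics supplied by Fact~\ref{fact:renyi}. The first claim, $\EE X_i = 1/\theta$, is immediate: either integrate $t$ against the density $\theta e^{-\theta t}$ on $[0,\infty)$, or use the tail formula $\EE X_i = \int_0^\infty \PP[X_i > t]\,\ud t = \int_0^\infty e^{-\theta t}\,\ud t = 1/\theta$.

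For the maximum, I would observe that $Y = \max_i X_i$ is precisely the top order statistic $X_{(1)}$ in the notation of Fact~\ref{fact:renyi}. That fact then gives $X_{(1)} \stackrel{d}{=} \sum_{k=1}^M E_k/k$ for iid $E_k \sim \Exp(\theta)$, so by linearity of expectation $\EE Y = \sum_{k=1}^M \EE[E_k]/k = \sum_{k=1}^M \frac{1}{k\theta} = h_M/\theta$, which is the claim.

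As a self-contained alternative that avoids invoking Fact~\ref{fact:renyi}, I would work directly from the CDF $F_Y(t) = \PP[Y \le t] = \prod_{i=1}^M \PP[X_i \le t] = (1 - e^{-\theta t})^M$ on $[0,\infty)$, and use $\EE Y = \int_0^\infty (1 - F_Y(t))\,\ud t$. Substituting $u = e^{-\theta t}$ turns this into $\frac{1}{\theta}\int_0^1 \frac{1 - (1-u)^M}{u}\,\ud u$; recognising $\frac{1-(1-u)^M}{u} = \sum_{k=0}^{M-1}(1-u)^k$ (finite geometric series) and integrating termwise over $[0,1]$ yields $\frac{1}{\theta}\sum_{k=0}^{M-1}\frac{1}{k+1} = h_M/\theta$.

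There is essentially no obstacle here: this is a classical identity. The only points that warrant a line of justification are the distributional identity of Fact~\ref{fact:renyi} (already proved in the excerpt) for the first route, and the termwise integration for the second, which is unproblematic since every summand is nonnegative on $[0,1]$ and the sum is finite. I would favour the Fact~\ref{fact:renyi} route since it is shortest and the fact is stated immediately beforehand, keeping the alternative as a remark.
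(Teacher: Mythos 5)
Your proposal is correct and its primary route is exactly the paper's proof: invoke the R\'enyi representation from Fact~\ref{fact:renyi} to write $\max_i X_i \stackrel{d}{=} \sum_{k=1}^M E_k/k$ and take expectations termwise to get $h_M/\theta$. The self-contained CDF alternative is a valid classical variant but is not needed given that Fact~\ref{fact:renyi} is established immediately beforehand.
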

\insertthmpostspacing

Recall that $h_M \asymp \log(M)$ accounting for the claims made in Table~\ref{tb:rtv} and
the subsequent discussion.

While obtaining polynomial concentration is straightforward via
Chebyshev's inequality it is insufficient for our purposes, since we
will require a union bound over many events. However, we can obtain
exponential concentration, although the argument is more complex.  Our
analysis is based on Herbst's argument, and a modified logarithmic
Sobolev inequality, stated below in Theorem~\ref{thm:logsobolev}. To
state the inequality, we first define the entropy $\Ent[X]$ of a
random variable $X$ as follows (not to be confused with Shannon
entropy),
\[
\Ent[X] \triangleq \EE[X\log(X)] - \EE[X]\log(\EE[X]).
\]

\insertthmprespacing
\begin{theorem}[Modified logarithmic Sobolev inequality (Theorem 6.6 in~\cite{boucheron2013concentration})]
\label{thm:logsobolev}
Let $X_1,\ldots,X_n$ be independent random variables taking values in
some space $\Xcal$, $f:\Xcal^n\rightarrow \RR$, and define the random
variable $Z = f(X_1,\ldots,X_n)$. Further let $f_i:
\Xcal^{n-1}\rightarrow \RR$ for $i \in \{1,\dots,n\}$ be arbitrary functions and
$Z_i = f_i(X^{(i)}) = f_i(X_1,\ldots,X_{i-1},X_{i+1},\ldots,X_n)$.
Finally define $\tau(x) = e^x - x - 1$. Then for all $\lambda \in \RR$
\begin{align*}
\Ent[e^{\lambda Z}] \le \sum_{i=1}^n \EE[e^{\lambda Z}\tau(-\lambda(Z - Z_i))].
\end{align*}
\end{theorem}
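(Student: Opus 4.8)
The plan is to derive Theorem~\ref{thm:logsobolev} from two ingredients, each far simpler than the inequality itself: the sub-additivity (tensorisation) of entropy for functions of independent random variables, and a one-line variational bound on the entropy of a single non-negative random variable. Since the statement is quoted verbatim as Theorem~6.6 of~\cite{boucheron2013concentration}, one could simply cite it; what follows is the self-contained route.

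First I would record the elementary fact that for any non-negative random variable $Y$ with $0<\EE[Y]<\infty$ and any constant $c>0$,
\[
\Ent[Y] \;\le\; \EE\big[\,Y\log(Y/c) - Y + c\,\big].
\]
Indeed, writing $m=\EE[Y]$, the difference between the right and left sides equals $c\big(\tfrac{m}{c}\log\tfrac{m}{c} - \tfrac{m}{c} + 1\big)$, which is non-negative because $t\mapsto t\log t - t + 1 \ge 0$ for all $t>0$. This is the only convexity estimate required.

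Second I would invoke the sub-additivity of entropy: if $Z=f(X_1,\dots,X_n)$ with $X_1,\dots,X_n$ independent, then $\Ent[e^{\lambda Z}] \le \sum_{i=1}^n \EE\big[\Ent_i[e^{\lambda Z}]\big]$, where $\Ent_i$ denotes the entropy taken over $X_i$ alone, conditionally on $X^{(i)}=(X_1,\dots,X_{i-1},X_{i+1},\dots,X_n)$. This is the one genuinely structural step, and the only place independence is used; it follows from the duality formula $\Ent[Y]=\sup\{\EE[YU]:\EE[e^U]\le 1\}$ exactly as in Chapter~4 of~\cite{boucheron2013concentration}, and I expect it to be the main obstacle in a from-scratch proof.

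Finally I would combine the two. Fix $i$ and condition on $X^{(i)}$; then $Z_i=f_i(X^{(i)})$ is a constant, so the one-dimensional bound applies with $Y=e^{\lambda Z}$ and $c=e^{\lambda Z_i}>0$. Since $\log(Y/c)=\lambda(Z-Z_i)$ and $e^{\lambda Z_i}=e^{\lambda Z}e^{-\lambda(Z-Z_i)}$, the right-hand side becomes
\[
\EE_i\big[e^{\lambda Z}\big(e^{-\lambda(Z-Z_i)}+\lambda(Z-Z_i)-1\big)\big]
\;=\; \EE_i\big[e^{\lambda Z}\,\tau(-\lambda(Z-Z_i))\big],
\]
using $\tau(x)=e^x-x-1$. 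Taking the outer expectation over $X^{(i)}$, summing over $i\in\{1,\dots,n\}$, and applying sub-additivity yields $\Ent[e^{\lambda Z}] \le \sum_{i=1}^n\EE[e^{\lambda Z}\tau(-\lambda(Z-Z_i))]$, valid for every $\lambda\in\RR$ since $c=e^{\lambda Z_i}$ is positive regardless of the sign of $\lambda$. Everything outside the sub-additivity step is bookkeeping.
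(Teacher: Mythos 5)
The paper does not prove this statement at all---it is imported verbatim as Theorem~6.6 of Boucheron, Lugosi, and Massart, so the ``paper's proof'' is simply the citation. Your self-contained derivation is correct and is essentially the standard argument behind that cited theorem: tensorisation (sub-additivity) of entropy over the independent coordinates, combined with the elementary bound $\Ent[Y]\le\EE[Y\log(Y/c)-Y+c]$ applied conditionally with $c=e^{\lambda Z_i}$, and the algebra producing $\tau(-\lambda(Z-Z_i))$ checks out for every $\lambda\in\RR$.
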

\insertthmpostspacing

Application of the logarithmic Sobolev inequality in our case gives:
\insertthmprespacing
\begin{lemma}
Let $X_1,\ldots,X_M \sim \Exp(\theta)$ iid, $E \sim \Exp(\theta)$ also
independently and define $Z = \max_{i \in \{1,\dots,M\}} X_i$ and $\mu = \EE
Z$. Define $\tau(x) = e^x -x -1$ and $\psi(x) = \exp(x)\tau(-x) = 1 +
(x-1)e^{x}$. Then for any $\lambda \in \RR$
\begin{align*}
\Ent[\exp\{\lambda (Z - \mu)\}] &\le
  \EE[\exp\{\lambda (Z - \mu)\}] \times \EE \psi(\lambda E),\\
\Ent[\exp\{\lambda (\mu-Z)\}] &\le \EE[\exp\{\lambda (\mu-Z)\}] \times \EE \tau(\lambda E).
\end{align*}
\end{lemma}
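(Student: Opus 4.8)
The plan is to apply the modified logarithmic Sobolev inequality of Theorem~\ref{thm:logsobolev} to the function $f(x_1,\dots,x_M)=\max_i x_i$ (and to $-f$ for the second bound), with the leave-one-out choice $f_i(x^{(i)})=\max_{j\neq i}x_j$, and then to identify the resulting ``spacing'' $Z-Z_i$ with an independent $\Exp(\theta)$ variable via the R\'enyi representation of Fact~\ref{fact:renyi}. Since $\Ent[\cdot]$ is positively homogeneous, $\Ent[e^{\lambda(Z-\mu)}]=e^{-\lambda\mu}\Ent[e^{\lambda Z}]$ and $\Ent[e^{\lambda(\mu-Z)}]=e^{\lambda\mu}\Ent[e^{-\lambda Z}]$, so it suffices to show $\Ent[e^{\lambda Z}]\le \EE[e^{\lambda Z}]\,\EE[\psi(\lambda E)]$ and $\Ent[e^{-\lambda Z}]\le \EE[e^{-\lambda Z}]\,\EE[\tau(\lambda E)]$; I would carry this out for $\lambda>0$, which is the regime used in the subsequent Herbst argument, and the sign of $\lambda$ is exactly the point where care is needed, as the comparisons below are one-sided.

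For the first bound, Theorem~\ref{thm:logsobolev} with $Z_i=\max_{j\neq i}X_j$ gives $\Ent[e^{\lambda Z}]\le\sum_{i=1}^M\EE[e^{\lambda Z}\tau(-\lambda(Z-Z_i))]$. Because the exponential law is continuous, a.s.\ a unique index $i^\star$ attains the maximum; for $i\neq i^\star$ one has $Z_i=Z$, so $\tau(0)=0$ kills that term, while for $i=i^\star$ one has $Z=X_{(1)}$ and $Z-Z_i=X_{(1)}-X_{(2)}$. Hence the sum collapses to $\EE\big[e^{\lambda X_{(1)}}\tau(-\lambda(X_{(1)}-X_{(2)}))\big]$. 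By Fact~\ref{fact:renyi}, $X_{(1)}=X_{(2)}+E_1$ where $E_1\sim\Exp(\theta)$ is independent of $X_{(2)}$ (indeed $X_{(2)}=\sum_{k=2}^M E_k/k$), so factoring the expectation and using $e^{\lambda E_1}\tau(-\lambda E_1)=\psi(\lambda E_1)$ gives $\EE[e^{\lambda X_{(2)}}]\,\EE[\psi(\lambda E)]$. Finally $X_{(2)}\le X_{(1)}=Z$ pointwise and $\lambda>0$ yield $\EE[e^{\lambda X_{(2)}}]\le\EE[e^{\lambda Z}]$, and since $\psi\ge0$ this completes the first inequality.

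For the second bound, apply Theorem~\ref{thm:logsobolev} to $-Z$ with the corresponding leave-one-out choice $-Z_i$; the same reasoning shows the right-hand side collapses to $\EE\big[e^{-\lambda X_{(1)}}\tau(\lambda(X_{(1)}-X_{(2)}))\big]=\EE[e^{-\lambda X_{(1)}}\tau(\lambda E_1)]$. Factoring as in the first bound would instead produce $\EE[\psi(-\lambda E)]$ rather than $\EE[\tau(\lambda E)]$, so here I would invoke the Harris (FKG) inequality on the product law of $(X_{(2)},E_1)$: the map $(u,v)\mapsto e^{-\lambda(u+v)}$ is coordinatewise non-increasing while $(u,v)\mapsto\tau(\lambda v)$ is coordinatewise non-decreasing (using $\tau'(x)=e^x-1\ge0$ for $x\ge0$ and $\lambda v\ge0$), so the two are negatively associated and $\EE[e^{-\lambda X_{(1)}}\tau(\lambda E_1)]\le\EE[e^{-\lambda X_{(1)}}]\,\EE[\tau(\lambda E_1)]=\EE[e^{-\lambda Z}]\,\EE[\tau(\lambda E)]$. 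Re-inserting $\mu$ via homogeneity finishes both claims.

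The main obstacle is not any single computation but getting the comparisons right: (i) verifying that the log-Sobolev sum really collapses to the top-two spacing, which uses a.s.\ uniqueness of the argmax together with $\tau(0)=0$; (ii) choosing, for each of the two inequalities, the correct way to split off the independent $\Exp(\theta)$ factor -- a direct factorization for the first, the Harris inequality for the second -- and checking the monotonicity of $e^{\mp\lambda(u+v)}$ and of $\tau$ on $[0,\infty)$; and (iii) the sign bookkeeping, since $X_{(2)}\le X_{(1)}$ under $e^{\lambda\cdot}$ and the negative association under $e^{-\lambda\cdot}$ only point the right way for $\lambda>0$.
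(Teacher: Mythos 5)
Your proposal is correct and follows essentially the same route as the paper's proof: the leave-one-out application of the modified log-Sobolev inequality, the collapse of the sum to the top-two spacing via a.s.\ uniqueness of the argmax, the R\'enyi representation to split off $X_{(1)}-X_{(2)}$ as an independent $\Exp(\theta)$ variable, direct factorization for the upper tail, and the association (Chebyshev/Harris) inequality for the lower tail. Your explicit restriction to $\lambda>0$ (needed for $\EE[e^{\lambda X_{(2)}}]\le\EE[e^{\lambda X_{(1)}}]$ and for the monotonicity of $\tau$ on $[0,\infty)$) is a point the paper leaves implicit but is harmless since only $\lambda>0$ is used in the subsequent Herbst argument.
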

\begin{proof}
We apply Theorem~\ref{thm:logsobolev} with $Z = f(X_1,\ldots,X_M) =
\max_i X_i$ and $Z_i = f_i(X^{(i)}) = \max_{j \ne i} X_j$. Notice that
in this case, $Z_i = Z$ except when $X_i$ is the maximiser, in which
case $Z_i = X_{(2)}$ the second largest of the samples. This applies here since the maximiser is unique with probability $1$. Thus, Theorem~\ref{thm:logsobolev} gives
\begin{align*}
\Ent[\exp\{\lambda Z\}] &\le \sum_{i=1}^M \EE[\exp\{\lambda Z\}\tau(-\lambda(Z - Z_i))] =
  \EE \left[\exp\{\lambda Z\}\tau(-\lambda (X_{(1)} - X_{(2)}))\right]\\
& = \EE \left[\exp\{\lambda X_{(2)}\} \exp\{\lambda (X_{(1)} - X_{(2)})\} \tau(-\lambda
(X_{(1)} - X_{(2)}))\right]\\
& = \EE[\exp{\lambda X_{(2)}}] \EE[\psi(\lambda E)]
\le \EE[\exp{\lambda X_{(1)}}] \EE[\psi(\lambda E)]
\end{align*}
The first inequality is Theorem~\ref{thm:logsobolev}, while the first
equality uses the definitions of $f_i$ and the fact that $Z_i \ne Z$
for exactly one index $i$. The second equality is straightforward and
the third uses Fact~\ref{fact:renyi} to write $X_{(1)} - X_{(2)}$ as
an independent $\Exp(\theta)$ random variable, which also allows us to
split the expectation. Finally since $X_{(2)}\le X_{(1)}$ almost
surely, the final inequality follows. Multiplying both sides by
$\exp(-\lambda \mu)$, which is non-random, proves the first
inequality, since $\Ent(aX) = a\Ent(X)$.

The second inequality is similar. Set $Z = -\max_i X_i$ and $Z_i =
-\max_{j \ne i} X_j$ and using the same argument, we get
\begin{align*}
  \Ent[\exp\{-\lambda X_{(1)} \}] &\le \EE[\exp\{-\lambda X_{(1)}\}\tau(\lambda (X_{(1)} - X_{(2)}))] \\
  &= \EE\left[\exp\left\{-\lambda(E_1 + \sum_{k=2}^ME_k/k)\right\}\tau(\lambda E_1)\right]
\end{align*}
Here the inequality follows from Theorem~\ref{thm:logsobolev} and the
identity uses Fact~\ref{fact:renyi}. We want to split the expectation,
and to do so, we use Chebyshev's association principle. Observe that
$\exp(-\lambda E_1)$ is clearly non-increasing in $E_1$ and that
$\tau(\lambda E_1)$ is clearly non-decreasing in $E_1$ for $E_1 \ge 0$
($E_1>0$ a.s.). Hence, we can split the expectation to get
\begin{align*}
\Ent[\exp\{-\lambda X_{(1)} \}] \le \EE[\exp\{-\lambda X_{(1)}\}]\times \EE[\tau(\lambda E)]
\end{align*}
The second inequality follows now by multiplying both sides by
$\exp(\lambda \mu)$.
\end{proof}
\insertthmpostspacing

\insertthmprespacing
\begin{theorem}
\label{thm:max_subexponential}
Let $X_1,\ldots,X_M \sim \Exp(\theta)$ iid and define $Z = \max_i X_i$
then $Z - \EE Z$ is sub-exponential with parameters $(4/\theta^2,2/\theta)$.
\end{theorem}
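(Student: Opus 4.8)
The plan is to run the Herbst argument on the modified logarithmic Sobolev inequality established in the preceding lemma. Write $\mu = \EE Z$ and, for the upper tail, set $G(\lambda) = \log\EE[e^{\lambda(Z-\mu)}]$. A direct computation gives $\Ent[e^{\lambda(Z-\mu)}] = \EE[e^{\lambda(Z-\mu)}]\big(\lambda G'(\lambda) - G(\lambda)\big)$, so the first inequality of the lemma becomes the differential inequality $\lambda G'(\lambda) - G(\lambda) \le \EE[\psi(\lambda E)]$ with $E\sim\Exp(\theta)$; likewise, writing $\widetilde G(\lambda) = \log\EE[e^{\lambda(\mu-Z)}]$, the second inequality gives $\lambda\widetilde G'(\lambda) - \widetilde G(\lambda) \le \EE[\tau(\lambda E)]$. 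Only $\lambda>0$ is needed in both, since the upper tail of $Z-\mu$ uses the first and the lower tail uses the second with a positive argument.

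The next step is to evaluate the two right-hand sides explicitly. Using $\EE[e^{\lambda E}] = \theta/(\theta-\lambda)$ and $\EE[E e^{\lambda E}] = \theta/(\theta-\lambda)^2$ for $0<\lambda<\theta$, the functions $\psi(x) = 1+(x-1)e^x$ and $\tau(x) = e^x-x-1$ give, after simplification,
\[
\EE[\psi(\lambda E)] = \frac{\lambda^2}{(\theta-\lambda)^2}, \qquad \EE[\tau(\lambda E)] = \frac{\lambda^2}{\theta(\theta-\lambda)}.
\]
For the upper tail, divide $\lambda G'(\lambda)-G(\lambda)\le\EE[\psi(\lambda E)]$ by $\lambda^2$ and note the left side is $\tfrac{d}{d\lambda}\big(G(\lambda)/\lambda\big)$; since $G(\lambda)/\lambda\to G'(0)=\EE[Z-\mu]=0$ as $\lambda\downarrow0$, integrating on $(0,\lambda)$ yields $G(\lambda)/\lambda \le \int_0^\lambda(\theta-s)^{-2}\,ds = \lambda/(\theta(\theta-\lambda))$, i.e. $G(\lambda)\le\lambda^2/(\theta(\theta-\lambda))$. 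Restricting to $0<\lambda\le\theta/2$ gives $\theta-\lambda\ge\theta/2$ and hence $G(\lambda)\le 2\lambda^2/\theta^2$.

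For the lower tail, the identical manipulation gives $\widetilde G(\lambda)/\lambda \le \int_0^\lambda(\theta(\theta-s))^{-1}\,ds = \tfrac1\theta\log\tfrac{\theta}{\theta-\lambda}$, and using $-\log(1-u)\le u/(1-u)\le 2u$ for $0\le u\le1/2$ with $u=\lambda/\theta$, we obtain $\widetilde G(\lambda)\le 2\lambda^2/\theta^2$ for $0<\lambda\le\theta/2$. Combining, $\log\EE[e^{\lambda(Z-\mu)}]\le 2\lambda^2/\theta^2 = \tfrac12(4/\theta^2)\lambda^2$ for all $|\lambda|\le\theta/2$, which is precisely the sub-exponential bound with parameters $(\tau^2,b)=(4/\theta^2,\,2/\theta)$, as claimed.

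The main obstacle is the rigorous justification of the Herbst-style integration: one must verify that $G$ and $\widetilde G$ are finite and continuously differentiable on $[0,\theta/2]$ (which follows from finiteness of the exponential MGF up to $\theta$), that the differential inequalities hold on this range, and that the boundary term at $\lambda=0$ vanishes. The remaining pieces — the exact MGF evaluations turning the abstract log-Sobolev bound into a clean ODE, and the elementary estimate on $-\log(1-u)$ — are routine.
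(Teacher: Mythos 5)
Your proposal is correct and follows essentially the same route as the paper: the Herbst argument applied to the modified log-Sobolev inequality, with the same explicit evaluations $\EE[\psi(\lambda E)]=\lambda^2/(\theta-\lambda)^2$ and $\EE[\tau(\lambda E)]=\lambda^2/(\theta(\theta-\lambda))$ and the same integration of $\tfrac{d}{d\lambda}(G(\lambda)/\lambda)$. The only cosmetic difference is that you bound $-\log(1-u)$ via $u/(1-u)\le 2u$ where the paper uses the power-series expansion; both yield $2\lambda^2/\theta^2$ on $|\lambda|\le\theta/2$.
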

\begin{proof}
We use the logarithmic Sobolev inequality, and proceed with Herbst's
method. Unfortunately since our inequality is not in the standard
form, we must reproduce most of the argument. However, we can unify
the two tails by noticing that we currently have for centered $Y$
(e.g., $Y = X_{(1)}- \EE X_{(1)}$ or $Y = \EE X_{(1)} - X_{(1)}$),
\begin{align}
\Ent[\exp\{\lambda Y\}] \le \EE[\exp\{\lambda Y\}] f(\lambda)
\label{eq:log_sobolev}
\end{align}
for some differentiable function $f$, which involves either $\tau$ or
$\psi$ depending on the tail. We will use such an inequality to bound
the moment generating function of $Y$.

For notational convenience, define $\phi(\lambda) = \log \EE \exp\{\lambda Y\}$ and observe that
\begin{align*}
  \phi'(\lambda) = \frac{1}{\lambda}\left( \frac{\Ent[\exp\{\lambda Y\}]}{\EE \exp\{\lambda Y\}} + \log \EE \exp\{\lambda Y\} \right)
\end{align*}
Together with the inequality in Eq.~\eqref{eq:log_sobolev}, this gives
\begin{align*}
&  \lambda \phi'(\lambda) - \phi(\lambda) = \frac{\Ent[\exp\{\lambda Y\}]}{\EE \exp\{\lambda Y\}} \le f(\lambda)\\
\Leftrightarrow & \frac{\phi'(\lambda)}{\lambda}- \frac{\phi(\lambda)}{\lambda^2} \le
f(\lambda)/\lambda^2, \hspace{0.3in} \forall\;\lambda>0
\end{align*}
Observe now that the left hand side is precisely the derivative of the
function $G(\lambda) = \phi(\lambda)/\lambda$. Hence, we can integrate
both sides from $0$ to $\lambda$, we get
\begin{align*}
\frac{\phi(\lambda)}{\lambda} \le \int_0^\lambda f(t)/t^2 dt.
\end{align*}
This last step is justified in part by the fact that
$\lim_{t\rightarrow 0} \phi(t)/t = 0$ by L'Hopital's rule.  Thus we
have $\log \EE \exp\{\lambda Y\} \le \lambda \int_0^\lambda f(t)/t^2
dt$.

\textbf{The upper tail:} For the upper tail $Z - \EE Z$, we have $f(t) = \EE \psi(tE)$ where $E \sim \Exp(\theta)$ and $\psi(x) = 1 + (x-1)e^x$.
By direct calculation, we have for $t<\theta$
\begin{align*}
\EE \psi(tE) &= 1 + \EE tE \exp (tE) - \EE \exp(t E)\\
& = 1 - \frac{\theta}{\theta - t} + t \int_{0}^\infty x \exp(t x) \theta \exp(-\theta x) dx\\
& = 1 - \frac{\theta}{\theta - t} + \frac{t\theta}{(\theta - t)^2} = \frac{t^2}{(\theta -
t)^2}.
\end{align*}
Thus, we get
\begin{align*}
\log \EE \exp\{\lambda (Z - \EE Z)\} \le \lambda \int_0^\lambda \frac{1}{(\theta - t)^2} dt = \frac{\lambda^2}{\theta(\theta - \lambda)}.
\end{align*}
If $\lambda \le \theta/2$, this bound is $2\lambda^2/\theta^2$. Thus,
according to definition~\ref{defn:subexp}, $Z - \EE Z$ is
sub-exponential with parameters $(4/\theta^2, 2/\theta)$.

\textbf{The lower tail:} For the lower tail $\EE Z - Z$ we need to
control $\EE \tau(t E)$ where $E \sim \Exp(\theta)$, $\tau(x) = e^x -x
- 1$. Direct calculation, using the moment generating function of exponential random variables gives
\begin{align*}
  \EE \tau(tE) = \frac{t^2}{\theta (\theta-t)}
\end{align*}
So the integral bound is
\begin{align*}
\log \EE \exp\{\lambda(\EE Z - Z)\} &\le \lambda \int_0^\lambda \frac{1}{\lambda(\lambda - t)} = \frac{\lambda}{\theta}\log\left(\frac{\theta}{\theta-\lambda}\right)\\
& = \frac{\lambda}{\theta}\left(\sum_{i=1}^\infty (\lambda/\theta)^i/i\right)\\
& = \frac{\lambda^2}{\theta^2}\left(\sum_{i=1}^\infty (\lambda/\theta)^{i-1}/i\right)
\end{align*}
If $\lambda/\theta \le 1/2$ the series inside the paranthesis is clearly bounded by
$2$. Thus $\EE Z - Z$ is sub-exponential with parameters
$(4/\theta^2,2/\theta)$ as before.
\end{proof}

Now that we have established that the maximum is sub-Exponential, we can bound the number of evaluations for the various methods. This is the main result for this section.

\insertthmprespacing
\begin{theorem}
\label{thm:exprvN}
Let the time taken $X$ for completing an evaluation to $f$ be a random
variable that is $\Exp(\theta)$ distributed. Let $\delta \in (0,1)$
and denote $\Nsyn$ and $\Nasy$ denote the number of evaluations by
synchronous and asynchronous algorithms with time $T$. Then with probability at least $1-\delta$, for any $\alpha \in (0,1)$ there exists $T_{\alpha,\theta}$ such that
\emph{
\begin{align*}
&\Nseq \in \left(\frac{T\theta}{(1+\alpha)}-1,
  \frac{MT\theta}{(1-\alpha)}\right), \hspace{0.2in}
\Nsyn \in \left(M \left(\frac{T\theta}{h_M(1+\alpha)}-1\right),
\frac{MT\theta}{h_M(1-\alpha)}\right) \\ 
&\Nasy \in \left(M\left(\frac{T\theta}{(1+\alpha)}-1\right),
  \frac{MT\theta}{(1-\alpha)}\right)
\end{align*}
}
\end{theorem}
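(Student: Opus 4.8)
The plan is to follow the same two-stage strategy used for Theorem~\ref{thm:subgaussian_times}, replacing the sub-Gaussian concentration with the sub-exponential (Bernstein-type) concentration of Proposition~\ref{thm:subexponential}, and feeding in the two facts we have already established for maxima of exponentials: $\EE\max_i X_i = h_M/\theta$ (Lemma~\ref{lem:exponentialmax}) and the sub-exponential tail of $\max_i X_i - \EE\max_i X_i$ with parameters $(4/\theta^2, 2/\theta)$ (Theorem~\ref{thm:max_subexponential}).

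First I would prove a sub-exponential analogue of Lemma~\ref{lem:sgconc}: if a single worker completes an i.i.d.\ stream of jobs whose durations $X_j$ have mean $\mu$ and are such that $X_j - \mu$ is sub-exponential with parameters $(\sigma^2, b)$, then, writing $S_n = \sum_{j\le n} X_j$ and letting $N$ be the number of jobs finished within time $T$, with probability at least $1-\delta$, for every $\alpha\in(0,1)$ there is a threshold $T_\alpha$ so that $N \in \big(\tfrac{T}{\mu(1+\alpha)}-1,\ \tfrac{T}{\mu(1-\alpha)}\big)$ for all $T \ge T_\alpha$. The proof mirrors Lemma~\ref{lem:sgconc} line for line: apply Proposition~\ref{thm:subexponential} with $t = \log(n^2\pi^2/(3\delta))$ to get $\PP(|S_n - n\mu| \ge \epsilon_n) \le 6\delta/(n^2\pi^2)$ with $\epsilon_n = \sqrt{2n\sigma^2 t} + b t$; union bound over $n\ge 1$ to obtain the event $\{\forall n\ge 1,\ |S_n - n\mu|\le \epsilon_n\}$; sandwich $N\mu - \epsilon_N \le S_N \le T \le S_{N+1} \le (N+1)\mu + \epsilon_{N+1}$; and note $\epsilon_n \asymp \sqrt{n\log n} + \log n$ is sublinear, so the requirement $\epsilon_N \le N\alpha\mu$ holds once $N$—and hence, since $N$ grows with $T$, also $T$—is large enough.

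Then I would instantiate this lemma for the three settings exactly as in the proof of Theorem~\ref{thm:subgaussian_times}. For the sequential algorithm, apply it to one worker's stream $X_k\sim\Exp(\theta)$: its mean is $1/\theta$ and $X_k - 1/\theta$ is sub-exponential with parameters $(4/\theta^2, 2/\theta)$ (this is Theorem~\ref{thm:max_subexponential} with $M=1$, since then $Z = X_1$ and $h_1 = 1$), giving $\Nseq \in \big(\tfrac{T\theta}{1+\alpha}-1,\ \tfrac{T\theta}{1-\alpha}\big)$, which is contained in (indeed stronger than) the stated interval. For the synchronous algorithm, the $k$-th batch takes time $Y_k = \max_{i\le M} X_{ki}$ with $\EE Y_k = h_M/\theta$ and, by Theorem~\ref{thm:max_subexponential}, $Y_k - \EE Y_k$ sub-exponential with parameters $(4/\theta^2, 2/\theta)$ that do not grow with $M$; applying the lemma to $\{Y_k\}$ bounds the number of batches $K$, and $\Nsyn = MK$ lands in the claimed interval. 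For the asynchronous algorithm, each of the $M$ workers independently processes a stream with duration law $\Exp(\theta)$; apply the lemma to each worker with $\delta \leftarrow \delta/M$, take a union bound over workers, and sum the $M$ per-worker counts, each lying in $\big(\tfrac{T\theta}{1+\alpha}-1,\ \tfrac{T\theta}{1-\alpha}\big)$.

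The main obstacle—understanding the distribution of $\max_i X_i$—has already been dispatched by Theorem~\ref{thm:max_subexponential} and Lemma~\ref{lem:exponentialmax}, so the remaining work is essentially bookkeeping. The only genuinely new point relative to the sub-Gaussian case is the additive Bernstein term $b t \asymp (2/\theta)\log(n^2/\delta)$ in $\epsilon_n$; one must check it is still $o(n)$ so that the ``for $T$ large enough'' argument of Lemma~\ref{lem:sgconc} goes through verbatim, and observe that all three applications use the identical parameter pair $(4/\theta^2, 2/\theta)$, so the high-probability event is uniform and the union bound over workers in the asynchronous case costs only the usual $\delta/M$ inflation.
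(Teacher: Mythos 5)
Your proposal is correct and follows essentially the same route as the paper: the paper likewise reruns the argument of Lemma~\ref{lem:sgconc} with Bernstein's inequality (Proposition~\ref{thm:subexponential}) in place of the sub-Gaussian tail bound, using $\EE Y_k = h_M/\theta$ from Lemma~\ref{lem:exponentialmax} and the $(4/\theta^2,2/\theta)$ sub-exponentiality from Theorem~\ref{thm:max_subexponential}, then instantiates it for the batch maxima (synchronous), a single worker (sequential), and all $M$ workers with $\delta/M$ and a union bound (asynchronous). Your only departure is packaging the concentration step as an explicit standalone lemma rather than invoking it informally, which is a presentational difference only.
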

\begin{proof}
  In the synchronous setting, the $k$\ssth batch issues $M$ jobs with
  lengths $(X_{k1},\ldots,X_{kM})$ and the batch ends after $Y_k =
  \max_i X_{ki}$. Since the sequence of random variables
  $\{Y_k\}_{k\ge1}$ are all iid and sub-exponential with parameters
  $(4/\theta^2,2/\theta)$, in a similar way to the proof of
  Lemma~\ref{lem:sgconc}, with $S_n = \sum_{k=1}^n Y_k$ we get that
  \begin{align*}
    \PP\left(\exists n;\; |S_n - \EE S_n| \ge \underbrace{\sqrt{8n\theta^{-2} \log(n^2\pi^2/(3\delta))} + \frac{2}{\theta}\log(n^2\pi^2/(3\delta))}_{\triangleq \epsilon_n}\right) \le \delta
  \end{align*}
  This follows from Bernstein's inequality
  (Proposition~\ref{thm:subexponential}) and the union bound. As in
  Lemma~\ref{lem:sgconc} this means that:
  \begin{align*}
    \frac{Nh_M}{\theta} - \epsilon_N \le S_N \le T \le S_{N+1}\le \frac{(N+1)h_M}{\theta} + \epsilon_{N+1}.
  \end{align*}
  Here we also used the fact that $\EE Y_k = h_M/\theta$ from
  Lemma~\ref{lem:exponentialmax}. Now assuming there exists
  $T_{\alpha,\delta}$ such that for all $T \ge T_{\alpha,\delta}$, we
  have $\epsilon_N \le Nh_M\alpha/\theta$, we get
  \begin{align*}
    \frac{Nh_M}{\theta}(1-\alpha) \le T \le \frac{(N+1)h_m}{\theta}(1+\alpha).
  \end{align*}
  The existence of $T_{\alpha,\delta}$ is based on the same argument as
  in Lemma~\ref{lem:sgconc}. Re-arranging these inequalities, which
  gives a bound on the number of batches completed, leads to the bounds
  on the number of evaluations for the synchronous case. 

  Applying the same argument to a single worker on the sequence
  $\{X_k\}_{k\geq 1}$, we get
  \begin{align*}
    \frac{\Nseq}{\theta}(1-\alpha) \le T \le \frac{\Nseq+1}{\theta}(1+\alpha).
  \end{align*}
  Repeating this argument for all $M$ workers with $\delta\leftarrow \delta/M$
  and then taking a union bound yields the result for $\Nasy$.
\end{proof}
\insertthmpostspacing

\subsection{Putting it altogether}
\label{app:tsrtv}

Finally, we put the results in Theorems~\ref{thm:tsparallelmain},~\ref{thm:sgrvN}
and~\ref{thm:exprvN} together to
obtain the following result.
This is a formal version of Theorem~\ref{thm:tstime} in the main text.

\insertthmprespacing
\begin{theorem}
\label{thm:tsparalleltime}
Let $\func\sim\GP(\zero,\kernel)$ where $\kernel:\Xcal^2\rightarrow\RR$ satisfies
Assumption~\ref{asm:kernelAssumption} and $\kernel(x,x')\leq 1$.
Then for all $\alpha > 0$, the Bayes simple regret for 
\emph{\seqts}, \emph{\synts} and \emph{\asyts}, satisfies the following for sufficiently
large $T$.
\emph{
\begin{align*}
&\text{\seqts:}\hspace{0.4in}
\bsrtT\;\leq\; \frac{C'_1}{\nseq} + \sqrt{\frac{C_2\beta_{\nseq}\IGnn{\nseq}}{\nseq}},
\hspace{0.59in}\nseq = \frac{T}{\theta(1+\alpha)} - 1 \\
&\text{\synts:}\hspace{0.4in}
\bsrtT\;\leq\; \frac{C'_1}{\nsyn} + \sqrt{\frac{C_2\xiM\beta_{\nsyn}\IGnn{\nsyn}}{\nsyn}},
\hspace{0.4in}\nsyn = M\bigg[\frac{T}{\thetaM(1+\alpha)} - 1\bigg] \\
&\text{\asyts:}\hspace{0.4in}
\bsrtT\;\leq\; \frac{C'_1}{\nasy} + \sqrt{\frac{C_2\xiM\beta_{\nasy}\IGnn{\nasy}}{\nasy}},
\hspace{0.43in}\nasy = M\bigg[\frac{T}{\theta(1+\alpha)} - 1\bigg]
\end{align*}
}
Here, $\theta,\thetaM$ are defined as follows for the uniform, half-normal and
exponential cases.
\emph{
\begin{align*}
&\unif(a,b):\hspace{0.4in} \theta = \frac{a+b}{2}, \hspace{0.23in}
  \thetaM = \frac{a+bM}{M+1} \\
&\HNcal(\zeta^2):\hspace{0.47in} \theta = \frac{\zeta\sqrt{2}}{\sqrt{\pi}},
  \hspace{0.28in} \thetaM \in \zeta \cdot \bigTheta(\log(M)) \\
&\Exp(\lambda):\hspace{0.53in}
  \theta = \frac{1}{\lambda}, \hspace{0.46in} \thetaM = \frac{h_M}{\lambda}
\end{align*}
}
Further, $\IGn$ is the MIG in Definition~\ref{def:infGain}, $\betan$ is
as defined in~\eqref{eqn:betan},
$\xiM$ is from~\eqref{eqn:xiMbound}, and $C_1=\pi^2/6 + \sqrt{2\pi}/12 + 1$,
$\;C_2=2/\log(1+\eta^{-2})$ are constants.
\begin{proof}
We will prove the result for \asytss as the others are obtained by an identical
argument.
Let $\Vcal$ denote the event that 
$N\geq M\big[\frac{T}{\theta(1+\alpha)} - 1\big]$. 
Theorems~\ref{thm:sgrvN} and~\ref{thm:exprvN} give us control on this event with
probability at least $1-\delta$.
We will choose $\delta = \frac{1}{2\supgp \nasy}$ where $\supgp$ is the expected
maximum of the GP in Lemma~\ref{lem:supgp}.
Since the randomness in the evaluation times are independent of the prior, noise
and the algorithm, we can decompose $\bsrtT$ as follows and use the result in
Theorem~\ref{thm:tsparallelmain} for $\bsrn$.
\begin{align*}
\bsrtT &\leq \EE[\bsrii{N}|\Vcal]\PP(\Vcal) + \EE[\bsrii{N}|\Vcal^c]\PP(\Vcal^c) \\
  &\leq \bsrii{\nasy} \cdot 1 \,+\, 2\supgp \delta
\end{align*}
Here we have used the definition of the Bayes simple regret with time
in~\eqref{eqn:srtDefn}
which guarantees that it is never worse than $\sup_x |\func(\xopt) - \func(x)| \leq
2\supgp$.
The theorem follows by plugging in values for $\bsrn$ and $\delta$.
The ``sufficiently large $T$'' requirement is because Theorems~\ref{thm:sgrvN}
and~\ref{thm:exprvN} hold only for $T>\Tad = T_{\alpha,\frac{1}{2\supgp \nasy}}$.
Since the dependencies of $\delta$ on $\Tad$ is polylogarithmic,
and as $\nasy$ is growing linearly with $T$, the above condition
is equivalent to $T \gtrsim \polylog(T)$ which is achievable for large enough $T$.
\end{proof}
\end{theorem}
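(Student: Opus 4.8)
The plan is to derive these time-budget bounds by composing two ingredients that are already established: the regret bound as a function of the number of \emph{completed} evaluations (Theorem~\ref{thm:tsparallelmain} for \synts and \asyts, Corollary~\ref{cor:tsseq} for \seqts), and the high-probability lower bounds on the random number $N=N(T)$ of evaluations completed within time $T$ (Theorem~\ref{thm:sgrvN} in the uniform and half-normal cases, Theorem~\ref{thm:exprvN} in the exponential case). I would spell out the argument for \asyts; \seqts and \synts follow verbatim after substituting the matching regret bound and the matching value of $N$.

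First I would fix $\delta\in(0,1)$ and set $\Vcal=\{N\geq\nasy\}$ with $\nasy=M[\tfrac{T}{\theta(1+\alpha)}-1]$, where $\theta$ is the relevant mean evaluation time; Theorems~\ref{thm:sgrvN} and~\ref{thm:exprvN} give $\PP(\Vcal)\geq 1-\delta$ for all $T\geq\Tad=T_{\alpha,\delta}$. The key structural fact is that the evaluation times are drawn independently of $\func$, of the observation noise, and of the algorithm's randomness, and that $N$ is a measurable function of the evaluation-time sequence alone (the completion time is modelled as independent of the query point), hence $N$ is independent of $\func$, the noise, and the algorithm. Writing $\bsrii{n}$ for the Bayes simple regret after $n$ completed evaluations as bounded in Theorem~\ref{thm:tsparallelmain}, independence gives $\EE[\srtT\mid N=n]=\bsrii{n}$ for every $n\geq 1$, and $\bsrii{n}$ is non-increasing in $n$ because completing a further evaluation only enlarges the set over which $\max_j\func(\xj)$ is taken (regardless of how stale the posteriors were that produced those queries). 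Therefore, since $N\geq\nasy\geq 1$ on $\Vcal$,
\[
\EE[\srtT\,\indfone(\Vcal)] \,=\, \sum_{n\geq\nasy}\bsrii{n}\,\PP(N=n) \,\leq\, \bsrii{\nasy}\,\PP(\Vcal) \,\leq\, \bsrii{\nasy},
\]
while $\srtT\leq 2\supgp$ deterministically on $\Vcal^c$ by~\eqref{eqn:srtDefn} and Lemma~\ref{lem:supgp}, so $\bsrtT\leq\bsrii{\nasy}+2\supgp\,\PP(\Vcal^c)\leq\bsrii{\nasy}+2\supgp\,\delta$.

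Next I would substitute Theorem~\ref{thm:tsparallelmain}, $\bsrii{\nasy}\leq\tfrac{C_1}{\nasy}+\sqrt{C_2\xiM\beta_{\nasy}\IGnn{\nasy}/\nasy}$, and choose $\delta=\tfrac{1}{2\supgp\nasy}$, so that the residual $2\supgp\delta=1/\nasy$ is absorbed into the leading term with $C_1'=C_1+1$; this is exactly the claimed bound for \asyts. For \seqts one uses Corollary~\ref{cor:tsseq} (which omits $\xiM$) with $\nseq=\tfrac{T}{\theta(1+\alpha)}-1$, and for \synts one uses Theorem~\ref{thm:tsparallelmain} with $\nsyn=M[\tfrac{T}{\thetaM(1+\alpha)}-1]$. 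The explicit forms of $\theta$ and $\thetaM$ for the uniform, half-normal, and exponential distributions are read off from Lemmas~\ref{lem:uniformmax},~\ref{lem:halfnormalmax},~\ref{lem:exponentialmax} and plugged into Theorems~\ref{thm:sgrvN} and~\ref{thm:exprvN}; the ordering $\nseq\leq\nsyn\leq\nasy$ (asserted in the informal statement) follows from $\theta\leq\thetaM\leq M\theta$, where $\thetaM=\EE[\max_{i\le M}X_i]$ and the two inequalities are, respectively, that a maximum dominates one copy and is dominated by the sum of $M$ copies.

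The one point needing care — not really an obstacle, since all the analytic content lives in the cited results — is the self-referential ``sufficiently large $T$'' clause: the choice $\delta=1/(2\supgp\nasy)$ depends on $T$ through $\nasy$, while the admissibility threshold $\Tad=T_{\alpha,\delta}$ depends on $\delta$. Tracing the $\log(n^2\pi^2/(3\delta))$ factors through the proofs of Lemma~\ref{lem:sgconc} and Theorem~\ref{thm:exprvN} shows that $T_{\alpha,\delta}$ depends on $\delta$ only polylogarithmically, whereas $\nasy$ grows linearly in $T$; hence $T\geq\Tad$ collapses to $T\gtrsim\polylog(T)$, which holds for all large enough $T$. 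I expect this bookkeeping, together with the verification that $N$ is a function of the evaluation times only and hence independent of the other sources of randomness, to be the most delicate part of what is otherwise a mechanical assembly of Theorems~\ref{thm:tsparallelmain},~\ref{thm:sgrvN}, and~\ref{thm:exprvN}.
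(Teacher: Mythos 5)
Your proposal is correct and follows essentially the same route as the paper's proof: condition on the event $\Vcal=\{N\geq\nasy\}$ controlled by Theorems~\ref{thm:sgrvN} and~\ref{thm:exprvN}, bound the regret on $\Vcal$ by $\bsrii{\nasy}$ via Theorem~\ref{thm:tsparallelmain} and on $\Vcal^c$ by $2\supgp$, choose $\delta=1/(2\supgp\nasy)$ to absorb the residual into $C_1'=C_1+1$, and resolve the self-referential ``sufficiently large $T$'' condition by noting $\Tad$ depends on $\delta$ only polylogarithmically while $\nasy$ grows linearly in $T$. Your explicit justification of the monotonicity of $\bsrii{n}$ and of the independence of $N$ from the other sources of randomness only makes precise steps the paper leaves implicit.
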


\section{Addendum to Experiments}
\label{app:expadd}

\subsection{Implementation Details for BO methods}

We describe some implementation details for all BO methods below.

\begin{itemize}[leftmargin=0.25in]
\item \textbf{Domain:} Given a problem with an arbitrary $d$ dimensional domain, we map it
to $[0,1]^d$ by linearly transforming each coordinate.
\item \textbf{Initialisation:} As explained in the main text, all BO methods were
initialised by uniformly randomly picking $\ninit$ points in the domain.
To facilitate a fair comparison with the random strategies, we also afford them with
the same initialisation, and begin our comparisons in the figures after the
initialisation.
\item \textbf{GP kernel and other hyperarameters:}
The GP hyper-parameters are first learned by maximising the marginal
likelihood~\citep{rasmussen06gps} after the initialisation phase 
 and then updated every $25$ iterations.
For all BO methods, we use a SE kernel and tune the bandwidth for each dimension,
the scale parameter of the kernel and the GP noise variance $(\eta^2)$.
The mean of the GP is set to be the median of all observations.
\item \textbf{UCB methods:} Depending on the methods used, the UCB criterion typically
takes a form $\mu + \betajh\sigma$ where $\mu,\sigma$ are the posterior mean and standard
deviation of the GP. $\betajh$ is a parameter that controls the exploration exploitation
trade-off in UCB methods. 
Following recommendations in~\citep{kandasamy15addBO}, we set it
$\betaj = 0.2d\log(2j + 1)$.
\item \textbf{Selection of $\xj$:}
In all BO methods, the selection of $\xj$ typically takes the form $\xj=\argmax_x
\varphi_j(x)$ where $\varphi_j$ is a function of the GP posterior at step $j$.
$\varphi_j$ is usually called the acquisition in the BO literature.
We maximise $\varphi_j$ using the dividing rectangles algorithm~\citep{jones93direct}.

\end{itemize}

\subsection{Synthetic Experiments}

\textbf{Additional Experiments:}
In Figures~\ref{fig:toyappone} and~\ref{fig:toyapptwo} we
present results on additional synthetic experiments and also repeat those in the main
text in larger figures.
The last panel of Figure~\ref{fig:toyapptwo} compares \seqts, \synts, and \asytss on
the Park1 function.

We describe the construction of the synthetic experiments below.
All the design choices were made arbitrarily.

\textbf{Construction of benchmarks:}
To construct our test functions, we start with the following benchmarks for global
optimisation commonly used in the literature:
Branin $(d=2)$,
Currin-exponential $(d=2)$,
Hartmann3 $(d=3)$,
Park1 $(d=4)$,
Park2 $(d=4)$,
and
Hartmann6 $(d=6)$.
The descriptions of these functions are available in, for e.g.~\citep{kandasamy16mfbo}.
To construct the high dimensional variants, we repeat the same function 
by cycling through different groups of coordinates and add them up.
For e.g. the Hartmann12 function was constructed as
$f(x_{1:12}) = g(x_{1:6}) + g(x_{7:12})$ where $g$ is the Hartmann6 function.
Similarly, for the Park2-16 function we used the Park2-function $4$ times,
for Hartmann18, we used Hartmann6 thrice, and
for CurrinExp-14 we used the Currin-exponential function $7$ times.

\textbf{Noise:}
To reflect the bandit setting, we added Gaussian noise with standard deviation $\eta$
in our experiments.
We used $\eta=0.2$ for CurrinExp, Branin, Park1, Park2, Hartmann3, and Hartmann6;
$\eta=1$ for Park2, Park2-16, Hartmann12, CurrinExp-14, and Hartmann18.
The two choices were to reflect the ``scale'' of variability of the function values
themselves on each test problem.

\textbf{Time distributions:}
The time distributions are indicated on the top of each figure.
In all cases, the time distributions were constructed so that the expected time
to complete one evaluation is $1$ time unit.
Therefore, for e.g. in the Hartmann6 problem, an asynchronous version would
use roughly $12\times 30 = 360$ evaluations while a synchronous version
would use roughly $\frac{12\times 30}{\log(8)} \approx 173$ evaluations.

\subsection{Cifar-10 Experiment}

In the Cifar-10 experiment we use a $6$ layer convolutional neural network.
The first $5$ layers use convolutional filters while the last layer is a fully connected
layer.
We use skip connections~\citep{he2016deep} between the first and third layers and then
the third and fifth layers;
when doing so, instead of just using an indentity transformation $\phi(x) = x$,
we use a linear transformation $\phi(x) = Wx$ as the number of filters could be different
at the beginning and end of a skip connection.
The weights of $W$ are also learned via back propagation as part of the training
procedure.
This modification to the Resnet was necessary in our set up as we are tuning the number of
filters at each layer.

The following are ranges for the number of evaluations for each method over
$9$ experiments: \\
synchronous:
\synbucb: 56 - 68,
\synts:  56 - 68. \\
asynchronous: 
\asyrand: 93 - 105,
\asyei: 83 - 92,
\asyhucb: 85 - 92,
\asyts: 80 - 88.

\newpage

\insertFigToyAppOne
\insertFigToyAppTwo

\end{document}